\theoremstyle{plain}
\newtheorem{theorem}{Theorem}
\newtheorem{lemma}{Lemma}
\newtheorem{corollary}{Corollary}
\theoremstyle{definition}
\newtheorem{definition}{Definition}
\newtheorem{problem}{Problem}
\theoremstyle{remark} 
\newtheorem{remark}[theorem]{Remark}
\DeclareMathOperator*{\argmin}{arg\,min}
\definecolor{greyC}{RGB}{180,180,180}
\definecolor{greyL}{RGB}{235,235,235}
\definecolor{citeColor}{RGB}{0,20,115}
\title{Learning to Augment Distributions for Out-of-Distribution Detection}
\author{%
Qizhou Wang$^{1}$\thanks{Equal contributions.} 
\quad Zhen Fang$^{2}$\footnotemark[1] \quad Yonggang Zhang$^{1}$ \quad Feng Liu$^{3}$ \quad {Yixuan Li$^{4}$ \quad Bo Han$^{1}\thanks{Correspondence to Bo Han (bhanml@comp.hkbu.edu.hk).}$ } \\
  $^1$Department of Computer Science, Hong Kong Baptist University \\
  $^2$Australian Artificial Intelligence Institute, University of Technology Sydney \\
  $^3$School of Computing and Information Systems, The University of Melbourne \\
  $^4$Department of Computer Sciences, University of Wisconsin-Madison \\ 
  \textnormal{\{csqzwang, csygzhang, bhanml\}@comp.hkbu.edu.hk} \\  \textnormal{zhen.fang@uts.edu.au} \quad \textnormal{fengliu.ml@gmail.com} \quad \textnormal{sharonli@cs.wisc.edu}
}
\begin{document}

\maketitle

\begin{abstract}
Open-world classification systems should discern out-of-distribution (OOD) data whose labels deviate from those of in-distribution (ID) cases, motivating recent studies in OOD detection. Advanced works, despite their promising progress, may still fail in the open world, owing to the lack of knowledge about unseen OOD data in advance. Although one can access auxiliary OOD data (distinct from unseen ones) for model training, it remains to analyze how such auxiliary data will work in the open world. 
To this end, we delve into such a problem from a learning theory perspective, finding that the distribution discrepancy between the auxiliary and the unseen real OOD data is the key to affecting the open-world detection performance. Accordingly, we propose \emph{Distributional-Augmented OOD Learning} (DAL), alleviating the OOD distribution discrepancy by crafting an \emph{OOD distribution set} that contains all distributions in a Wasserstein ball centered on the auxiliary OOD distribution. We justify that the predictor trained over the worst OOD data in the ball can shrink the OOD distribution discrepancy, thus improving the open-world detection performance given only the auxiliary OOD data. We conduct extensive evaluations across representative OOD detection setups, demonstrating the superiority of our DAL over its advanced counterparts.  The code is publicly available at: \url{https://github.com/tmlr-group/DAL}.
\end{abstract}

\section{Introduction}

Deep learning in the open world often encounters {out-of-distribution} (OOD) data of which the label space is disjoint with that of the {in-distribution} (ID) cases~\citep{hendrycks2016baseline,fang2022learnable}. It leads to the well-known {OOD detection} problem, where the predictor should make accurate predictions for ID data and detect anomalies from OOD cases~\citep{bulusu2020anomalous,yang2021generalized}. Nowadays, OOD detection has attracted intensive attention in reliable machine learning due to its integral role in safety-critical applications~\citep{cao2020benchmark,shen2021towards}.

OOD detection remains challenging since predictors can make over-confidence predictions for OOD data~\citep{HendrycksMD19}, motivating recent studies towards effective OOD detection. Therein, {outlier exposure}~\citep{HendrycksMD19,MingFL22} is among the most potent ones, learning from \emph{auxiliary OOD data} to discern ID and OOD patterns. However, due to the openness of the OOD task objective~\citep{wang2023out}, auxiliary OOD data can arbitrarily differ from the (unseen) real OOD data in the open world. So, to formally understand their consequences, we model the difference

\begin{wrapfigure}{R}{0.6\linewidth}
    \centering
    \includegraphics[width=.95\linewidth]{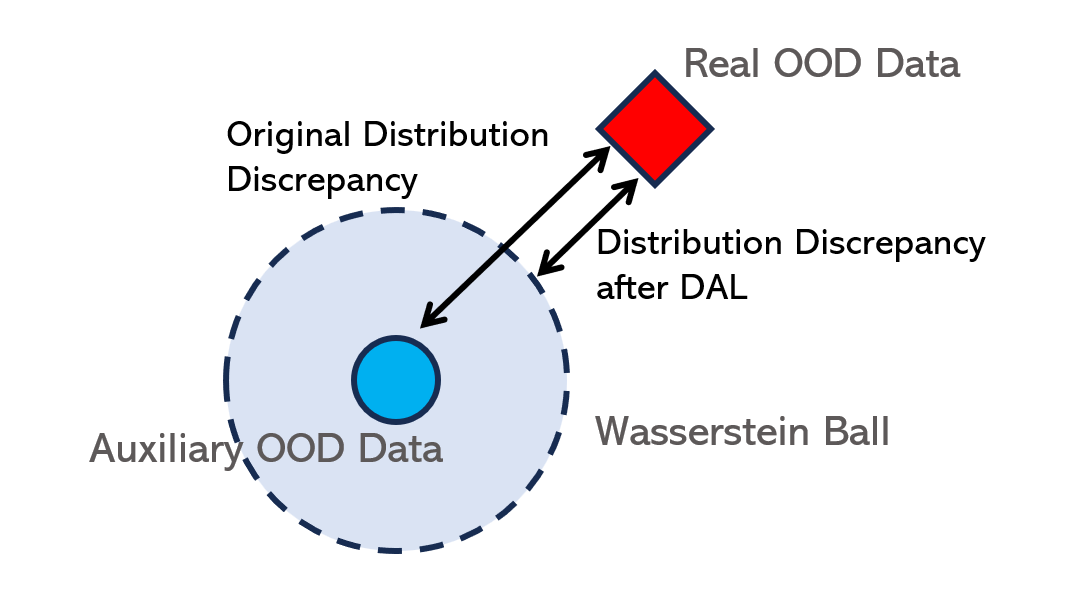}
    % \vspace{-1.5em}
    \vspace{1.5pt}
    \caption{A heuristic illustration for our DAL. A large distribution discrepancy between the auxiliary and the unseen OOD data will hurt the real detection effectiveness. However, by ensuring uniformly well performance inside the Wasserstein ball, we can mitigate the distribution discrepancy and thus improve the detection power in the open world. } \label{fig: motivation}
\end{wrapfigure}

between auxiliary and real OOD data by their distribution discrepancy, measured by the Wasserstein distance~\citep{Villani2003TopicsIO,Villani2008OptimalTO}. Then, we reveal the negative impacts of such OOD distribution discrepancy on the real detection power, with a larger distribution discrepancy indicating a lower performance on real OOD data, cf., Eq.~\eqref{eq: motivate}.

The OOD distribution discrepancy threatens the open-world detection performance for outlier exposure. Therefore, we raise a natural question in \emph{how to alleviate such an OOD distribution discrepancy}. Hence, this paper establishes a promising learning framework named \emph{Distributional-Augmented OOD Learning} (DAL). Therein, we augment the auxiliary OOD distribution by crafting an \emph{OOD distribution set} containing all distributions in a Wasserstein ball~\citep{Villani2003TopicsIO,Villani2008OptimalTO}, centered on the auxiliary OOD distribution. Then, by making the predictor learn from the worst OOD distribution in the set, cf., Eq.~\eqref{Eq::Objective}, one can alleviate the negative impacts of the distribution discrepancy. Moreover, our proposed framework enjoys the learning guarantees towards the expected risk with respect to the real OOD distribution, making OOD detection stay effective when facing unseen data (cf., Theorem~\ref{T3}). Figure~\ref{fig: motivation} provides a conceptual explanation: learning from the worst OOD distribution ensures the uniformly well performance inside the Wasserstein ball, enlarging the influence of the auxiliary OOD distribution. Thus, one can shrink the OOD distribution discrepancy between the auxiliary and the real OOD data and improve OOD detection.

In realization, the primal learning objective in Eq.~\eqref{Eq::Objective} is generally intractable due to the infinite-dimensional optimization for the worst OOD distribution search. Instead, we adopt the dual form with respect to the original learning problem (cf., Theorem~\ref{T-dual}), transforming it into a tractable problem of the worst OOD data search in a finite-dimensional space. Furthermore, following~\cite{du2022vos,mehra2022certifying}, the data search procedure is conducted in the embedding space, which can benefit the open-world performance of OOD detection with decent costs of additional computation.

We conduct extensive experiments over representative OOD detection setups, revealing the open-world performance of our method toward effective OOD detection. For example, our DAL reduces the average FPR95 by {1.99 to 13.46} on CIFAR benchmarks compared with the conventional outlier exposure~\citep{HendrycksMD19}. Overall, we summarize our contributions into three folds:
\begin{itemize}
    \item We measure the difference between the auxiliary and the real OOD data by the Wasserstein distance, and establish an effective learning framework, named DAL, to mitigate the OOD distribution discrepancy issue. We further guarantee our performance with respect to unseen real OOD data via Theorem~\ref{T3}, which is new to previous works. 
    
    \item DAL leads to a practical method in Algorithm~\ref{alg: DOE}, learning from the worst cases in the Wasserstein ball to improve the open-world detection performance. Overall, our method solves the dual problem, which performs the worst-case search in the embedding space, which is simple to compute yet effective in OOD detection.
    
    \item We conduct extensive experiments in Section~\ref{sec: experiment} to evaluate our effectiveness, ranging from the well-known CIFAR benchmarks to the challenging ImageNet settings. The empirical results comprehensively demonstrate our superiority over advanced counterparts, and the improvement is mainly attributed to our distributional-augmented learning framework.  
\end{itemize}

A detailed overview of existing OOD detection methods and theories can be found in
Appendix~\ref{app: related works}, and a summary of the important notations can be found in Appendix~\ref{app: notations}.

\section{Outlier Exposure} \label{sec: Setup}

%According to \citet{fang2022learnable}, 
Let $\mathcal{X}$ denote the feature space and $\mathcal{Y}=\{1,\ldots, C\}$ denote the label space with respect to the ID distribution. We consider the ID distribution $D_{X_{\rm I}Y_{\rm I}}$, a joint distribution defined over 
$\mathcal{X}\times \mathcal{Y}$, where $X_{\rm I}$ and ${Y}_{\rm I}$ are random variables whose outputs are from spaces $\mathcal{X}$ and $\mathcal{Y}$.  We also have an OOD joint distribution  $D_{X_{\rm O}Y_{\rm O}}$,  where $X_{\rm O}$ is a random variable from $\mathcal{X}$, but $Y_{\rm O}$ is a random variable whose outputs do not belong to $\mathcal{Y}$, i.e., $Y_{\rm O}\notin \mathcal{Y}$ \citep{fang2022learnable}. %Our goal is find the proper model $f = h \circ g:\mathcal{X}\rightarrow\mathbb{R}^C$ parameterized by $\mathcal{W}$, where $g$ is the feature extractor and $h$ is the classifier. 

The classical OOD detection \citep{hendrycks2016baseline,yang2021generalized} typically considers an open-world setting, where the real OOD data drawn from  $D_{X_{\rm O}Y_{\rm O}}$ are unseen during training. Recently, \citet{fang2022learnable}
have provided several \textit{strong} conditions necessary to ensure the success of the classical OOD setting. Furthermore, to increase the possibility of success for OOD detection and weaken the strong conditions proposed by \citet{fang2022learnable}, advanced works~\citep{HendrycksMD19,chen2021atom} introduce a promising approach named \emph{outlier exposure}, where a set of auxiliary OOD data is employed as a surrogate of real OOD data. Here, we provide a formal definition.

% To increase the possibility of success for OOD detection, researchers~\cite{HendrycksMD19,chen2021robustifying} have considered a novel setting called outlier exposure, where the {auxiliary} OOD data are available during the training. Here, we provide a formal definition. 

\begin{problem}[OOD Detection with Outlier Exposure]\label{P1}
Let $D_{X_{\rm I}Y_{\rm I}}$, $D_{X_{\rm O}}$, and $D_{X_{\rm A}}$ be the ID joint distribution, the OOD distribution, and the auxiliary OOD distribution, respectively. Given the sets of samples called the ID and the auxiliary OOD data, namely,
\begin{equation*}
\begin{split}
S &= \{(\mathbf{x}_{\rm I}^1,y_{\rm I}^1),...,(\mathbf{x}_{\rm I}^n,y_{\rm I}^n)\}\sim D^n_{X_{\rm I}Y_{\rm I}}, ~i.i.d.,
~~~~~
T = \{\mathbf{x}_{\rm A}^1,...,\mathbf{x}_{\rm A}^m\}\sim D^m_{X_{\rm A}}, ~i.i.d.,
\end{split}
\end{equation*}
outlier exposure trains a predictor $\mathbf{f}$ by using the training data $S$ and $T$, such that for any test data $\mathbf{x}$:
1) if $\mathbf{x}$ is an observation from $D_{X_{\rm I}}$, the predictor $\mathbf{f}$ can classify $
\mathbf{x}$ into its correct ID label; otherwise
2) if $\mathbf{x}$ is an observation from $D_{X_{\rm O}}$, the predictor $\mathbf{f}$ can detect $
\mathbf{x}$ as an OOD case.
\end{problem}

\textbf{OOD Scoring.} Many existing methods detect OOD data by using various score-based strategies \citep{hendrycks2016baseline,lee2018simple,liu2020energy,SunM0L22}. In general, given a  model $\mathbf{f}: \mathcal{X}\rightarrow \mathbb{R}^C$ and a scoring function $s(\cdot ;\mathbf{f}):\mathcal{X} \rightarrow \mathbb{R}$, the OOD detector $g_{\lambda}$ is given by:
\begin{equation*}
  g_{\lambda}(\mathbf{x}) = {\rm ID},~\textnormal{ if}~s(\mathbf{x};\mathbf{f})\geq \lambda;~\text{otherwise}, ~
    g_{\lambda}(\mathbf{x}) = {\rm OOD},
\end{equation*}
%\begin{equation*}
 % g_{\lambda}(\mathbf{x}) = \left \{
 % \begin{aligned}
%   &~~~~~~~~~{\rm ID},~~\textnormal{ if}~s(\mathbf{x};\mathbf{f})\geq \lambda\\ 
%   & ~~~~ {\rm OOD},~~\textnormal{ if}~s(\mathbf{x};\mathbf{f})< \lambda
 %   \end{aligned}
%    \right.~,
%\end{equation*}
where $\lambda$ is a given threshold. For example, as a well-known baseline scoring function, the maximum softmax prediction (MSP)~\citep{hendrycks2016baseline} is given by:
\begin{equation}
    s_\text{MSP}(\mathbf{x};\mathbf{f}) = \max_{k\in \mathcal{Y}}~\texttt{softmax}_k~\mathbf{f} (\mathbf{x}),  \label{eq: softmax score}
\end{equation}
with $\texttt{softmax}_k (\cdot)$ denoting the $k$-th dimension of the softmax output.

\textbf{Model and Risks.} We denote $\mathbf{f}_\mathbf{w}:\mathcal{X}\rightarrow \mathbb{R}^C$ the predictor with parameters $\mathbf{w}\in\mathcal{W}$, with $\mathcal{W}$ the parameter space. We consider the loss functions $\ell$ and $\ell_{\rm OE}$ w.r.t. the ID and the OOD cases, respectively. Then, the expected and the empirical \emph{ID risks} of the model $\mathbf{f}_{\mathbf{w}}$ can be written as:
\begin{equation*}
R_{{\rm I}} (\mathbf{w}) = \mathbb{E}_{(\mathbf{x},y)\sim D_{X_{\rm I}Y_{\rm I}}} \ell(\mathbf{f}_{\mathbf{w}}; \mathbf{x},y)~~\text{and}~~\widehat{R}_{{\rm I}} (\mathbf{w}) = \frac{1}{n}\sum_{i=1}^n\ell(\mathbf{f}_{\mathbf{w}}; \mathbf{x}_{\rm I}^i,y_{\rm I}^i).
\end{equation*}
The expected and the empirical \emph{auxiliary OOD risks} are then given by
\begin{equation*}
R_{{\rm A}} (\mathbf{w}) = \mathbb{E}_{\mathbf{x}\sim D_{X_{\rm A}}} \ell_{\rm OE}(\mathbf{f}_{\mathbf{w}}; \mathbf{x})~~\text{and}~~\widehat{R}_{{\rm A}} (\mathbf{w}) = \frac{1}{m} \sum_{i=1}^m \ell_{\rm OE}(\mathbf{f}_{\mathbf{w}}; \mathbf{x}_{\rm A}^i),
\end{equation*}
and the expected \emph{real OOD risk} is given by
$
R_{{\rm O}} (\mathbf{w}) = \mathbb{E}_{\mathbf{x}\sim D_{X_{\rm O}}} \ell_{\rm OE}(\mathbf{f}_{\mathbf{w}}; \mathbf{x}).
$
Accordingly, we can define the expected {\emph{detection risk}} with respect to real OOD data, following
\begin{equation}\label{alpharisk}
R_{D}(\mathbf{w}) = R_{{\rm I}} (\mathbf{w}) + \alpha R_{{\rm O}}(\mathbf{w}),
\end{equation}
where $\alpha$ is the trade-off parameter. 

\textbf{Learning Strategy.} After the scoring function is selected, one can obtain the OOD detector if the model $\mathbf{f}_\mathbf{w}$ is given. Under the Problem~\ref{P1} of outlier exposure, a common learning strategy is to optimize the empirical ID and auxiliary OOD risk jointly \citep{HendrycksMD19}, namely,
\begin{equation}
    \min_{\mathbf{w}\in \mathcal{W}} ~\big [ \widehat{R}_{{\rm I}} (\mathbf{w}) + \alpha \widehat{R}_{{\rm A}}(\mathbf{w})\big ]. \label{eq::oe}
\end{equation}
Note that the auxiliary OOD data are employed in Eq.~\eqref{eq::oe}, which can arbitrarily differ from the real OOD cases. Then, it is generally expected that the predictor $\mathbf{f}_\mathbf{w}$, trained over the auxiliary OOD data, can perform well even on unseen OOD data, i.e., a small value of $R_{D}(\mathbf{w})$ is expected.

\section{Motivation} \label{sec: motivation}
To the general learning strategy in Eq. \eqref{eq::oe}, intuitively, if the auxiliary data are sampled from a distribution similar to real ones, the predictor will perform well for real OOD data. However, auxiliary and real OOD data differ in practice, posing us to suspect their open-world detection performance. To formally study the problem, we measure the difference between auxiliary and real OOD data in the distribution level, motivating our discussion of \emph{OOD distribution discrepancy}. 

\textbf{Distribution Discrepancy.} In this paper, we adopt a classical measurement for the distribution discrepancy---Optimal Transport Cost \citep{SinhaND18,mehra2022certifying}.

\begin{definition}[Optimal Transport Cost and Wasserstein-1 Distance \citep{Villani2003TopicsIO,Villani2008OptimalTO}]\label{D1}
Given a cost function $c:\mathcal{X}\times \mathcal{X} \rightarrow \mathbb{R}_{+}$, the \textit{optimal transport cost} between two distributions $D$ and $D'$ is
\begin{equation*}
 {\rm W}_c(D,D') = \inf_{\pi\in\Pi(D,D')}\mathbb{E}_{(\mathbf{x}, \mathbf{x}')\sim \pi} c(\mathbf{x}, \mathbf{x}'),
\end{equation*}
where $\Pi(D,D')$ is the space of all couplings for $D$ and $D'$. Furthermore, if the cost $c$ is a \textit{metric}, then the optimal transport cost is also called the  \textit{Wasserstein-1} distance.
\end{definition}

Based on Definition~\ref{D1}, we use the distribution discrepancy to measure the difference between the auxiliary and the real OOD data, namely, ${\rm W}_c(D_{X_{\rm O}},D_{X_{\rm A}})$. Then, we can formally study the impacts of such a discrepancy on the detection performance of the predictor. Under certain assumptions (cf., Corollary \ref{T2}), we can prove that with high probability, the following generalization bound holds:
\begin{equation}
R_D(\widehat{\mathbf{w}}) \leq \min_{\mathbf{w} \in \mathcal{W}}\left({R}_{{\rm I}} (\mathbf{w}) + \alpha {R}_{{\rm A}}(\mathbf{w})\right) + \alpha L_{c} {\rm W}_c(D_{X_{\rm O}},D_{X_{\rm A}}) + {\mathcal{O}}(1/ \sqrt{n})+{\mathcal{O}}(1/ \sqrt{m}), \label{eq: motivate}
\end{equation}
where $\widehat{\mathbf{w}}$ is the parameter learned by Eq. \eqref{eq::oe}, i.e.,  $\widehat{\mathbf{w}} \in \argmin_{\mathbf{w}\in \mathcal{W}} ~\widehat{R}_{{\rm I}} (\mathbf{w}) + \alpha \widehat{R}_{{\rm A}}(\mathbf{w})$,  $L_c$ is the Lipschitz constant of $\ell_{\rm OE}$ w.r.t. the cost function $c(\cdot,\cdot)$ (see Theorem \ref{T3}). In general, the expected detection risk $R_D(\widehat{\mathbf{w}})$ measures the expected performance on unseen OOD data given the predictor trained on the auxiliary OOD data. Then, due to the upper bound, the impacts of the OOD distribution discrepancy are reflected by the Wasserstein-1 distance between the auxiliary and the real OOD data, i.e., ${\rm W}_c(D_{X_{\rm O}},D_{X_{\rm A}})$. Therefore, although classical outlier exposure can improve OOD detection to some extent, it fails to ensure reliable detection of unseen OOD data, in that a larger distribution discrepancy generally indicates a worse guarantee for open-world OOD detection. %Eq.~\eqref{eq: motivate} can easily be derived from a special case of Theorem~\ref{T3}, given $\rho=0$. 

% The key challenge of OOD detection lies in knowing nothing about OOD data during the training. Thus, as aforementioned, a promising strategy is to employ a set of auxiliary OOD data as a surrogate~\citep{HendrycksMD19}. Accordingly, it is expected that trained models, i.e., following Eq.~\eqref{eq::oe}, over the auxiliary OOD and the ID data can perform well on the unseen real OOD data.

% The key challenge of OOD detection lies in the fact that OOD data are unavailable. Thus, as aforementioned, a promising strategy is to employ a set of auxiliary OOD data as a surrogate~\citep{HendrycksMD19,yang2021generalized}. Here, an implicit assumption is that training models (i.e., Eq. \eqref{eq::oe}) over the auxiliary OOD and the ID data can perform well on the unseen real OOD data.

The key to improve the detection performance is mitigating the negative impact induced by the OOD distribution discrepancy. To tackle this problem, a simple lemma inspires us:
\begin{lemma}\label{T0}
Let $d(\cdot,\cdot)$ be the distance to measure the discrepancy between distributions. Given a space $\mathfrak{D}$ consisting of some OOD distributions, if $D_{X_{\rm A}}\in \mathfrak{D}$, then 
\begin{equation}\label{Eq::T0.0}
\inf_{D_{X'} \in \mathfrak{D} }d(D_{X'},D_{X_{\rm O}}) \leq d(D_{X_{\rm A}},D_{X_{\rm O}}).
\end{equation}
If $d(\cdot,\cdot)$ is the Optimal Transport Cost in Definition~\ref{D1}, the cost function $c$ is a continuous metric, and $\mathfrak{D}$ is the Wasserstein-1 ball with a radius $\rho>0$, i.e., $\mathfrak{D} = \{D_{X'}: {\rm W}_c(D_{X'}, D_{X_{\rm A}})\leq \rho \}$, then 
\begin{equation}\label{Eq::T0.1}
\inf_{D_{X'} \in \mathfrak{D} }{\rm W}_{\rm c}(D_{X'},D_{X_{\rm O}})\leq \max\{{\rm W}_{c}(D_{X_{\rm A}},D_{X_{\rm O}})-\rho, 0\}.
\end{equation}
\end{lemma}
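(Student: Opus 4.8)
The plan is to establish the two inequalities separately, each by a short argument that exploits the structure of the distribution set $\mathfrak{D}$. For \eqref{Eq::T0.0}, the observation is simply that $D_{X_{\rm A}}$ is one admissible choice of $D_{X'}$ in the infimum on the left-hand side; hence the infimum over all of $\mathfrak{D}$ can only be smaller than or equal to the particular value $d(D_{X_{\rm A}}, D_{X_{\rm O}})$. This requires no assumption on $d$ beyond it being well-defined, and no structure on $\mathfrak{D}$ other than $D_{X_{\rm A}} \in \mathfrak{D}$.

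For \eqref{Eq::T0.1}, I would work under the stated hypotheses: $d = {\rm W}_c$ with $c$ a continuous metric, so that ${\rm W}_c$ is the Wasserstein-1 distance and in particular satisfies the triangle inequality on the space of probability measures (with the usual moment conditions, which I will take for granted as the earlier part of the paper does). The strategy is to exhibit, for any $\epsilon > 0$, a distribution $D_{X'} \in \mathfrak{D}$ with ${\rm W}_c(D_{X'}, D_{X_{\rm O}}) \le \max\{{\rm W}_c(D_{X_{\rm A}}, D_{X_{\rm O}}) - \rho, 0\} + \epsilon$. If ${\rm W}_c(D_{X_{\rm A}}, D_{X_{\rm O}}) \le \rho$, then $D_{X_{\rm O}}$ itself lies in the ball $\mathfrak{D}$, and taking $D_{X'} = D_{X_{\rm O}}$ gives a left-hand side of $0$, matching the right-hand side. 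Otherwise ${\rm W}_c(D_{X_{\rm A}}, D_{X_{\rm O}}) > \rho$, and the idea is to move a fraction of the way along an optimal coupling (or, more simply, a geodesic / displacement interpolation) from $D_{X_{\rm A}}$ toward $D_{X_{\rm O}}$: letting $\pi^\star$ be an optimal coupling realizing ${\rm W}_c(D_{X_{\rm A}}, D_{X_{\rm O}})$ and defining a new measure by transporting each mass point a fraction $t = \rho / {\rm W}_c(D_{X_{\rm A}}, D_{X_{\rm O}}) \in (0,1)$ of the way (using that $c$ is a metric, so intermediate points along $c$-geodesics are available, or using a McCann-type interpolant), one gets a distribution $D_{X'}$ with ${\rm W}_c(D_{X'}, D_{X_{\rm A}}) \le \rho$ and ${\rm W}_c(D_{X'}, D_{X_{\rm O}}) \le (1-t)\,{\rm W}_c(D_{X_{\rm A}}, D_{X_{\rm O}}) = {\rm W}_c(D_{X_{\rm A}}, D_{X_{\rm O}}) - \rho$. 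Then $D_{X'} \in \mathfrak{D}$ certifies the claimed bound on the infimum.

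A cleaner route that avoids constructing interpolants explicitly is to argue by the triangle inequality directly: for any $D_{X'} \in \mathfrak{D}$,
\begin{equation*}
{\rm W}_c(D_{X_{\rm A}}, D_{X_{\rm O}}) \le {\rm W}_c(D_{X_{\rm A}}, D_{X'}) + {\rm W}_c(D_{X'}, D_{X_{\rm O}}) \le \rho + {\rm W}_c(D_{X'}, D_{X_{\rm O}}),
\end{equation*}
which only yields the lower bound ${\rm W}_c(D_{X'}, D_{X_{\rm O}}) \ge {\rm W}_c(D_{X_{\rm A}}, D_{X_{\rm O}}) - \rho$ — the wrong direction for an upper bound on the infimum. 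So the triangle inequality alone is not enough; one genuinely needs the existence of a distribution sitting on the geodesic between $D_{X_{\rm A}}$ and $D_{X_{\rm O}}$ at Wasserstein-distance exactly $\rho$ from $D_{X_{\rm A}}$. The main obstacle is therefore the geodesic-existence step: one must invoke that $(\mathcal{P}(\mathcal{X}), {\rm W}_c)$ is a geodesic space when $c$ is a metric (equivalently, that $\mathcal{X}$ with metric $c$ is a length space, or that $c$-geodesics between points exist so that displacement interpolation is available), and check that the continuity of $c$ suffices to make the interpolating coupling measurable and the required moment conditions hold. I expect the authors to either assume geodesic completeness of the underlying metric space or to pass to a concrete interpolation (e.g. $X_t = (1-t) X_{\rm A} + t X_{\rm O}$ when $\mathcal{X}$ is a convex subset of Euclidean space with $c$ the Euclidean metric), in which case the bound ${\rm W}_c(D_{X_{\rm A}}, D_{X_{\rm O}}) \cdot (1-t)$ follows from feeding the coupling $(X_{\rm A}, X_t)$ and $(X_t, X_{\rm O})$ into the definition of ${\rm W}_c$ and using $c((1-t)a + tb, a) = t\,c(a,b)$ and $c((1-t)a+tb, b) = (1-t)c(a,b)$ for the Euclidean metric. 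Either way, once a valid $\rho$-interpolant is in hand, taking the infimum over $\mathfrak{D}$ and letting $\epsilon \to 0$ finishes the proof.
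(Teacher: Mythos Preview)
Your treatment of \eqref{Eq::T0.0} matches the paper exactly: it is the one-line observation that $D_{X_{\rm A}}$ is a feasible point for the infimum.

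For \eqref{Eq::T0.1}, your approach via displacement interpolation along a Wasserstein geodesic is valid in principle, but as you correctly note it requires the underlying metric space $(\mathcal{X},c)$ to be a geodesic (length) space so that McCann-type interpolants exist. The paper sidesteps this entirely by using a different interpolant: instead of a displacement interpolation, it takes the \emph{mixture} $D' = (1-u) D_{X_{\rm O}} + u D_{X_{\rm A}}$ for $u\in[0,1]$, i.e.\ a convex combination of the two probability measures themselves. Kantorovich--Rubinstein duality then gives, by linearity of the integral in the measure,
\begin{equation*}
{\rm W}_c(D', D_{X_{\rm O}}) = u\,{\rm W}_c(D_{X_{\rm A}}, D_{X_{\rm O}}), \qquad {\rm W}_c(D', D_{X_{\rm A}}) = (1-u)\,{\rm W}_c(D_{X_{\rm A}}, D_{X_{\rm O}}),
\end{equation*}
so choosing $u = 1 - \rho/{\rm W}_c(D_{X_{\rm A}}, D_{X_{\rm O}})$ in the case ${\rm W}_c(D_{X_{\rm A}}, D_{X_{\rm O}}) > \rho$ places $D'$ exactly on the boundary of the ball with ${\rm W}_c(D', D_{X_{\rm O}}) = {\rm W}_c(D_{X_{\rm A}}, D_{X_{\rm O}}) - \rho$. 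The advantage of the paper's route is that it needs nothing about $\mathcal{X}$ beyond what makes KR duality hold (a continuous metric on a Polish space), whereas your route requires additional geodesic structure that the lemma's hypotheses do not guarantee. Your instinct that ``one genuinely needs the existence of a distribution sitting on the geodesic'' is right in spirit, but the geodesic in question is in the space of measures under the total-variation-like mixture, not in $\mathcal{X}$ itself.
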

In the light of Lemma~\ref{T0}, we introduce a specific set of distributions $\mathfrak{D}$, augmented around the auxiliary OOD distribution. It makes it possible to mitigate the distribution discrepancy, following Eqs.~\eqref{Eq::T0.0} and~\eqref{Eq::T0.1}. Therefore, instead of choosing a model $\mathbf{f}_{\mathbf{w}}$ that directly minimizes the empirical risk in Eq.~\eqref{eq::oe}, we target augmenting the auxiliary OOD data within the distribution space $\mathfrak{D}$, namely,
\begin{equation}\label{Eq::DROrisk}
\min_{\mathbf{w}\in \mathcal{W}} \big [\widehat{R}_{\rm I}(\mathbf{w}) + \alpha \sup_{D_{X'} \in \mathfrak{D}} \mathbb{E}_{\mathbf{x}\sim D_{X'} } \ell_{\rm OE}(\mathbf{f}_{\mathbf{w}};\mathbf{x})\big ],~\text{subject~to~}\widehat{D}_{X_{\rm A}}\in \mathfrak{D},
\end{equation}
where $\widehat{D}_{X_{\rm A}}$ is the empirical form of ${D}_{X_{\rm A}}$, i.e.,
$\widehat{D}_{X_{\rm A}} = \frac{1}{m} \sum_{i=1}^m \delta_{\mathbf{x}_{\rm A}^i}$
{and}~$\delta_{\mathbf{x}_{\rm A}^i}$~\text{is~the~dirac~measure}.

%In this paper, we use the Wasserstein-1 distance to measure the discrepancy between distributions. Wassserstein-1 distance has its superiority in measuring the distance between probability distributions. For example, compared with KL-divergence, Wasserstein-1 distance covers a wider range of diverse distributions, and compared with the maximum mean discrepancy \cite{DBLP:journals/jmlr/GrettonBRSS12} that plainly measures mean features between distributions, Wasserstein-1 distance leads to more solid theoretical conclusions~\citep{wainwright2019high}. 

\section{Learning Framework}

This section proposes a general learning framework to mitigate the OOD distribution discrepancy. As aforementioned, we consider an augmented set of OOD distributions to improve OOD detection, thus named \emph{Distributional-Augmented OOD Learning} (DAL).

To begin with, we need to select a suitable distribution space $\mathfrak{D}$ for the tractable solutions of Eq.~\eqref{Eq::DROrisk}. Generally, the choice of $\mathfrak{D}$ influences both the richness of the auxiliary data as well as the tractability of the resulting optimization problem. Previous works have developed a series of distribution spaces, e.g., the distribution ball based on $f$-divergences \citep{NamkoongD16,DBLP:conf/iclr/MichelHN21} and maximum mean discrepancy (MMD) \citep{Staib2019DistributionallyRO}. However, there are several drawbacks for the distribution balls based on $f$-divergences and MMD: 1) any $ f$-divergence-based space $\mathfrak{D}$ contains only distributions within the same support set as $\widehat{D}_{X_{\rm A}}$; and 2) the effective solutions in the MMD-based space have not been provided~\citep{Staib2019DistributionallyRO}. 

Instead, motivated by~\cite{SinhaND18,mehra2022certifying,dai2023moderately}
and Theorem \ref{T0}, we consider the Wasserstein ball. For any $\rho>0$, we define the augmented OOD distribution set as
\begin{equation*}
\mathfrak{D} = \{D_{X'}: {\rm W}_c(D_{X'}, \widehat{D}_{X_{\rm A}})\leq \rho \},
\end{equation*}
and consider the following optimization problem: 
\begin{equation}\label{Eq::Objective}
\begin{split}
&\min_{\mathbf{w}\in \mathcal{W}} \widehat{R}_D(\mathbf{w};\rho) = \min_{\mathbf{w}\in \mathcal{W}} \big [ \widehat{R}_{\rm I}(\mathbf{w}) + \alpha \widehat{R}_{{\rm O}}(\mathbf{w};\rho)\big ],
\end{split}
\end{equation}
where %$\alpha > 0$ is the trade-off hyper-parameter, and
\begin{equation}
\widehat{R}_{{\rm O}}(\mathbf{w};\rho) = \sup_{{\rm W}_{c}(D_{X'},\widehat{D}_{X_{\rm A}}) \leq \rho} \mathbb{E}_{\mathbf{x}\sim D_{X'}} \ell_{\rm OE}(\mathbf{f}_{\mathbf{w}}; \mathbf{x}). \label{eq: dro_oe2}
\end{equation}
However, the optimization problem in Eq.~\eqref{Eq::Objective} is intractable due to the infinite-dimensional search for the distribution $D_{X'}$. Fortunately, the following dual theorem provides a solution:

\begin{theorem}[\citet{Blanchet2016QuantifyingDM}]\label{T-dual}
Let $c(\cdot,\cdot)$ be a continuous metric and 
$
\phi_{\gamma}(\mathbf{w};\mathbf{x})= \sup_{\mathbf{x}'\in \mathcal{X}} \{ \ell(\mathbf{f}_{\mathbf{w}};\mathbf{x}') - \gamma c(\mathbf{x}',\mathbf{x})\}
$
be the robust surrogate function. Then, for any $\rho>0$, 
\begin{equation}\label{Eq::dual}
 \widehat{R}_D(\mathbf{w};\rho)=\widehat{R}_{\rm I}(\mathbf{w}) + \alpha \inf_{\gamma \geq 0}\big \{\gamma \rho + \frac{1}{m}\sum_{i=1}^m \phi_{\gamma}(\mathbf{w}; \mathbf{x}_{\rm A}^i) \big \}.
\end{equation}
\end{theorem}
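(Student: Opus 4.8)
Because the Wasserstein ball $\mathfrak{D}$ only perturbs $\widehat{D}_{X_{\rm A}}$, the term $\widehat{R}_{\rm I}(\mathbf{w})$ carries through unchanged on both sides, so it suffices to prove the identity for the OOD part, i.e.\ that $\widehat{R}_{\rm O}(\mathbf{w};\rho)$ of Eq.~\eqref{eq: dro_oe2} equals $\inf_{\gamma\ge 0}\big\{\gamma\rho+\frac{1}{m}\sum_{i=1}^m\phi_\gamma(\mathbf{w};\mathbf{x}_{\rm A}^i)\big\}$, with $\ell_{\rm OE}$ playing the role of $\ell$ in $\phi_\gamma$. The first step is to trade the supremum over distributions for a supremum over transport plans. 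Since the objective in Eq.~\eqref{eq: dro_oe2} depends only on the marginal $D_{X'}$, one shows $\widehat{R}_{\rm O}(\mathbf{w};\rho)=\sup\{\mathbb{E}_{(\mathbf{x},\mathbf{x}')\sim\pi}\ell_{\rm OE}(\mathbf{f}_{\mathbf{w}};\mathbf{x}):\pi\in\Pi,\ \mathbb{E}_\pi c\le\rho\}$, where $\pi$ ranges over couplings whose second marginal is $\widehat{D}_{X_{\rm A}}$ and whose first coordinate is the one being integrated; ``$\le$'' uses an optimal coupling attaining ${\rm W}_c(D_{X'},\widehat{D}_{X_{\rm A}})$ for each feasible $D_{X'}$, and ``$\ge$'' uses that the first marginal of any feasible $\pi$ is itself a feasible $D_{X'}$ with the same objective value. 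Because $\widehat{D}_{X_{\rm A}}=\frac{1}{m}\sum_{i=1}^m\delta_{\mathbf{x}_{\rm A}^i}$ is discrete, every such $\pi$ disintegrates as $\pi=\frac{1}{m}\sum_{i=1}^m\mu_i\otimes\delta_{\mathbf{x}_{\rm A}^i}$ with $\mu_i\in\mathcal{P}(\mathcal{X})$, turning the problem into
\[
\widehat{R}_{\rm O}(\mathbf{w};\rho)=\sup\Big\{\frac{1}{m}\sum_{i=1}^m\mathbb{E}_{\mathbf{x}\sim\mu_i}\ell_{\rm OE}(\mathbf{f}_{\mathbf{w}};\mathbf{x})\ :\ \mu_i\in\mathcal{P}(\mathcal{X}),\ \frac{1}{m}\sum_{i=1}^m\mathbb{E}_{\mathbf{x}\sim\mu_i}c(\mathbf{x},\mathbf{x}_{\rm A}^i)\le\rho\Big\}.
\]

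Next I would dualize the single scalar constraint. For a multiplier $\gamma\ge 0$ the Lagrangian $L((\mu_i),\gamma)=\gamma\rho+\frac{1}{m}\sum_{i=1}^m\mathbb{E}_{\mathbf{x}\sim\mu_i}\big[\ell_{\rm OE}(\mathbf{f}_{\mathbf{w}};\mathbf{x})-\gamma c(\mathbf{x},\mathbf{x}_{\rm A}^i)\big]$ is linear in each $\mu_i$, and maximizing a linear functional over $\mathcal{P}(\mathcal{X})$ equals maximizing its integrand pointwise; hence the dual function is exactly $D(\gamma)=\gamma\rho+\frac{1}{m}\sum_{i=1}^m\sup_{\mathbf{x}'\in\mathcal{X}}\{\ell_{\rm OE}(\mathbf{f}_{\mathbf{w}};\mathbf{x}')-\gamma c(\mathbf{x}',\mathbf{x}_{\rm A}^i)\}=\gamma\rho+\frac{1}{m}\sum_{i=1}^m\phi_\gamma(\mathbf{w};\mathbf{x}_{\rm A}^i)$. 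The max--min inequality (weak duality) then gives the easy direction $\widehat{R}_{\rm O}(\mathbf{w};\rho)\le\inf_{\gamma\ge 0}D(\gamma)$. Moreover, since in this setting $\ell_{\rm OE}(\mathbf{f}_{\mathbf{w}};\cdot)$ is $L_c$-Lipschitz with respect to the metric $c$, for every $\gamma\ge L_c$ the supremum defining $\phi_\gamma$ is attained at $\mathbf{x}'=\mathbf{x}$, so $\phi_\gamma(\mathbf{w};\cdot)=\ell_{\rm OE}(\mathbf{f}_{\mathbf{w}};\cdot)$ and $D(\gamma)=\gamma\rho+\widehat{R}_{\rm A}(\mathbf{w})<\infty$; in particular both sides of the claimed identity are finite and $D$ is proper, which dispatches the degenerate infinite-value case.

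It remains to rule out a duality gap, i.e.\ to prove $\widehat{R}_{\rm O}(\mathbf{w};\rho)\ge\inf_{\gamma\ge 0}D(\gamma)$; this is the crux. The enabling structural fact is a Slater-type condition, available precisely because $\rho>0$: the choice $\mu_i=\delta_{\mathbf{x}_{\rm A}^i}$ is strictly feasible for the program above, with transport cost $0<\rho$. One then concludes by any standard no-gap argument for convex (infinite-dimensional linear) programming, either citing the duality theorem of \citet{Blanchet2016QuantifyingDM} directly, or self-containedly: let $v(r)$ be the optimal value of the program with $\rho$ replaced by $r$; $v$ is concave and nondecreasing, Slater places $\rho$ in the interior of its domain, so $v$ admits a finite supergradient $\gamma^\star\ge 0$ at $\rho$, and substituting the supergradient inequality $v(r)\le v(\rho)+\gamma^\star(r-\rho)$ into the definition of $D(\gamma^\star)$ yields $D(\gamma^\star)\le v(\rho)=\widehat{R}_{\rm O}(\mathbf{w};\rho)$, which combined with weak duality forces equality. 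I expect this last step---establishing the absence of a duality gap rigorously in the infinite-dimensional setting and handling the boundary cases $\gamma^\star=0$ and unbounded primal---to be the main obstacle; the rest is bookkeeping that exploits the discreteness of $\widehat{D}_{X_{\rm A}}$ and the linearity of the objective in the perturbing measure.
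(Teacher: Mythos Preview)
The paper does not actually prove this theorem: its ``proof'' consists of a single sentence referring to \citet{Blanchet2016RobustWP,Blanchet2016QuantifyingDM,SinhaND18} and omitting the argument. Your proposal is correct and reconstructs the standard route those references take---rewrite the distribution supremum as a supremum over couplings, disintegrate against the discrete second marginal $\widehat{D}_{X_{\rm A}}$, dualize the single transport-budget constraint, and close the duality gap via Slater (available because $\rho>0$) together with a supergradient/perturbation-function argument.

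One small caveat: Theorem~\ref{T-dual} as stated does \emph{not} assume $\ell_{\rm OE}(\mathbf{f}_{\mathbf{w}};\cdot)$ is $L_c$-Lipschitz in $c$ (that hypothesis enters only in Theorem~\ref{T3}), so you should not invoke it here to guarantee finiteness of $D(\gamma)$. The cleaner fix, consistent with the paper's other standing assumptions, is to use the boundedness $0\le\ell_{\rm OE}\le M_{\ell_{\rm OE}}$ from Theorem~\ref{T1}: then $\phi_\gamma(\mathbf{w};\mathbf{x})\le M_{\ell_{\rm OE}}$ for every $\gamma\ge 0$ (Lemma~\ref{L1} in the appendix records exactly this), which already makes $D(\gamma)$ finite and your perturbation-function argument go through unchanged.
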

Theorem~\ref{T-dual} provides a feasible surrogate for the original optimization problem in  Eq.~\eqref{Eq::Objective}, transforming the infinite-dimensional problem to its finite counterpart, i.e., the data feature search. We use Eq.~\eqref{Eq::dual} to design our algorithm, cf., Section~\ref{sec: alg}.

\vspace{-3pt}
\subsection{Theoretical Supports}\label{sec: theory}
\vspace{-3pt}
This section provides the theoretical support for our DAL. Specifically, 1) Theorem~\ref{T1} shows that the empirical model given by Eq.~\eqref{Eq::Objective} can achieve consistent learning performance, and 2) Theorem \ref{T3} further demonstrates the expected detection risk estimation, i.e., $R_D(\mathbf{w})$, with respect to the empirical model given by Eq.~\eqref{Eq::Objective}. All the proofs can be found in Appendix~\ref{app: proofs}. To state our theoretical results, we use the notation ${R}_D(\mathbf{w};\rho)$ to represent the ideal form of $\widehat{R}_D(\mathbf{w};\rho)$, which is defined by
\begin{equation*}
{R}_D(\mathbf{w};\rho)= {R}_{\rm I}(\mathbf{w}) + \alpha {R}_{{\rm O}}(\mathbf{w};\rho),
\end{equation*}
where
\begin{equation*}
{R}_{{\rm O}}(\mathbf{w};\rho) = \sup_{{\rm W}_{c}(D_{X'},D_{X_{\rm A}}) \leq \rho} \mathbb{E}_{\mathbf{x}\sim D_{X'}} \ell_{\rm OE}(\mathbf{f}_\mathbf{w}; \mathbf{x}).
\end{equation*}
Similar to~\citet{SinhaND18}, our results rely on the covering number (cf., Appendix~\ref{app: cn}) for the model classes $\mathcal{F}=\{\ell(\mathbf{f}_{\mathbf{w}};\cdot): \mathbf{w} \in \mathcal{W}\}$ and $\mathcal{F}_{\rm OE} = \{\ell_{\rm OE}(\mathbf{f}_{\mathbf{w}};\cdot): \mathbf{w} \in \mathcal{W}\}$ to represent their complexity. Intuitively, the covering numbers $\mathcal{N}(\mathcal{F},\epsilon,L^{\infty})$ and $\mathcal{N}(\mathcal{F}_{\rm OE},\epsilon,L^{\infty})$ are the minimal numbers of $L^{\infty}$ balls of radius $\epsilon > 0$ needed to cover the model classes $\mathcal{F}$ and $\mathcal{F}_{\rm OE}$, respectively. Now, we demonstrate that DAL can achieve consistent performance under mild assumptions. 
%formally justify that our proposed learning framework can lead to generalizable OOD detection. For both the ID and the auxiliary OOD cases, we only have the finite sample sizes for model training. Hence, we begin by verifying that minimizing the worst-case empirical risk leads to minimizing the associated expected one. 
\begin{theorem}[Excess Generalization Bound]\label{T1} Assume that $0\leq \ell(\mathbf{f}_{\mathbf{w}};\mathbf{x},y)\leq M_{\ell}$, $0\leq \ell_{{{\rm OE}}}(\mathbf{f}_{\mathbf{w}};\mathbf{x})\leq M_{\ell_{\rm OE}}$, and $c(\cdot,\cdot):\mathcal{X}\times \mathcal{X} \rightarrow \mathbb{R}_{+}$ is a continuous metric.
Let $\widehat{\mathbf{w}}$ be the optimal solution of Eq. \eqref{Eq::Objective}, i.e.,
$\widehat{\mathbf{w}} \in \argmin_{\mathbf{w}\in \mathcal{W}}  \widehat{R}_D({\mathbf{w}}; \rho)$. Then with the probability at least $1-4e^{-t}>0$,
\begin{equation}\label{T1.01}
\begin{split}
&R_D(\widehat{\mathbf{w}};\rho) - \min_{\mathbf{w}\in \mathcal{W}} R_D(\mathbf{w};\rho)\leq \epsilon(n,m;t),
\end{split}
\end{equation}
for any $\rho>0$, where
\begin{equation*}
\begin{split}
{\epsilon(n,m;t)}=& \frac{b_0 M_{\ell}}{\sqrt{n}}  \int_{0}^{1} \sqrt{\log\mathcal{N}(\mathcal{F},M_{\ell}\epsilon,L^{\infty})}d\epsilon + 2M_{\ell}\sqrt{\frac{2t}{n}}
\\
+ & \alpha b_1 \sqrt{\frac{M_{\ell_{\rm OE}}^3}{\rho^2 m}}\int_{0}^{1} \sqrt{ \log \mathcal{N}(\mathcal{F}_{\rm OE},M_{\ell_{\rm OE}}\epsilon,L^{\infty})}{\rm d} \epsilon
+ \alpha b_{2}M_{\ell_{\rm OE}} \sqrt{\frac{2t}{m}},
\end{split}
\end{equation*}
where $b_0$, $b_1$ and $b_2$ are uniform constants.
\end{theorem}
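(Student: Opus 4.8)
The plan is a standard excess-risk decomposition followed by two uniform-convergence arguments, with the dual reformulation of Theorem~\ref{T-dual} turning the distributionally robust OOD term into an ordinary empirical process. Write $\mathbf{w}^\star\in\argmin_{\mathbf{w}\in\mathcal{W}}R_D(\mathbf{w};\rho)$ and use optimality of $\widehat{\mathbf{w}}$ for $\widehat{R}_D(\cdot;\rho)$: adding and subtracting $\widehat{R}_D$ at both $\widehat{\mathbf{w}}$ and $\mathbf{w}^\star$ and discarding the nonpositive term $\widehat{R}_D(\widehat{\mathbf{w}};\rho)-\widehat{R}_D(\mathbf{w}^\star;\rho)$ yields $R_D(\widehat{\mathbf{w}};\rho)-\min_{\mathbf{w}}R_D(\mathbf{w};\rho)\le 2\sup_{\mathbf{w}\in\mathcal{W}}|\widehat{R}_D(\mathbf{w};\rho)-R_D(\mathbf{w};\rho)|$. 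Since $\widehat{R}_D=\widehat{R}_{\rm I}+\alpha\widehat{R}_{\rm O}(\cdot;\rho)$ and $R_D=R_{\rm I}+\alpha R_{\rm O}(\cdot;\rho)$, it suffices to bound $\sup_{\mathbf{w}}|\widehat{R}_{\rm I}(\mathbf{w})-R_{\rm I}(\mathbf{w})|$ and $\sup_{\mathbf{w}}|\widehat{R}_{\rm O}(\mathbf{w};\rho)-R_{\rm O}(\mathbf{w};\rho)|$ separately, each on an event of probability $\ge 1-2e^{-t}$, and union bound to reach $1-4e^{-t}$ (all numerical constants, including the leading $2$, absorbed into $b_0,b_1,b_2$). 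The ID term is classical: symmetrization bounds its expectation by the Rademacher complexity of $\mathcal{F}=\{\ell(\mathbf{f}_{\mathbf{w}};\cdot):\mathbf{w}\in\mathcal{W}\}$, Dudley's entropy integral turns this into $\tfrac{b_0M_\ell}{\sqrt n}\int_0^1\sqrt{\log\mathcal{N}(\mathcal{F},M_\ell\epsilon,L^\infty)}\,d\epsilon$ using $0\le\ell\le M_\ell$, and McDiarmid's inequality (one sample perturbs the supremum by $\le M_\ell/n$) adds $2M_\ell\sqrt{2t/n}$.

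For the OOD term, apply Theorem~\ref{T-dual} to both $\widehat{D}_{X_{\rm A}}$ and $D_{X_{\rm A}}$ (strong duality holds for any base distribution), giving $\widehat{R}_{\rm O}(\mathbf{w};\rho)=\inf_{\gamma\ge0}\{\gamma\rho+\tfrac1m\sum_{i=1}^m\phi_\gamma(\mathbf{w};\mathbf{x}_{\rm A}^i)\}$ and $R_{\rm O}(\mathbf{w};\rho)=\inf_{\gamma\ge0}\{\gamma\rho+\mathbb{E}_{\mathbf{x}\sim D_{X_{\rm A}}}\phi_\gamma(\mathbf{w};\mathbf{x})\}$ with $\phi_\gamma(\mathbf{w};\mathbf{x})=\sup_{\mathbf{x}'\in\mathcal{X}}\{\ell_{\rm OE}(\mathbf{f}_{\mathbf{w}};\mathbf{x}')-\gamma c(\mathbf{x}',\mathbf{x})\}$. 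Taking $\mathbf{x}'=\mathbf{x}$ gives $0\le\ell_{\rm OE}(\mathbf{f}_{\mathbf{w}};\mathbf{x})\le\phi_\gamma(\mathbf{w};\mathbf{x})\le M_{\ell_{\rm OE}}$, so the $\gamma=0$ value of either dual objective is $\le M_{\ell_{\rm OE}}$ and any $\gamma$ with $\gamma\rho>M_{\ell_{\rm OE}}$ is strictly suboptimal; hence both infima may be restricted to $\gamma\in[0,\gamma_{\max}]$ with $\gamma_{\max}:=M_{\ell_{\rm OE}}/\rho$. Using $|\inf_\gamma a_\gamma-\inf_\gamma b_\gamma|\le\sup_\gamma|a_\gamma-b_\gamma|$, this reduces the OOD deviation to
\begin{equation*}
\sup_{\mathbf{w}\in\mathcal{W}}\big|\widehat{R}_{\rm O}(\mathbf{w};\rho)-R_{\rm O}(\mathbf{w};\rho)\big|\le\sup_{\mathbf{w}\in\mathcal{W},\ \gamma\in[0,\gamma_{\max}]}\Big|\tfrac1m\sum_{i=1}^m\phi_\gamma(\mathbf{w};\mathbf{x}_{\rm A}^i)-\mathbb{E}_{\mathbf{x}\sim D_{X_{\rm A}}}\phi_\gamma(\mathbf{w};\mathbf{x})\Big|,
\end{equation*}
an ordinary uniform deviation over $\Phi=\{\phi_\gamma(\mathbf{w};\cdot):\mathbf{w}\in\mathcal{W},\ \gamma\in[0,\gamma_{\max}]\}$. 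Symmetrization and Dudley control it by the metric entropy of $\Phi$, which one bounds by composing $\mathcal{N}(\mathcal{F}_{\rm OE},\cdot,L^\infty)$ with a cover of $[0,\gamma_{\max}]$ — the latter using that $\gamma\mapsto\phi_\gamma(\mathbf{w};\mathbf{x})$ has variation governed by the transport radius $c(\mathbf{x}'_\gamma,\mathbf{x})$ at the inner maximizer, itself controlled via $\ell_{\rm OE}\le M_{\ell_{\rm OE}}$ and continuity of $c$; McDiarmid (one sample moves the supremum by $\le M_{\ell_{\rm OE}}/m$) then supplies $\alpha b_2M_{\ell_{\rm OE}}\sqrt{2t/m}$. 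Carrying $\gamma_{\max}=M_{\ell_{\rm OE}}/\rho$ and this Lipschitz constant through the entropy integral is exactly what produces $\alpha b_1\sqrt{M_{\ell_{\rm OE}}^3/(\rho^2m)}\int_0^1\sqrt{\log\mathcal{N}(\mathcal{F}_{\rm OE},M_{\ell_{\rm OE}}\epsilon,L^\infty)}\,d\epsilon$.

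The genuinely delicate step — and the main obstacle — is entirely on the OOD side: passing to the dual, justifying the compactification of the dual variable $\gamma$ and tracking its range $[0,M_{\ell_{\rm OE}}/\rho]$, and bounding the metric entropy of the $\gamma$-parametrized surrogate family $\{\phi_\gamma\}$, i.e.\ showing $\gamma\mapsto\phi_\gamma(\mathbf{w};\mathbf{x})$ has uniformly controlled variation by bounding how far the inner perturbation $\mathbf{x}'_\gamma$ can travel. This is where the $\rho^{-2}$ and $M_{\ell_{\rm OE}}^3$ dependence of $\epsilon(n,m;t)$ is paid for, and the part I would follow \citet{SinhaND18} most closely on; the ID term and the symmetrization/Dudley/McDiarmid toolkit are otherwise routine.
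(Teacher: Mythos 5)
Your plan reaches the stated bound and identifies the correct technical inputs (strong duality, compactification of $\gamma$, Sinha et al.'s entropy-integral bound, McDiarmid, Dudley), but your opening decomposition diverges from the paper's in a way worth flagging. You write $R_D(\widehat{\mathbf{w}};\rho)-\min_{\mathbf{w}}R_D(\mathbf{w};\rho)\le 2\sup_{\mathbf{w}}|\widehat{R}_D(\mathbf{w};\rho)-R_D(\mathbf{w};\rho)|$ and then need a \emph{two-sided} uniform deviation over $(\mathbf{w},\gamma)$ for the surrogate family $\{\phi_\gamma(\mathbf{w};\cdot)\}$. The paper instead keeps the asymmetric four-term split $[R_{\rm I}(\widehat{\mathbf{w}})-\widehat{R}_{\rm I}(\widehat{\mathbf{w}})]+\alpha[R_{\rm O}(\widehat{\mathbf{w}};\rho)-\widehat{R}_{\rm O}(\widehat{\mathbf{w}};\rho)]-[R_{\rm I}(\mathbf{w}^*)-\widehat{R}_{\rm I}(\mathbf{w}^*)]-\alpha[R_{\rm O}(\mathbf{w}^*;\rho)-\widehat{R}_{\rm O}(\mathbf{w}^*;\rho)]$, which only requires a \emph{one-sided} uniform bound $R_{\rm O}-\widehat{R}_{\rm O}\le\mathrm{err}$ (this is Lemma~\ref{L3}, a direct corollary of Sinha et al.'s Theorem~3) plus a \emph{pointwise} McDiarmid bound on $\widehat{R}_{\rm O}(\mathbf{w}^*;\rho)-R_{\rm O}(\mathbf{w}^*;\rho)$ at the single parameter $\mathbf{w}^*$ (Lemma~\ref{L7}), and similarly for the ID risk via Lemmas~\ref{L8} and~\ref{L9}. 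The practical consequence: the cited result (Lemma~\ref{L2}) gives you only the direction $\mathbb{E}\phi_\gamma-\frac1m\sum\phi_\gamma\lesssim\gamma\cdot(\text{entropy integral})+\sqrt{t/m}$; your route needs the reverse inequality uniformly as well. That reverse bound is plausible — the underlying symmetrization and chaining are two-sided — but it is not what is being invoked, so either you prove it separately or you switch to the paper's uniform-plus-pointwise split, which buys you the result from exactly the ingredients you cite with no extra work. Your $\gamma$-compactification argument (restrict the infimum to $\gamma\in[0,M_{\ell_{\rm OE}}/\rho]$ because $\gamma\rho>M_{\ell_{\rm OE}}$ is suboptimal against $\gamma=0$) is essentially the paper's Lemma~\ref{L1}, stated more cleanly, and your reduction via $|\inf_\gamma a_\gamma-\inf_\gamma b_\gamma|\le\sup_\gamma|a_\gamma-b_\gamma|$ is a tidy reformulation; both are fine. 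The characterization of the $\rho^{-1}$ and $M_{\ell_{\rm OE}}^{3/2}$ dependence as arising from carrying $\gamma_{\max}=M_{\ell_{\rm OE}}/\rho$ through the entropy term is exactly right.
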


Furthermore, under proper conditions, one can show that the bound in Eq. \eqref{T1.01} can attain ${\mathcal{O}}(1/ \sqrt{n})+{\mathcal{O}}(1/ \sqrt{m})$, i.e.,
$R_D(\widehat{\mathbf{w}};\rho) - \min_{\mathbf{w}\in \mathcal{W}} R_D(\mathbf{w};\rho) \leq {\mathcal{O}}(1/ \sqrt{n})+{\mathcal{O}}(1/ \sqrt{m})$.
Corollary \ref{T2} in Appendix \ref{C1inApp} gives an {example} to support the above claim. Next, we give a learning bound to estimate the expected detection risk in Eq.~\eqref{alpharisk} w.r.t. the model $\mathbf{f}_{\widehat{\mathbf{w}}}$ given by Eq. \eqref{Eq::Objective}. 

\begin{theorem}[Risk Estimation]\label{T3}
Given the same conditions in Theorem \ref{T1} and let $\widehat{\mathbf{w}}$ be the solution of Eq. \eqref{Eq::Objective}, which is given by
$\widehat{\mathbf{w}} \in \argmin_{\mathbf{w}\in \mathcal{W}}  \widehat{R}_D({\mathbf{w}}; \rho).$ If $\ell_{\rm OE}(\mathbf{f}_{\mathbf{w}}; \mathbf{x})$ is  $L_{c}$-Lipschitz w.r.t. $c(\cdot,\cdot)$, i.e., $|\ell_{\rm OE}(\mathbf{f}_{\mathbf{w}}; \mathbf{x})-\ell_{\rm OE}(\mathbf{f}_{\mathbf{w}}; \mathbf{x}')|\leq L_{c}c(\mathbf{x},\mathbf{x}')$, then with the probability at least $1-4e^{-t}>0$,
\begin{equation*}
R_D(\widehat{\mathbf{w}}) - \overbrace{{\min_{\mathbf{w} \in \mathcal{W}} R_D({\mathbf{w}};\rho)}}^{\text{approximate risk}} \leq  \underbrace{\alpha L_{c} \max\{{\rm W}_c(D_{X_{\rm O}},D_{X_{\rm A}})-\rho,0\}+ \epsilon(n,m;t)}_{\text{estimation error}},
\end{equation*}
for any $\rho>0$, where $\epsilon(n,m;t)$ is defined in Theorem~\ref{T1}.
\end{theorem}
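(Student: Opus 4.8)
The plan is to decompose the target inequality into two pieces and bound each separately. First I would observe that we can insert the intermediate quantity $R_D(\widehat{\mathbf{w}};\rho)$ and write
\begin{equation*}
R_D(\widehat{\mathbf{w}}) - \min_{\mathbf{w}\in\mathcal{W}} R_D(\mathbf{w};\rho) = \big(R_D(\widehat{\mathbf{w}}) - R_D(\widehat{\mathbf{w}};\rho)\big) + \big(R_D(\widehat{\mathbf{w}};\rho) - \min_{\mathbf{w}\in\mathcal{W}} R_D(\mathbf{w};\rho)\big).
\end{equation*}
The second parenthesized term is exactly the excess generalization bound from Theorem~\ref{T1}, so it is at most $\epsilon(n,m;t)$ with probability at least $1-4e^{-t}$. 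It remains to control the first term, i.e., the gap between the true detection risk $R_D(\mathbf{w}) = R_{\rm I}(\mathbf{w}) + \alpha R_{\rm O}(\mathbf{w})$ and the $\rho$-inflated population objective $R_D(\mathbf{w};\rho) = R_{\rm I}(\mathbf{w}) + \alpha R_{\rm O}(\mathbf{w};\rho)$, uniformly in $\mathbf{w}$ (in particular at $\widehat{\mathbf{w}}$). Since the ID parts cancel, this reduces to bounding $R_{\rm O}(\mathbf{w}) - R_{\rm O}(\mathbf{w};\rho)$.

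Next I would show $R_{\rm O}(\mathbf{w}) - R_{\rm O}(\mathbf{w};\rho) \leq L_c \max\{{\rm W}_c(D_{X_{\rm O}}, D_{X_{\rm A}}) - \rho, 0\}$ for every $\mathbf{w}$. The key is that $R_{\rm O}(\mathbf{w};\rho) = \sup_{{\rm W}_c(D_{X'}, D_{X_{\rm A}})\leq\rho} \mathbb{E}_{\mathbf{x}\sim D_{X'}} \ell_{\rm OE}(\mathbf{f}_{\mathbf{w}};\mathbf{x})$ is a supremum over the Wasserstein ball of radius $\rho$ around the \emph{population} auxiliary distribution $D_{X_{\rm A}}$. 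I would apply Lemma~\ref{T0}, Eq.~\eqref{Eq::T0.1}, with $\mathfrak{D}$ this very ball: there exists (or, up to $\varepsilon$, nearly exists) a distribution $D_{X^\star}$ in the ball with ${\rm W}_c(D_{X^\star}, D_{X_{\rm O}}) \leq \max\{{\rm W}_c(D_{X_{\rm A}}, D_{X_{\rm O}}) - \rho, 0\}$. Then, using the $L_c$-Lipschitz property of $\ell_{\rm OE}$ w.r.t.\ $c$ together with the Kantorovich--Rubinstein duality (the standard fact that $|\mathbb{E}_{D}h - \mathbb{E}_{D'}h| \leq L_c\,{\rm W}_c(D,D')$ whenever $h$ is $L_c$-Lipschitz w.r.t.\ the metric $c$), I get
\begin{equation*}
\mathbb{E}_{\mathbf{x}\sim D_{X_{\rm O}}}\ell_{\rm OE}(\mathbf{f}_{\mathbf{w}};\mathbf{x}) \leq \mathbb{E}_{\mathbf{x}\sim D_{X^\star}}\ell_{\rm OE}(\mathbf{f}_{\mathbf{w}};\mathbf{x}) + L_c\,{\rm W}_c(D_{X^\star}, D_{X_{\rm O}}) \leq R_{\rm O}(\mathbf{w};\rho) + L_c \max\{{\rm W}_c(D_{X_{\rm O}}, D_{X_{\rm A}}) - \rho, 0\},
\end{equation*}
where the last step uses that $D_{X^\star}$ lies in the ball so its expectation is dominated by the supremum. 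Multiplying by $\alpha$ and adding the Theorem~\ref{T1} bound gives the claimed inequality; since both events hold simultaneously (Theorem~\ref{T1}'s bound is the only probabilistic ingredient, the Lipschitz/transport argument being deterministic), the probability stays $1-4e^{-t}$.

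The main obstacle I anticipate is the technical care needed around the infimum in Lemma~\ref{T0}: Eq.~\eqref{Eq::T0.1} is stated as an infimum over the ball, which may not be attained, so strictly one works with a near-optimizer $D_{X^\star}$ achieving ${\rm W}_c(D_{X^\star}, D_{X_{\rm O}}) \leq \max\{{\rm W}_c(D_{X_{\rm A}}, D_{X_{\rm O}}) - \rho, 0\} + \varepsilon$ and then lets $\varepsilon \to 0$; I would need to confirm the supremum defining $R_{\rm O}(\mathbf{w};\rho)$ indeed dominates $\mathbb{E}_{D_{X^\star}}\ell_{\rm OE}$, which requires $D_{X^\star}$ to be feasible, i.e.\ genuinely inside the $\rho$-ball — this is exactly the constraint Lemma~\ref{T0} imposes, so it is consistent, but the $\varepsilon$-bookkeeping should be done explicitly. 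A secondary point is justifying the transport inequality $|\mathbb{E}_D h - \mathbb{E}_{D'} h| \leq L_c {\rm W}_c(D,D')$ when $c$ is only assumed to be a continuous metric (not necessarily inducing a complete separable space); this is standard but I would cite it (e.g.\ via the coupling definition directly: for any coupling $\pi$, $|\mathbb{E}_\pi[h(\mathbf{x}) - h(\mathbf{x}')]| \leq L_c \mathbb{E}_\pi[c(\mathbf{x},\mathbf{x}')]$, then take the infimum over $\pi$), which avoids invoking duality altogether and keeps the argument self-contained.
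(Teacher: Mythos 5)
Your proposal is correct and follows the same overall structure as the paper's proof: the same decomposition via the intermediate quantity $R_D(\widehat{\mathbf{w}};\rho)$, with Theorem~\ref{T1} handling the second term, and a deterministic bound on $R_{\rm O}(\widehat{\mathbf{w}})-R_{\rm O}(\widehat{\mathbf{w}};\rho)$ for the first, using a distribution in the Wasserstein ball close to $D_{X_{\rm O}}$ together with the $L_c$-Lipschitz property of $\ell_{\rm OE}$. The only cosmetic difference is that you invoke Lemma~\ref{T0} as a black box and then work with $\varepsilon$-near-optimizers, whereas the paper reconstructs the mixture $D' = (1-u)D_{X_{\rm O}} + uD_{X_{\rm A}}$ explicitly with $u=1-\rho/{\rm W}_c(D_{X_{\rm O}},D_{X_{\rm A}})$ (exactly the construction inside the proof of Lemma~\ref{T0}), which lands directly on a feasible $D'$ with ${\rm W}_c(D',D_{X_{\rm A}})=\rho$ and ${\rm W}_c(D',D_{X_{\rm O}})={\rm W}_c(D_{X_{\rm O}},D_{X_{\rm A}})-\rho$ and thus sidesteps the $\varepsilon$-bookkeeping you flag. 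Your coupling-based justification of $|\mathbb{E}_D h - \mathbb{E}_{D'} h| \leq L_c\,{\rm W}_c(D,D')$ is also fine and if anything slightly more elementary than invoking Kantorovich--Rubinstein duality as the paper does.
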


The bias term $\alpha L_{c}  \max\{{\rm W}_c(D_{X_{\rm O}},D_{X_{\rm A}})-\rho,0\}=0$ when $\rho$ is large enough. Hence, a large $\rho$ implies a small estimation error. Although a larger $\rho$ leads to better generalization ability, the approximate risk $\min_{\mathbf{w} \in \mathcal{W}} R_D(\mathbf{w};\rho)$ may become larger. It implies that for practical effectiveness, i.e., small $R_D(\widehat{\mathbf{w}})$, there is a {trade-off} between the approximate risk $\min_{\mathbf{w} \in \mathcal{W}} R_D(\mathbf{w};\rho)$ and the bias $\alpha L_{c}  \max\{{\rm W}_c(D_{X_{\rm O}},D_{X_{\rm A}})-\rho,0\}$ across different choices of $\rho$. Hence, we need to choose a proper $\rho$ for open-world detection with unseen data {(cf., Section~\ref{sec: ablation})}.

% Furthermore, when we enlarge the radius $\rho$ for the Wasserstein ball, the bias term shrinks and the real-world detection power improves. If the value of $\rho$ is sufficiently large, we can cover the real OOD distribution in our learning framework, and the generalizable OOD detection is certifiable. 

\subsection{Proposed Algorithm} \label{sec: alg}

\begin{algorithm}[!t]
   \caption{Distributional-Augmented OOD Learning (DAL)}
   \label{alg: DOE}
%   \vspace{1mm}
\begin{algorithmic}
  \STATE {\bfseries Input:} ID and OOD samples from $D_{X_{\rm I}Y_{\rm I}}$ and $D_{X_{\rm A}}$;
  \FOR{$\texttt{st}=1$ \textbf{to} \texttt{num\_step}}
  \STATE Sample $S_{\rm B}$ and $T_{\rm B}$ from $D_{X_{\rm I}Y_{\rm I}}$ and $D_{X_{\rm A}}$;
  \STATE Initialize $\mathbf{p}^i\sim \mathcal{N}(\mathbf{0},\sigma I ),~\forall~i\in\{1,\ldots,|T_{\rm B}|\}$;
  \FOR{$\texttt{se}=1$ \textbf{to} \texttt{num\_search}}
        \STATE $\boldsymbol{\psi}^i=\nabla_{\mathbf{p}^i}\left[\ell_\text{OE}\left(\textbf{h}(\textbf{e}(\mathbf{x}_{\rm A}^i+\mathbf{p}^i);\textbf{e}(\mathbf{x}_{\rm A}^i)\right)-\gamma \left\| \mathbf{p}^i \right\|_1\right],~\forall~i\in\{1,\ldots,|T_{\rm B}|\}$;
        \STATE $\mathbf{p}^i\leftarrow \mathbf{p}^i + \texttt{ps} \boldsymbol{\psi}^i,~\forall~i\in\{1,\ldots,|T_{\rm B}|\}$
  \ENDFOR 
 \STATE $\gamma\leftarrow\min\big(\max\big(\gamma-\beta(\rho-\frac{1}{|T_{\rm B}|}\sum_{i=1}^{|T_{\rm B}|} \| \mathbf{p}^i \|, \gamma_\text{max}\big),0\big)$; 
 \STATE $\mathbf{w}\leftarrow \mathbf{w} - \texttt{lr} \nabla_\mathbf{w}\big[\frac{1}{|T_{\rm B}|}\sum_{i=1}^{|T_{\rm B}|} \ell_\text{OE}(\textbf{h}(\textbf{g}(\mathbf{x}_{\rm A}^i)+\mathbf{p}^i)) + \alpha \frac{1}{|S_{\rm B}|}\sum_{i=1}^{|S_{\rm B}|} \ell(\textbf{f}_\mathbf{w};\mathbf{x}_{\rm I}^i,y_{\rm I}^i)\big)]$;
 \ENDFOR
 \STATE {\bfseries Output:} model parameter $\mathbf{w}$.
\end{algorithmic}
\end{algorithm}

In this section, we introduce the algorithm design for DAL, summarized in Algorithm~\ref{alg: DOE}. Due to the space limit, we provide further discussions in Appendix~\ref{app:alg}. 

\textbf{Losses and Cost Function.} Following~\citet{HendrycksMD19}, we adopt the cross entropy loss to realize $\ell$ and the KL-divergence between model predictions and uniform distribution for $\ell_{\rm OE}$. We also define the cost function $c$ by the $l_1$ norm, namely, $c(\mathbf{x},\mathbf{x}') = \|\mathbf{x}-\mathbf{x}'\|_{1}$.

\textbf{Algorithm Design.} By Theorem~\ref{T-dual}, we can address the primary problem in Eq. \eqref{Eq::Objective} by the dual problem in Eq.~\eqref{eq: dro_oe2}. Additionally, following~\citet{du2022vos}, we perturb for the worst OOD data in the embedding space. Denote the model $\mathbf{f}_\mathbf{w}=\mathbf{h}\circ \mathbf{e}$ with $\mathbf{h}$ the classifier and $\mathbf{e}$ the feature extractor, we find the perturbation $\mathbf{p}$ for the embedding features, i.e., $\mathbf{e}(\mathbf{x})$, of the associated data $\mathbf{x}$. The perturbation $\mathbf{p}$ should lead to the worst OOD case for the surrogate function in Theorem \ref{T-dual}, namely,
\begin{equation*} %\label{eq: surrogate_emb}
\begin{split}
    & \phi_{\gamma}(\mathbf{w}; \mathbf{e}{(\mathbf{x})})= \sup_{\mathbf{p}\in\mathcal{E}} \left\{ 
 \ell_{\rm OE}(\mathbf{h}(\mathbf{e}(\mathbf{x})+\mathbf{p}); \mathbf{e}(\mathbf{x}))-\gamma  \|\mathbf{p}\|_1\right\},
\end{split}
\end{equation*}
where $\mathcal{E}$ denotes the space of embedding features. Note that we abuse the definition of  $\ell_{\rm OE}$, emphasizing that we perturb the embedding features of $\mathbf{e}(\mathbf{x})$ by $\mathbf{p}$. 

\textbf{Training and Inference.} Our definition of $\phi_{\gamma}(\mathbf{w}; \mathbf{e}{(\mathbf{x})})$ leads to a particular realization of Eq.~\eqref{Eq::dual}, which is the learning objective of our DAL. It can be solved by stochastic gradient optimization for deep models, e.g., mini-batch stochastic gradient descent. After training, we use the MSP scoring function by default and discuss the possibility of other scoring functions in Appendix~\ref{app: other scoring}. 

% we adopt  (ReAct)~\citep{sun2021react} as the scoring function, which is an advanced improving method over MSP. Overall, ASH suggests the post-hoc modification for the penultimate layer of the model, clamping the activation patterns that can mislead OOD detection, namely,
% \begin{equation}
    % s_\text{ReAct}(\mathbf{x};\mathbf{f}) = \max_{k\in \mathcal{Y}}~\texttt{softmax}_k~\mathbf{h}(\max(\mathbf{e}(\mathbf{x}),\tau)),
 %\end{equation}
% where $\tau$ is the rectifying threshold applied element-wise to the feature vector $\mathbf{e}(\mathbf{x})$. In particular, following~\citep{sun2021react}, $\tau$ is chosen based on the $\zeta$-th percentile of activation outputs estimated on ID training data. In \textcolor{blue}{Appendix~\ref{}}, we also provide the results for other choices of scoring functions. 

\textbf{Stochastic Realization.}
Algorithm~\ref{alg: DOE} gives a stochastic realization of DAL, where ID and auxiliary OOD mini-batches are randomly sampled in each stochastic iteration, denoted by $S_{\rm B}$ and $T_{\rm B}$ respectively. Therein, we first find the perturbation $\mathbf{p}$ that leads to the maximal $\phi_\gamma(\mathbf{w},\mathbf{e}(\mathbf{x}))$. The value of $\mathbf{p}$ is initialized by random Gaussian noise with the standard deviation $\sigma$ and updated by gradient ascent for $\texttt{num}\_\texttt{search}$ steps with the perturbation strength $\texttt{ps}$. Then we update $\gamma$ by one step of gradient descent with the learning rate $\beta$, and further clipping between $0$ and $\gamma_\text{max}$ to avoid extreme values. Finally, given the proper perturbations for the auxiliary OOD data in $T_{\rm B}$, we update the model parameter $\mathbf{w}$ by one step of mini-batch gradient descent. 

\vspace{-6pt}
\section{Experiments} \label{sec: experiment}

In this section, we mainly test DAL on the CIFAR~\citep{krizhevsky2009learning} benchmarks (as ID datasets). To begin with, we introduce the evaluation setups. 

\vspace{-1pt}
\textbf{OOD Datasets.} We adopt the 80 Million Tiny Images~\citep{torralba200880} as the auxiliary OOD dataset; Textures~\citep{cimpoi2014describing}, SVHN~\citep{netzer2011reading}, Places$365$~\citep{ZhouLKO018}, LSUN~\citep{yu2015lsun}, and iSUN~\citep{xu2015turkergaze} as the (test-time) real OOD datasets. We eliminate those data whose labels coincide with ID cases.

\vspace{-1pt}
\textbf{Pre-training Setups.} We employ Wide ResNet-40-2~\citep{zagoruyko2016wide} trained for $200$ epochs via empirical risk minimization, with a batch size $64$, momentum $0.9$, and initial learning rate $0.1$. The learning rate is divided by $10$ after $100$ and $150$ epochs. 

\vspace{-1pt}
\textbf{Hyper-parameters Tuning Strategy.} The hyper-parameters are tuned based on the validation data, separated from the training ID  and auxiliary OOD data, which is a common strategy in OOD detection with outlier exposure field~\citep{HendrycksMD19,chen2021atom}. Specifically, we fix $\sigma=0.001$, $\texttt{num}\_\texttt{search}=10$, and adopt the grid search to choose $\gamma_\text{max}$ from $\{0.1,0.5,1,5,10,50\}$; $\beta$ from $\{1e^{-3},5e^{-3},1e^{-2},5e^{-2},1e^{-1},5e^{-1},1,5\}$; $\rho$ from $\{1e^{-2},1e^{-1},1,10,100\}$; $\texttt{ps}$ from $\{1e^{-3},1e^{-2},1e^{-1},1,10,100\}$; $\alpha$ from $\{0.1,0.5,1.0,1.5,2.0\}$. 

 \vspace{-1pt}
\textbf{Hyper-parameters Setups.} 
For CIFAR-10, DAL is run for $50$ epochs with the ID batch size $128$, the OOD batch size $256$, the initial learning rate $0.07$, $\gamma_\text{max}=10$, $\beta=0.01$, $\rho=10$, $\texttt{ps}=1$, and $\alpha=1$. 
For CIFAR-100, DAL is run for $50$ epochs with the ID batch size $128$, the OOD batch size $256$, the initial learning rate $0.07$, $\gamma_\text{max}=10$, $\beta=0.005$, $\rho=10$, and $\texttt{ps}=1$, and $\alpha=1$.  
For both cases, we employ cosine decay~\citep{LoshchilovH17} for the model learning rate. 

\vspace{-1pt}
\textbf{Baseline Methods.} We compare DAL with representative methods, including MSP~\citep{hendrycks2016baseline}, Free Energy~\citep{liu2020energy}, ASH~\citep{djurisic2022extremely}, ReAct~\citep{sun2021react}, Mahalanobis~\citep{lee2018simple}, KNN~\citep{SunM0L22}, KNN+~\citep{SunM0L22}, CSI~\citep{Tack20CSI}, VOS~\citep{du2022vos}, Outlier Exposure (OE)~\citep{HendrycksMD19}, Energy-OE~\citep{liu2020energy}, ATOM~\citep{chen2021atom}, DOE~\citep{wang2023out}, and POEM~\citep{MingFL22}. We adopt their suggested setups but unify the backbones for fairness. 

% MaxLogit~\citep{hendrycks2022scaling}\SL{This is a theoretically misleading method that I'd suggest removing. MaxLogit doesn't have an interpretation and connection to $\log p(x)$. A similar argument is made here \url{https://arxiv.org/abs/2112.00787} on why we should use \text{log sum exp} rather than max.}
\vspace{-1pt}
\textbf{Evaluation Metrics.} The detection performance is evaluated via two representative metrics, which are both threshold-independent: the false positive rate of OOD data when the true positive rate of ID data is at $95\%$ (FPR$95$); and the {area under the receiver operating characteristic curve} (AUROC), which can be viewed as the probability of the ID case having greater score than that of the OOD case. 

\vspace{-1pt}
Due to the space limit, we test our DAL with more advanced scoring strategies in Appendix~\ref{app: other scoring} and conduct experiments on the more complex ImageNet~\citep{deng2009imagenet} dataset in Appendix~\ref{app: imagenet results}.

\begin{table*}[t]
\caption{Comparison between our method and advanced methods on the CIFAR benchmarks.  $\downarrow$ (or $\uparrow$) indicates smaller (or larger) values are preferred, and a bold font indicates the best result in a column. Methods are grouped based on 1) using ID data only and 2) using additional information about auxiliary OOD data. Two groups are separated by the horizontal line for each ID case.} \label{tab: full} \vspace{4pt}
\resizebox{\linewidth}{!}{
\begin{tabular}{c|cccccccccccc}
\toprule[1.5pt]
\multirow{2}{*}{Method} & \multicolumn{2}{c}{SVHN} & \multicolumn{2}{c}{LSUN} & \multicolumn{2}{c}{iSUN} & \multicolumn{2}{c}{Textures} & \multicolumn{2}{c}{Places365} & \multicolumn{2}{c}{\textbf{Average}} \\
\cline{2-13}
& FPR95 $\downarrow$ & AUROC $\uparrow$ & FPR95 $\downarrow$ & AUROC $\uparrow$ & FPR95 $\downarrow$ & AUROC $\uparrow$ & FPR95 $\downarrow$ & AUROC $\uparrow$ & FPR95 $\downarrow$ & AUROC $\uparrow$ & FPR95 $\downarrow$ & AUROC $\uparrow$ \\
\midrule[1pt]
\multicolumn{13}{c}{\cellcolor{greyC} CIFAR-10} \\
\midrule[0.6pt]
\multicolumn{13}{c}{Using ID data only} \\ \hline
MSP         & 48.89 & 91.97 & 25.53 & 96.49 & 56.44 & 89.86 & 59.68 & 88.42 & 60.19 & 88.36 & 50.15 & 91.02 \\
Free Energy & 35.21 & 91.24 &  4.42 & 99.06 & 33.84 & 92.56 & 52.46 & 85.35 & 40.11 & 90.02 & 33.21 & 91.64 \\
ASH         & 33.98 & 91.79 &  4.76 & 98.98 & 34.38 & 92.64 & 50.90 & 86.07 & 40.89 & 89.79 & 32.98 & 91.85 \\
Mahalanobis & 12.21 & 97.70 & 57.25 & 89.58 & 79.74 & 77.87 & 15.20 & 95.40 & 68.81 & 82.39 & 46.64 & 88.59 \\
KNN         & 26.56 & 95.93 & 27.52 & 95.43 & 33.55 & 93.15 & 37.62 & 93.07 & 41.67 & 91.21 & 33.38 & 93.76 \\
KNN+        &  3.28 & 99.33 &  2.24 & 98.90 & 17.85 & 95.65 & 10.87 & 97.72 & 30.63 & 94.98 & 12.97 & 97.32 \\
CSI         & 17.37 & 97.69 &  6.75 & 98.46 & 12.58 & 97.95 & 25.65 & 94.70 & 40.00 & 92.05 & 20.47 & 96.17 \\
VOS         & 36.55 & 93.30 &  9.98 & 98.03 & 28.93 & 94.25 & 52.83 & 85.74 & 39.56 & 89.71 & 33.57 & 92.21  \\
\midrule[0.6pt]
\multicolumn{13}{c}{Using ID data and auxiliary OOD data} \\ \hline
OE          &  2.36 & 99.27 &  1.15 & \textbf{99.68} &  2.48 & 99.34 &  5.35 & 98.88 & 11.99 & 97.23 &  4.67 & 98.88 \\
Energy-OE   &  0.97 & 99.54 &  1.00 & 99.15 &  2.32 & 99.27 &  3.42 & \textbf{99.18} &  9.57 & 97.44 &  3.46 & 98.91 \\
ATOM        &  1.00 & 99.59 &  0.61 & 99.53 &  2.15 & \textbf{99.40} &  2.52 & 99.10 &  7.93 & 97.27 &  2.84 & 98.97 \\
DOE         &  1.80 & 99.37 &  \textbf{0.25} & 99.65 &  2.00 & 99.36 &  5.65 & 98.75 & 10.15 & 97.28 &  3.97 & 98.88 \\
POEM        &  1.20 & 99.53 &  0.80 & 99.10 &  \textbf{1.47} & 99.26 &  2.93 & 99.13 &  7.65 & 97.35 &  2.81 & 98.87 \\
% GOOD        & \\
\rowcolor[HTML]{EFEFEF} DAL & \textbf{0.80} & \textbf{99.65} & 0.90 & 99.46 & 1.70 & 99.34 & \textbf{2.30} & 99.14 & \textbf{7.65} & \textbf{97.45} & \textbf{2.68} & \textbf{99.01}  \\
\midrule[1.5pt]
\multicolumn{13}{c}{{\cellcolor{greyC}  CIFAR-100}} \\
\midrule[0.8pt]
\multicolumn{13}{c}{Using ID data only} \\ \hline
MSP         & 84.39 & 71.18 & 60.36 & 85.59 & 82.63 & 75.69 & 83.32 & 73.59 & 82.37 & 73.69 & 78.61 & 75.95 \\
Free Energy & 85.24 & 73.71 & 23.05 & 95.89 & 81.11 & 79.02 & 79.63 & 76.35 & 80.18 & 75.65 & 69.84 & 80.12 \\
ASH         & 70.09 & 83.56 & 13.20 & 97.71 & 69.87 & 82.56 & 63.69 & 83.59 & 79.70 & 74.87 & 59.31 & 84.46 \\
Mahalanobis & 51.00 & 88.70 & 91.60 & 69.69 & 38.48 & 91.86 & 47.07 & 89.09 & 82.70 & 74.18 & 72.37 & 82.70 \\
KNN         & 52.10 & 88.83 & 68.82 & 79.00 & 42.17 & 90.59 & 42.79 & 89.07 & 92.21 & 61.08 & 59.62 & 81.71 \\
KNN+        & 32.50 & 93.86 & 47.41 & 84.93 & 39.82 & 91.12 & 43.05 & 88.55 & 63.26 & 79.28 & 45.20 & 87.55 \\
CSI         & 64.50 & 84.62 & 25.88 & 95.93 & 70.62 & 80.83 & 61.50 & 86.74 & 83.08 & 77.11 & 61.12 & 95.05 \\
VOS         & 78.06 & 92.59 & 40.40 & 92.90 & 85.77 & 70.20 & 82.46 & 77.22 & 82.31 & 75.47 & 73.80 & 91.67 \\
\midrule[0.8pt]
\multicolumn{13}{c}{Using ID data and auxiliary OOD data} \\ \hline
OE          & 46.73 & 90.54 & 16.30 & 96.98 & 47.97 & 88.43 & 50.39 & 88.27 & 54.30 & 87.11 & 43.14 & 90.27  \\
Energy-OE   & 35.34 & 94.74 & 16.27 & 97.25 & 33.21 & 93.25 & 46.13 & 90.62 & 50.45 & 90.04 & 36.28 & 93.18  \\
ATOM        & 24.80 & 95.15 & 17.83 & 96.76 & 47.83 & 91.06 & 44.86 & \textbf{91.80} & 53.92 & 88.88 & 37.84 & 92.73  \\
DOE         & 43.10 & 91.83 & 13.95 & \textbf{97.56} & 47.25 & 87.88 & 49.40 & 88.62 & 51.05 & 88.08 & 40.95 & 90.79  \\
POEM        & 22.27 & \textbf{96.28} & \textbf{13.66} & 97.52 & 42.46 & 91.97 & 45.94 & 90.42 & 49.50 & 90.21 & 34.77 & 93.28 \\
% GOOD        & \\
\rowcolor[HTML]{EFEFEF}  DAL      & \textbf{19.35} & 96.21 & 16.05 & 96.78 & \textbf{26.05} & \textbf{94.23} & \textbf{37.60} & 91.57 & \textbf{49.35} & \textbf{90.81} & \textbf{29.68} & \textbf{93.92}  \\
\bottomrule[1.5pt]   
\end{tabular}}
\end{table*}

\vspace{-5pt}
\subsection{Main Results}

The main results are summarized in Table~\ref{tab: full}, where we report the detailed results across the considered real OOD datasets. First, we reveal that using auxiliary OOD data can generally lead to better results than using only ID information, indicating that outlier exposure remains a promising direction worth studying. However, as demonstrated in Section~\ref{sec: motivation},  the OOD distribution discrepancy can hurt its open-world detection power, while previous works typically oversee such an important issue. Therefore, our DAL, which can alleviate the OOD distribution discrepancy, reveals a large improvement over the original outlier exposure. Specifically, comparing with the conventional outlier exposure, our method reveals {1.99 and 0.13} average improvements w.r.t. FPR95 and AUROC on the CIFAR-10 dataset, and  {13.46 and 3.65} of the average improvements on CIFAR-100 dataset. For advanced works that consider the OOD sampling strategies, e.g., ATOM and POEM, DAL can achieve much better results, especially for the CIFAR-100 case. The reason is that these methods mainly consider the situations where the model capacity is not enough to learn from all the auxiliary OOD data, deviating from our considered issue in OOD distribution discrepancy. Moreover, for the previous works that adopt similar concepts in the worst-case OOD learning, e.g., VOS and DOE, DAL also reveals better results, with {1.29 and 30.89} improvements on the CIFAR-10 dataset and {11.27 and 44.12} improvements on the CIFAR-100 dataset w.r.t. FPR95. It indicates that our theoretical-driven scheme can also guide the algorithm designs with practical effectiveness. Note that many previous works use advanced scoring strategies other than MSP, and thus our experiment above is not completely fair to us. Therefore, in Appendix~\ref{app: other scoring}, we also combine DAL with many advanced scoring strategies other than MSP, which can further improve our performance. 

% \SL{For Table 2, Should we use consistent baselines as Table 1?}
%\vspace{-5pt}
\begin{wraptable}{r}{0.6\textwidth}
\vspace{-0.5cm}
\caption{Comparison between our method and advanced methods on hard OOD detection.  $\downarrow$ (or $\uparrow$) indicates smaller (or larger) values are preferred, and a bold font indicates the best result in a column.} \label{tab: cifar hard}
\vspace{7pt}
\centering\resizebox{\linewidth}{!}{
\begin{tabular}{c|cc|cc|cc}
\toprule[1.5pt]
\multirow{2}{*}{Methods} & \multicolumn{2}{c|}{LSUN-Fix}            & \multicolumn{2}{c|}{ImageNet-Resize} & \multicolumn{2}{c}{CIFAR-100}           \\
 \cline{2-7} 
 & FPR$95$ $\downarrow$ & AUROC $\uparrow$ & FPR$95$ $\downarrow$ & AUROC $\uparrow$ & FPR$95$ $\downarrow$ & AUROC $\uparrow$ \\
\midrule[1pt]
\multicolumn{7}{c}{Using ID data only} \\ \hline
Free Energy &  6.42 & 98.85 & 46.46 & 89.02 & 50.47 & 87.08 \\
ASH         &  4.00 & 98.20 & 46.18 & 88.85 & 54.31 & 83.71 \\
KNN+        & 24.88 & 95.75 & 30.52 & 94.85 & 40.00 & 89.11 \\
CSI         & 39.79 & 93.63 & 37.47 & 93.93 & 45.64 & 87.64 \\
\midrule[0.8pt]
\multicolumn{7}{c}{Using ID data and auxiliary OOD data} \\ \hline
OE          &  1.75 & 99.47 &  6.76 & 98.58 & 29.40 & 94.20 \\
DOE         &  1.97 & 98.71 &  5.98 & 98.75 & 29.75 & 94.24 \\
POEM        &  \textbf{1.24} & 98.93 &  6.56 & 98.37 & 35.11 & 91.80 \\
\rowcolor[HTML]{EFEFEF}  DAL         &  1.39 & \textbf{99.47} &  \textbf{5.60} & \textbf{98.80} & \textbf{25.45} & \textbf{94.34} \\
 \bottomrule[1.5pt]
\end{tabular}}
\end{wraptable}

\subsection{Hard OOD Detection}
We further consider hard OOD scenarios~\citep{SunM0L22}, of which the test OOD data are very similar to that of the ID cases. Following the common setup~\citep{SunM0L22} with the CIFAR-$10$ dataset being the ID case, we evaluate our DAL on three hard OOD datasets, namely, LSUN-Fix~\citep{yu2015lsun}, ImageNet-Resize~\citep{deng2009imagenet}, and CIFAR-$100$. Note that data in ImageNet-Resize ($1000$ classes) with the same semantic space as Tiny-ImageNet ($200$ classes) are removed. We select a set of strong baselines that are competent in hard OOD detection, summarizing the experiments in Table~\ref{tab: cifar hard}. As we can see, our method can beat these advanced methods across the considered datasets, even for the challenging CIFAR-$10$ versus CIFAR-$100$ setting. The reason is that our distributional augmentation directly learns from OOD data close to ID pattern, which can cover hard OOD cases.

\subsection{Ablation Study} \label{sec: ablation}

We further conduct an ablation study to demonstrate two mechanisms that mainly contribute to our open-world effectiveness, namely, OOD data generation and Wasserstein ball constraint.

\textbf{OOD Data Generation.} DAL learns from the worst OOD data to mitigate the OOD distribution discrepancy. To understand such an OOD generation scheme,  we employ the t-SNE visualization~\citep{van2008visualizing} for the ID, the auxiliary OOD, and the worst OOD data. Figure~\ref{fig:tsne} summarizes the results before and after DAL training. Before training, the ID and auxiliary OOD data overlap largely, indicating that the original model is not effective at distinguishing between them. Then, DAL does not directly train the model on auxiliary OOD data but instead perturbs it to further confuse the model beyond the overlap region. After DAL training, the overlap region between ID and auxiliary OOD data shrinks. Additionally, perturbing the original OOD data becomes more difficult, indicating that the model has learned to handle various worst-case OOD scenarios.

\begin{figure}
   \begin{minipage}{0.49\textwidth}
     \centering
     \includegraphics[width=\linewidth]{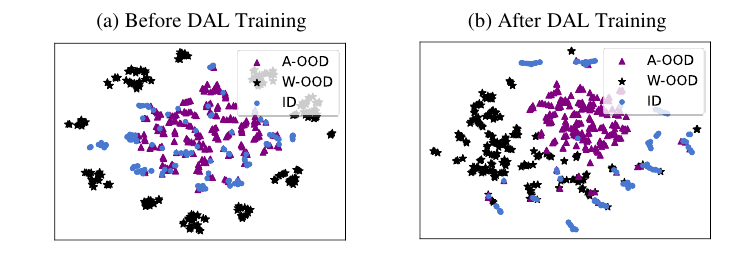}
     \caption{{Illustrations of embedding features for the ID, the auxiliary OOD (A-OOD), and the worst OOD (W-OOD) data. We adopt the t-SNE visualization on the CIFAR-10 dataset and illustrate the results before and after DAL training.}}\label{fig:tsne}
   \end{minipage}\hfill
   \begin{minipage}{0.49\textwidth}
     \centering
     \includegraphics[width=\linewidth]{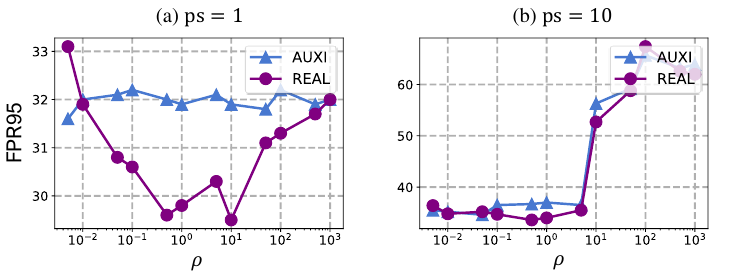}
     \caption{FPR95 curves across various $\rho$ on the CIFAR-100. We report both the results for the average real OOD data (REAL) and the auxiliary OOD data (AUXI), where we consider two hyper-parameter setups, i.e., $\texttt{ps}=1$ and $\texttt{ps}=10$.}\label{fig:my_label}
   \end{minipage}
\end{figure}

\textbf{Wasserstein Ball Constraint.} The choice of $\rho$ determines the radius of the Wasserstein ball. Larger values of $\rho$ reduce estimation error and improve model generalization, as stated in Theorem~\ref{T3}. However, larger values of $\rho$ also increase the approximate risk $\min_{\mathbf{w} \in \mathcal{W}} R_D(\mathbf{w};\rho)$ as it becomes more challenging to ensure uniform model performance with increased distributional perturbation. Figure~\ref{fig:my_label} shows the FPR95 curves on the CIFAR-100 dataset for both the real and the surrogate OOD data, revealing the trade-off in selecting $\rho$. Here, we consider two setups of $\rho$, i.e., $\texttt{ps}=1$ (default) and $\texttt{ps}=10$ (large perturbation strength). First, when the perturbation strength is very large (i.e., $\texttt{ps}=10$), the model can easily fail for training if the value of $\rho$ is also large (e.g., $\rho=100$), indicating that large value of $\rho$ can lead to a large approximation error. However, such an issue can be overcome by selecting a relatively small value of $\rho$ (e.g., $\rho=0.5$). % Furthermore, we can observe a trade-off between estimation error and the approximate risk for the both cases, reflecting by the U-shape curves for the real OOD data that corresponds to the expected risk $R_D(\widehat{\mathbf{w}})$.

\section{Conclusion}
Outlier exposure is one of the most powerful methods in OOD detection, but the discrepancy between the auxiliary and (unseen) real OOD data can hinder its practical effectiveness. To address such an issue, we have formalized it as the OOD distribution discrepancy and developed an effective learning framework to mitigate its negative impacts. Specifically, we consider a specific distribution set that contains all distributions in a Wasserstein ball centered on the auxiliary OOD distribution. Then, models trained over worst-case OOD data in the ball can ensure improved performance toward open-world OOD detection. Overall, as pioneers in critically analyzing the open-world setting with theoretical analysis, we are committed to raising attention to the OOD distribution discrepancy issue and encouraging further research in this direction.

\begin{ack}
QZW, YGZ, and BH were supported by the NSFC Young Scientists Fund No. 62006202, NSFC General Program No. 62376235, Guangdong Basic and Applied Basic Research Foundation No. 2022A1515011652, HKBU Faculty Niche Research Areas No. RC-FNRA-IG/22-23/SCI/04, and HKBU CSD Departmental Incentive Scheme. FL was supported by Australian Research Council (ARC) under Award No. DP230101540, and by NSF and CSIRO Responsible AI Program under Award No. 2303037. YXL was supported by the AFOSR Young Investigator Program under award number FA9550-23-1-0184, National Science Foundation (NSF) Award No. IIS-2237037 \& IIS-2331669, and Office of Naval Research under grant number N00014-23-1-2643. 
\end{ack}

{
\bibliography{reference}
\bibliographystyle{unsrtnat}
}

%%%%%%%%%%%%%%%%%%%%%%%%%%%%%%%%%%%%%%%%%%%%%%%%%%%%%%%%%%%%

\clearpage
\appendix
\onecolumn

\section{Related Works}
\label{app: related works}
In this section, we discuss the related studies in OOD detection. 

\textbf{OOD Detection Methods.} Existing works in OOD detection can be mainly categorized into three categories, namely, the post-hoc methods, the representation-based methods, and the outlier exposure. For the post-hoc methods, they believe a well-trained ID classifier can already lead to effective OOD detection~\citep{hendrycks2016baseline}, given proper scoring functions to indicate ID and OOD cases. Existing scoring functions are built upon the classifiers, taking logit outputs~\citep{hendrycks2016baseline,LiangLS18,liu2020energy,sun2021react,wang2021can,lakshminarayanan2017simple,wang2021can,HuangL21}, embedding features~\citep{lee2018simple,sastry2019detecting,wang2022vim,lin2021mood,SunM0L22,morteza2022provable,10107906}, or gradient information~\citep{huang2021importance,LiangLS18,igoe2022useful} as its inputs and returning a score value to indicate the confidence for an ID case. Recent works focus on adaptation strategies for specific tasks~\citep{huang2021importance,zhu2023unleashing} and non-parametric approaches~\citep{SunM0L22}, which may motivate future works.

Other works believe that training procedures are indispensable in OOD detection. For representation-based methods, researchers assume that a good ID representation is all we need for effective OOD detection. Therein, researchers study contrastive learning methods~\citep{SehwagCM21,wang2022partial}, data augmentation~\citep{Tack20CSI,zheng2023out},  constraints on embedding features~\citep{du2022siren, ming2022cider,zaeemzadeh2021ood} or model output~\citep{wei2022mitigating}. However, some of the adopted scoring functions in representation-based methods are complex. It can make us overestimate the true effects of representation learning, which may require further studies. For outlier exposure, related methods directly make the model learn from OOD data with low OOD score predictions~\citep{HendrycksMD19,liu2020energy,huang2023harnessing}. Related works studies sampling strategies~\citep{zhu2023diversified,MingFL22,chen2021atom}, adversarial robust learning~\citep{li2020background,lee2018simple,0001AB19}, meta learning~\citep{jeong2020ood}, and regularization strategies~\citep{van2020uncertainty}. Other works consider the situations where OOD data are inaccessible, studying various outlier synthesis strategies~\citep{LeeLLS18,vernekar2019out,du2022vos,tao2023nonparametric}. Although outlier exposure typically reveals promising results, the difference between auxiliary and real OOD data largely hinders its real-world detection power, similar to conclusions in domain adaptation~\citep{10107906,LuoWHB20}.

\textbf{OOD Detection Theory.} \citet{zhang2021understanding} gives an explanation of why there exist OOD data that have higher probabilities or densities
than the data from the ID distribution in the deep generative models. \citet{zhang2021understanding} understands OOD detection via goodness-of-fit tests and points out that  OOD detection should be defined based on the
data distribution’s typical set if we hope OOD detection can be successful. \citet{morteza2022provable} develops a novel unified framework that helps researchers to understand the theoretical connections among some representative OOD detection methods.
\citet{DBLP:conf/icml/FangLLL021, fang2022learnable} develop the probably approximately correct
(PAC) learning theory for OOD detection and gives a series of sufficient and necessary conditions for the PAC learnability of OOD detection. \citet{fang2022learnable} has proven that although OOD detection cannot be PAC learnable in the distribution-free case, OOD detection can be successful in many practical scenarios. Note that \citet{zhang2021understanding}, \citet{morteza2022provable}, and \citet{fang2022learnable} all consider the case that the auxiliary OOD data are unavailable. Therefore, to ensure the leanability of OOD detection, some strong conditions are necessary \citep{zhang2021understanding, fang2022learnable}. To explore the outlier exposure case in OOD detection, \citet{DBLP:conf/icml/BitterwolfMA022} shows that several representative OOD detection methods that
optimize an objective that includes predictions on auxiliary OOD data are equivalent to the binary discriminator. Compared to 
\citet{zhang2021understanding,morteza2022provable,fang2022learnable}, our paper mainly focuses on the case that the auxiliary OOD data are available. Using the auxiliary OOD data, we can weaken the strong conditions proposed by \citet{fang2022learnable} and provide more reasonable and practical learning bounds for OOD detection. Compared to \citet{DBLP:conf/icml/BitterwolfMA022}, our theory mainly focuses on the learnability of OOD detection in the outlier exposure case and provides theoretical support to our practical method.

\newpage
\section{Notations}\label{app: notations}

In this section, we summarize the important notations in Table~\ref{tab: notation}. 

\begin{table}[h]
    \centering
    \small
    \caption{Main notations and their descriptions.}
    \begin{tabular}{cl}
    \toprule[1.5pt]
         \multicolumn{1}{c}{Notation} & \multicolumn{1}{c}{Description} \\
    \midrule[1pt]
    \multicolumn{2}{c}{\cellcolor{greyC} Spaces} \\
    $\mathcal{X}$ and $\mathcal{Y}$     & the feature space and the ID label space $\{1,\dots,C\}$ \\
    $\mathcal{W}$ & the parameter space \\ 
    $\mathfrak{D}$ & the distribution space \\
    $\mathcal{E}$ & the embedding space \\
    \multicolumn{2}{c}{\cellcolor{greyC} Distributions} \\
    $X_{\rm I}, X_{\rm A}, X_{\rm O}$ & ID feature, auxiliary OOD feature, and real OOD feature \\
    $Y_{\rm I}$ and $Y_{\rm O}$ & ID label and OOD label random variable \\
    $D_{X_{\rm I}Y_{\rm I}}$ and $D_{X_{\rm O}Y_{\rm O}}$  & ID joint distribution and OOD joint distribution \\
    $D_{X_{\rm A}}$  & the auxiliary OOD distribution \\
    $\delta$ & the dirac measure \\
    \multicolumn{2}{c}{\cellcolor{greyC} Data and Models} \\
    $S$ and $T$ & ID training data and auxiliary OOD training data \\
    $n$ and $m$ & the number of ID data and the number of auxiliary OOD data \\
    $\mathbf{x}_{\rm I}$ and $\mathbf{x}_{\rm A}$ & ID data and auxiliary OOD data\\
    $y_{\rm I}$ & ID label\\
    $\mathbf{f}_{\mathbf{w}}$ & the model: $\mathcal{X}\rightarrow\mathbb{R}^C$, parameterized by $\mathbf{w}\in\mathcal{W}$ \\
    $\mathbf{e}$ and $\mathbf{h}$ & the feature extractor and the classifier \\
    $s(\cdot;\mathbf{f})$ & the scoring function: $\mathcal{X}\rightarrow\mathbb{R}$ \\
    $g_\lambda(\cdot)$ & the OOD detector: $\mathcal{X}\rightarrow\{\text{ID, OOD}\}$, with threshold $\lambda$ \\
    \multicolumn{2}{c}{\cellcolor{greyC} Distances} \\
    $c(\cdot,\cdot)$ & the cost function: $\mathcal{X}\times\mathcal{X}\rightarrow\mathbb{R}_+$ \\
    $d(\cdot,\cdot)$ & the distance between two distributions: $\mathfrak{D}\times\mathfrak{D}\rightarrow\mathbb{R}_+$ \\
    ${\rm W}_c$ & the Wasserstein-1 distance: $\mathfrak{D}\times\mathfrak{D}\rightarrow\mathbb{R}_+$ \\
    $\rho$ & the radius of the Wasserstein ball \\
    $\|\cdot\|_p$ & $l_p$ norm \\
    \multicolumn{2}{c}{\cellcolor{greyC} Loss and Risk} \\
    $\ell$ and $\ell_{\rm OE}$ & ID loss function and OOD loss function  \\
    $R_{\rm I}(\mathbf{w})$ and $\widehat{R}_{\rm I}(\mathbf{w})$ & the expected risk and the empirical risk corresponding to $D_{X_{\rm I}Y_{\rm I}}$\\
    $R_{\rm A}(\mathbf{w})$ and $\widehat{R}_{\rm A}(\mathbf{w})$ & the expected risk and the empirical risk corresponding to $D_{X_{\rm A}}$\\
    $R_{\rm O}(\mathbf{w})$ & the expected risk corresponding to $D_{X_{\rm O}}$\\
    $R_{D}(\mathbf{w})$ & the real detection risk  corresponding to $D$\\
    $\phi_\gamma(\mathbf{w};\mathbf{x})$ & the surrogate function \\
    $R_{\rm O}(\mathbf{w};\rho)$ and $\widehat{R}_{\rm O}(\mathbf{w};\rho)$ & the expected DAL risk and the empirical one corresponding to $D_{X_{\rm A}}$\\
    $R_{D}(\mathbf{w};\rho)$ and $\widehat{R}_{D}(\mathbf{w};\rho)$ & the expected DAL risk and the empirical one corresponding to $D$\\
    \multicolumn{2}{c}{\cellcolor{greyC} Hypothesis Space} \\
    $\mathcal{F}$ and $\mathcal{F}_\text{OE}$ & the model classes with respect to $\ell$ and $\ell_{\rm OE}$ \\
    $\mathcal{N}(\cdot,\epsilon,L^{\infty})$ & the covering number \\
    \bottomrule[1.5pt]
    \end{tabular}
    
    \label{tab: notation}
\end{table}

\clearpage

%understands the OOD detection via goodness-of-fit tests and typical set hypothesis, and
% discover that relevant OOD distributions can lie in the high likelihood regions of a data distribution. \citep{zhang2021understanding} 
%argues that minimal density estimation errors can lead to OOD detection failures without assuming an overlap between ID and OOD distributions. 
% understands the OOD detection via goodness-of-fit tests and typical set hypothesis, and discovers that relevant OOD distributions can lie in the high likelihood regions of a data distribution.

%Compared to \citep{zhang2021understanding}, our theory focuses on the PAC learnable theory of OOD detection. If detectors are generated by FCNN, our theory (Theorem~\ref{overlapcase}) shows that the overlap is the sufficient condition to the failure of learnability of OOD detection, which is complementary to \citep{zhang2021understanding}. In addition, we identify several necessary and sufficient conditions for the learnability of OOD detection, which opens a door to studying OOD detection in theory.
%Beyond \citep{zhang2021understanding}, \citep{morteza2022provable} paves a new avenue to designing provable OOD detection algorithms. Compared to \citep{morteza2022provable}, our paper aims to characterize the learnability of OOD detection to answer the question: is OOD detection PAC learnable?
\newpage

\section{Proofs of Theorems}
\label{app: proofs}

We provide the detailed proofs for our theoretical results in Sections~\ref{sec: theory}.  

\subsection{Covering Number} \label{app: cn}
We use the covering number for the model classes in our derivation. Here, we give the formal definition. 

\begin{definition}[$\epsilon$-covering \citep{Vershynin2018HighDimensionalP}]
    Let $(V,\|\cdot\|)$ be a normed space, $\Theta\subset V$, and $B(\cdot,\epsilon)$ the ball of radius $\epsilon$. Then $\{V_1,\dots,V_N\}$ is an $\epsilon$-covering of $\Theta$ if $\Theta\subset\bigcup_{i=1}^N B(V_i,\epsilon)$, or equivalently, $\forall\theta\in\Theta$, $\exists i$ such that $\|\theta-V_i\|\le\epsilon$.
\end{definition}

Upon our definition of $\epsilon$-covering, the covering number is the minimal number of $\epsilon$-balls one needs to cover $\Theta$. 
\begin{definition}[Covering Number \citep{Vershynin2018HighDimensionalP}]
\begin{equation*}
    \mathcal{N}(\Theta,\|\cdot\|,\epsilon)=\min\{n: \exists ~ \epsilon\text{-covering over}~\Theta~\text{of size}~n\}.
    \end{equation*}
\end{definition}
\subsection{Proof of Lemma~\ref{T0}}
\begin{proof}[Proof of Lemma~\ref{T0}]
Because of $D_{X_{\rm A}} \in \mathfrak{D}$, according to the definite of infimum, it is clear that
\begin{equation*}
\inf_{D_{X'} \in \mathfrak{D} }d(D_{X'},D_{X_{\rm O}}) \leq d(D_{X_{\rm A}},D_{X_{\rm O}}).
\end{equation*}
To prove the second result, we consider a special distribution $D'$, which is defined as follows: for some $u\in [0,1]$,
\begin{equation*}
D' = (1-u) D_{X_{\rm O}} + u D_{X_{\rm A}}.
\end{equation*}
Because $c(\cdot,\cdot)$ is a continuous metric, Kantorovich–Rubinstein duality \citep{Villani2003TopicsIO} implies that
\begin{equation*}
\begin{split}
{\rm W}_c(D', D_{X_{\rm O}}) = &\sup_{\|f\|_{\rm Lip}\leq 1} \int_{\mathcal{X}} f(\mathbf{x}) {\rm d}D'(\mathbf{x}) - \int_{\mathcal{X}} f(\mathbf{x}) {\rm d}D_{X_{\rm O}}(\mathbf{x}) 
\\ = & u \sup_{\|f\|_{\rm Lip}\leq 1} \int_{\mathcal{X}} f(\mathbf{x}) {\rm d}D_{X_{\rm A}}(\mathbf{x}) - \int_{\mathcal{X}} f(\mathbf{x}) {\rm d}D_{X_{\rm O}}(\mathbf{x}) 
\\
=&  u {\rm W}_c(D_{X_{\rm A}},D_{X_{\rm O}})
\end{split}
\end{equation*}

Similarly, we can obtain that
\begin{equation*}
\begin{split}
&{\rm W}_c(D', D_{X_{\rm A}}) =  (1-u) {\rm W}_c(D_{X_{\rm A}},D_{X_{\rm O}})
\end{split}
\end{equation*}
\textbf{Case 1.} If ${\rm W}_c(D_{X_{\rm A}},D_{X_{\rm O}})\leq \rho$, then it is clear that
\begin{equation*}
\begin{split}
&\inf_{D_{X'} \in \mathfrak{D} }{\rm W}_{\rm c}(D_{X'},D_{X_{\rm O}}) \leq \max\{{\rm W}_{c}(D_{X_{\rm A}},D_{X_{\rm O}})-\rho, 0\}.
\end{split}
\end{equation*}

\textbf{Case 2.} If ${\rm W}_c(D_{X_{\rm A}},D_{X_{\rm O}})> \rho$, then we set $u =1- \rho / {\rm W}_c(D_{X_{\rm A}},D_{X_{\rm O}})$. Therefore,
\begin{equation*}
{\rm W}_c(D', D_{X_{\rm A}}) = \rho,~~~~~~{\rm W}_c(D', D_{X_{\rm O}})={\rm W}_c(D_{X_{\rm A}},D_{X_{\rm O}})-\rho,
\end{equation*}
which implies that
\begin{equation*}
\inf_{D_{X'} \in \mathfrak{D} }{\rm W}_{\rm c}(D_{X'},D_{X_{\rm O}}) \leq {\rm W}_c(D', D_{X_{\rm O}}) = {\rm W}_c(D_{X_{\rm A}},D_{X_{\rm O}})-\rho \leq \max\{{\rm W}_{c}(D_{X_{\rm A}},D_{X_{\rm O}})-\rho, 0\}.
\end{equation*}
We have completed this proof by combining Cases 1 and 2.
\end{proof}

%By Kantorovich–Rubinstein duality \cite{Villani2003TopicsIO}, we also obtain that
%\begin{equation*}
%\begin{split}
%&\sup_{{\rm W}_{c}(D_{X'},D_{X_{\rm A}}) \leq \rho+\delta} \mathbb{E}_{\mathbf{x}\sim D_{X'}} \ell_{\rm OE}(\theta; \mathbf{x}) - \sup_{{\rm W}_{c}(D_{X'},D_{X_{\rm A}}) \leq \rho} \mathbb{E}_{\mathbf{x}\sim D_{X'}} \ell_{\rm OE}(\theta; \mathbf{x}) \\ \leq &
 %\mathbb{E}_{\mathbf{x}\sim D_{X'}^{\delta,\epsilon}} \ell_{\rm OE}(\theta; \mathbf{x}) - \mathbb{E}_{\mathbf{x}\sim D_{X'}'} \ell_{\rm OE}(\theta; \mathbf{x})+\epsilon
 %\\ 
 %\leq & L_c \delta +\epsilon,
%\end{split}
%\end{equation*}
%which implies that
%\begin{equation*}
%\begin{split}
%\sup_{{\rm W}_{c}(D_{X'},D_{X_{\rm A}}) \leq \rho+\delta} \mathbb{E}_{\mathbf{x}\sim D_{X'}} \ell_{\rm OE}(\theta; \mathbf{x}) - \sup_{{\rm W}_{c}(D_{X'},D_{X_{\rm A}}) \leq \rho} \mathbb{E}_{\mathbf{x}\sim D_{X'}} \ell_{\rm OE}(\theta; \mathbf{x})
 %\leq  L_c \delta.
%\end{split}
%\end{equation*}

\subsection{Proof of Theorem \ref{T-dual}}
\begin{proof}[Proof of Theorem \ref{T-dual}]
One can find a similar proof from \citet{Blanchet2016RobustWP,Blanchet2016QuantifyingDM,SinhaND18}. We omit it here.
\end{proof}

\subsection{Proof of Theorem~\ref{T1}}
\begin{proof}[Proof of Theorem \ref{T1}] We first recall the notations as follows:
\begin{equation*}
R_{{\rm I}}({\mathbf{w}}) = \mathbb{E}_{(\mathbf{x},y)\sim D_{X_{\rm I}Y_{\rm I}}} \ell(\mathbf{f}_{\mathbf{w}}; \mathbf{x},y),
\end{equation*}
\begin{equation*}
\widehat{R}_{{\rm I}}(\mathbf{w}) = \frac{1}{n} \sum_{i=1}^n\ell(\mathbf{f}_{\mathbf{w}}; \mathbf{x}_{\rm I}^i,y_{\rm I}^i),
\end{equation*}
\begin{equation*}
R_{{\rm O}}(\mathbf{w};\rho) = \sup_{{\rm W}_{c}(D_{X'},D_{X_{\rm A}}) \leq \rho} \mathbb{E}_{\mathbf{x}\sim D_{X'}} \ell_{\rm OE}(\mathbf{f}_{\mathbf{w}}; \mathbf{x}),
\end{equation*}
\begin{equation*}
\widehat{R}_{{\rm O}}(\mathbf{w};\rho) = \sup_{{\rm W}_{c}(D_{X'},\widehat{D}_{X_{\rm A}}) \leq \rho} \mathbb{E}_{\mathbf{x}\sim D_{X'}} \ell_{\rm OE}(\mathbf{f}_{\mathbf{w}}; \mathbf{x}).
\end{equation*}
Let $\mathbf{w}^*$ be the solution of $\min_{\mathbf{w} \in \mathcal{W}} {R}_D(\mathbf{w}; \rho)$. Then
\begin{equation}\label{T1.1}
\begin{split}
& {R}_D(\widehat{\mathbf{w}}; \rho)- {R}_D(\mathbf{w}^*; \rho) \\
 \leq & {R}_D(\widehat{\mathbf{w}}; \rho) - \widehat{R}_D(\widehat{\mathbf{w}}; \rho)+\widehat{R}_D(\widehat{\mathbf{w}}; \rho)-{R}_D(\mathbf{w}^*; \rho)+\widehat{R}_D({\mathbf{w}}^*; \rho)-\widehat{R}_D({\mathbf{w}}^*; \rho)\\ \leq &
 [R_{\rm I}(\widehat{\mathbf{w}})-R_{\rm I}(\mathbf{w}^*)] + \alpha [R_{\rm O}(\widehat{\mathbf{w}};\rho)-R_{\rm O}(\mathbf{w}^*;\rho)] - [\widehat{R}_{\rm I}(\widehat{\mathbf{w}})-\widehat{R}_{\rm I}(\mathbf{w}^*)] - \alpha [\widehat{R}_{\rm O}(\widehat{\mathbf{w}};\rho)-\widehat{R}_{\rm O}(\mathbf{w}^*;\rho)]
 \\
 = & 
 [R_{\rm I}(\widehat{\mathbf{w}})-\widehat{R}_{\rm I}(\widehat{\mathbf{w}})] + \alpha [R_{\rm O}(\widehat{\mathbf{w}};\rho)-\widehat{R}_{\rm O}(\widehat{\mathbf{w}};\rho)]  - [R_{\rm I}(\mathbf{w}^*)-\widehat{R}_{\rm I}(\mathbf{w}^*)] - \alpha [{R}_{\rm O}(\mathbf{w}^*;\rho)-\widehat{R}_{\rm O}(\mathbf{w}^*;\rho)].
\end{split}
\end{equation}

By Lemmas \ref{L3} and \ref{L8}, we have that with the probability at least $1-2e^{-t}>0$, for any $\rho>0$,
\begin{equation}\label{T1.2}
\begin{split}
&[R_{\rm I}(\widehat{\mathbf{w}})-\widehat{R}_{\rm I}(\widehat{\mathbf{w}})] +\alpha [R_{\rm O}(\widehat{\mathbf{w}};\rho)-\widehat{R}_{\rm O}(\widehat{\mathbf{w}};\rho)] 
\\
\leq &  \frac{b_0 M_{\ell}}{\sqrt{n}}  \int_{0}^{1} \sqrt{\log\mathcal{N}(\mathcal{F},M_{\ell}\epsilon,L^{\infty})}d\epsilon 
\\
+ & \alpha b_1 \sqrt{\frac{M_{\ell_{\rm OE}}^3}{\rho^2 m}}\int_{0}^{1} \sqrt{ \log \mathcal{N}(\mathcal{F}_{\rm OE},M_{\ell_{\rm OE}}\epsilon,L^{\infty})}{\rm d} \epsilon + \alpha b_2 M_{\ell_{\rm OE}} \sqrt{\frac{2t}{m}}+M_{\ell}\sqrt{\frac{2t}{n}}.
\end{split}
\end{equation}

By Lemmas \ref{L7} and \ref{L9}, we have that with the probability at least $1-2e^{-t}>0$, for any $\rho>0$,
\begin{equation}\label{T1.3}
\begin{split}
 &[R_{\rm I}(\mathbf{w}^*)-\widehat{R}_{\rm I}(\mathbf{w}^*)]
 +\alpha [{R}_{\rm O}(\mathbf{w}^*;\rho)-\widehat{R}_{\rm O}(\mathbf{w}^*;\rho)]
 \leq  M_{\ell}\sqrt{\frac{2t}{n}} + 2\alpha M_{\ell_{\rm OE}} \sqrt{\frac{2t}{m}}.
 \end{split}
\end{equation}

Combining Eqs. \eqref{T1.1}, \eqref{T1.2} and \eqref{T1.3}, we have that with the probability at least $1-4e^{-t}>0$, for any $\rho>0$,
\begin{equation*}
\begin{split}
&{R}_D(\widehat{\mathbf{w}}; \rho)- {R}_D(\mathbf{w}^*; \rho)
\\ \leq & \frac{b_0 M_{\ell}}{\sqrt{n}}  \int_{0}^{1} \sqrt{\log\mathcal{N}(\mathcal{F},M_{\ell}\epsilon,L^{\infty})}d\epsilon 
\\
+ & \alpha b_1 \sqrt{\frac{M_{\ell_{\rm OE}}^3}{\rho^2 m}}\int_{0}^{1} \sqrt{ \log \mathcal{N}(\mathcal{F}_{\rm OE},M_{\ell_{\rm OE}}\epsilon,L^{\infty})}{\rm d} \epsilon + 2M_{\ell}\sqrt{\frac{2t}{n}}+ b_{2}\alpha M_{\ell_{\rm OE}} \sqrt{\frac{2t}{m}},
\end{split}
\end{equation*}
where $b_0$, $b_1$ and $b_2$ are uniform constants.
\end{proof}
\begin{remark}
     One can prove that the cross-entropy and the exponential losses are bounded and lipschitz w.r.t. $\mathbf{w}$ for deep models with softmax outputs~\citep{golowich2018size}, if 
     \begin{itemize}
         \item activation functions are 1-Lipschitz;
         \item inputs are from bounded feature space $\mathcal{X}$;
         \item the parameter space $\mathcal{W}$ is bounded (e.g., with regularization).
     \end{itemize}
     More specifically, when the F-norm bounds parameters, the softmax output is continuous and never attains infinity. If we further assume that inputs are from a bounded feature space, then the model is a continuous function over the bounded space, implying that model outputs have upper and lower bounds. Thus, the cross-entropy and the exponential loss can be bounded in practice, and our assumptions are practical.
\end{remark}

\subsection{Corollary~\ref{T2}}\label{C1inApp}

\begin{corollary}\label{T2} Given the same conditions in Theorem \ref{T1}, if
\begin{itemize}
    \item $\ell_{\rm OE}(\mathbf{f}_{\mathbf{w}}; \mathbf{x})$ is  $L_{\rm OE}$-Lipschitz w.r.t. norm $\|\cdot\|$, i.e.,
\begin{equation*}
|\ell_{\rm OE}(\mathbf{f}_{\mathbf{w}}; \mathbf{x})-\ell_{\rm OE}(\mathbf{f}_{\mathbf{w}'}; \mathbf{x})|\leq L_{\rm OE}\|\mathbf{w}-\mathbf{w}'\|,
\end{equation*}
    \item $\ell_{\rm OE}(\mathbf{f}_{\mathbf{w}}; \mathbf{x})$ is  $L_{c}$-Lipschitz w.r.t. $c(\cdot,\cdot)$, i.e., 
\begin{equation*}
|\ell_{\rm OE}(\mathbf{f}_{\mathbf{w}}; \mathbf{x})-\ell_{\rm OE}(\mathbf{f}_{\mathbf{w}}; \mathbf{x}')|\leq L_{c}c(\mathbf{x},\mathbf{x}'),
\end{equation*}
    \item $\ell(\mathbf{f}_{\mathbf{w}}; \mathbf{x},y)$ is  $L$-Lipschitz w.r.t. norm $\|\cdot\|$, i.e., 
\begin{equation*}
|\ell(\mathbf{f}_{\mathbf{w}}; \mathbf{x},y)-\ell(\mathbf{f}_{\mathbf{w}'}; \mathbf{x},y)|\leq L\|\mathbf{w}-\mathbf{w}'\|,
\end{equation*}
    \item the parameter space $\mathcal{W} \subset \mathbb{R}^{d'}$ satisfies that 
\begin{equation*}
{\rm diam}(\mathcal{W}) = \sup_{\mathbf{w},\mathbf{w}'\in \mathcal{W}} \|\mathbf{w}-\mathbf{w}'\|<+\infty.
\end{equation*}
\end{itemize}
Let $\widehat{\mathbf{w}}$ be the optimal solution of Eq. \eqref{Eq::Objective}, i.e.,
\begin{equation*}
\widehat{\mathbf{w}} \in \argmin_{\mathbf{w}\in \mathcal{W}}  \widehat{R}_D({\mathbf{w}}; \rho).
\end{equation*}
With the probability at least $1-4e^{-t}>0$, for any $\rho>0$,
\begin{equation*}
R_D(\widehat{\mathbf{w}};\rho) - \min_{\mathbf{w} \in \mathcal{W}} R_D({\mathbf{w}};\rho)\leq \tilde{\epsilon}(n,m;t),
\end{equation*}
where 
\begin{equation*}
\begin{split}
\tilde{\epsilon}(n,m;t)= &  {b_0}\sqrt{\frac{M_{\ell}{\rm diam}(\mathcal{W}) Ld'}{n}} 
\\
+ & \alpha b_1 \min \{L_c ,  \frac{M_{\ell_{\rm OE}} }{\rho} \} \sqrt{\frac{{\rm diam}(\mathcal{W}) L_{\rm OE}d'}{m}}
\\
+ &2M_{\ell}\sqrt{\frac{2t}{n}}+ \alpha b_{2}M_{\ell_{\rm OE}} \sqrt{\frac{2t}{m}},
\end{split}
\end{equation*}
here $b_0$, $b_1$ and $b_2$ are uniform constants.
\end{corollary}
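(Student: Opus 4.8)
\textbf{Proof proposal for Corollary~\ref{T2}.}
The plan is to instantiate the abstract covering-number quantities appearing in Theorem~\ref{T1} using the Lipschitz assumptions on $\ell$ and $\ell_{\rm OE}$ together with the boundedness of the parameter space $\mathcal{W}$. The key observation is that if $\ell(\mathbf{f}_{\mathbf{w}};\cdot)$ is $L$-Lipschitz in $\mathbf{w}$, then the model class $\mathcal{F} = \{\ell(\mathbf{f}_{\mathbf{w}};\cdot): \mathbf{w}\in\mathcal{W}\}$ is a Lipschitz image of $\mathcal{W}\subset\mathbb{R}^{d'}$, so an $\epsilon/L$-covering of $\mathcal{W}$ (in the parameter norm $\|\cdot\|$) induces an $\epsilon$-covering of $\mathcal{F}$ in $L^\infty$. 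Since $\mathcal{W}$ has finite diameter ${\rm diam}(\mathcal{W})$, the standard volumetric bound gives $\mathcal{N}(\mathcal{F},\epsilon,L^\infty)\le \big(1 + 2L\,{\rm diam}(\mathcal{W})/\epsilon\big)^{d'}$, hence $\log\mathcal{N}(\mathcal{F},M_\ell\epsilon,L^\infty)\lesssim d'\log\!\big(1 + c\,L\,{\rm diam}(\mathcal{W})/(M_\ell\epsilon)\big)$, and likewise for $\mathcal{F}_{\rm OE}$ with $L_{\rm OE}$ in place of $L$.

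Next I would bound the Dudley-type entropy integrals $\int_0^1\sqrt{\log\mathcal{N}(\mathcal{F},M_\ell\epsilon,L^\infty)}\,d\epsilon$. Substituting the logarithmic bound above, the integral $\int_0^1\sqrt{d'\log(1+a/\epsilon)}\,d\epsilon$ is a convergent integral whose value is $\mathcal{O}(\sqrt{d'})$ uniformly in $a$ over the relevant range (more precisely it scales like $\sqrt{d'}$ up to lower-order logarithmic factors, which are absorbed into the uniform constant $b_0$). Carrying the $M_\ell$ and $1/\sqrt n$ factors from Theorem~\ref{T1} through, the ID term becomes $\mathcal{O}\!\big(b_0\sqrt{M_\ell\,{\rm diam}(\mathcal{W})\,L\,d'/n}\big)$ after collecting the $M_\ell$ appropriately; the same computation on the OOD side yields $\alpha b_1\sqrt{M_{\ell_{\rm OE}}^3/(\rho^2 m)}\cdot\sqrt{d'\cdot({\rm diam}(\mathcal{W})L_{\rm OE}/M_{\ell_{\rm OE}})}$, i.e.\ $\alpha b_1 (M_{\ell_{\rm OE}}/\rho)\sqrt{{\rm diam}(\mathcal{W})L_{\rm OE}d'/m}$.

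To obtain the $\min\{L_c, M_{\ell_{\rm OE}}/\rho\}$ factor in the statement, I would give a second bound on the OOD generalization term that uses $L_c$-Lipschitzness of $\ell_{\rm OE}$ w.r.t.\ the cost $c$: under this assumption $\widehat{R}_{\rm O}(\mathbf{w};\rho)$ and $R_{\rm O}(\mathbf{w};\rho)$ differ from the ordinary auxiliary risks $\widehat{R}_{\rm A}(\mathbf{w})$, $R_{\rm A}(\mathbf{w})$ by at most $L_c\rho$ (the radius of the Wasserstein ball scaled by the Lipschitz constant), so the $\rho$-dependence in the supremum can be stripped off and the uniform deviation of $\widehat{R}_{\rm O}(\cdot;\rho)$ reduces to that of $\widehat{R}_{\rm A}(\cdot)$, whose covering-number bound carries a factor $L_c$ instead of $M_{\ell_{\rm OE}}/\rho$. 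Taking the better of the two bounds gives the minimum. Finally I would keep the concentration terms $2M_\ell\sqrt{2t/n}$ and $\alpha b_2 M_{\ell_{\rm OE}}\sqrt{2t/m}$ untouched from Theorem~\ref{T1}, collect everything, and rename constants to $b_0, b_1, b_2$.

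The main obstacle I anticipate is the bookkeeping of constants and $M$-factors when converting the entropy integrals: one must be careful that the $M_\ell\epsilon$ and $M_{\ell_{\rm OE}}\epsilon$ scaling inside the covering numbers interacts correctly with the external $M_\ell$, $M_{\ell_{\rm OE}}^{3/2}$ factors so that the final exponents on $M_\ell$ (namely $1/2$) and $M_{\ell_{\rm OE}}$ (namely $1/2$ in both branches) come out as claimed; this is routine but error-prone. The genuinely substantive step is the second OOD bound giving the $L_c$ branch of the minimum — establishing that the Wasserstein supremum contributes at most $L_c\rho$ and therefore does not blow up the complexity — but this follows cleanly from the $L_c$-Lipschitz assumption and Kantorovich–Rubinstein duality (exactly as used in Lemma~\ref{T0}), so no real difficulty is expected there either.
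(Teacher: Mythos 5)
Your overall architecture --- plug the Lipschitz-and-bounded-diameter covering bound $\mathcal{N}(\mathcal{F},M_\ell\epsilon,L^\infty)\le \mathcal{N}(\mathcal{W},M_\ell\epsilon/L,\|\cdot\|)\le(1+{\rm diam}(\mathcal{W})L/(M_\ell\epsilon))^{d'}$ into the entropy integrals of Theorem~\ref{T1}, integrate, and collect constants --- matches the paper's plan (Lemmas~\ref{L10} and~\ref{L4}), and your arithmetic for the ID term and for the $M_{\ell_{\rm OE}}/\rho$ branch of the OOD term comes out right. One small caveat: you describe $\int_0^1\sqrt{d'\log(1+a/\epsilon)}\,d\epsilon$ as ``$\mathcal{O}(\sqrt{d'})$ uniformly in $a$,'' which contradicts the $\sqrt{{\rm diam}(\mathcal{W})L}$ dependence you then correctly write in the final bound; the paper instead uses $\log(1+x)\le x$ to get the explicit $\sqrt{a}$ factor, which you should keep rather than absorb.

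The genuine gap is the $L_c$ branch of the minimum. Your proposed mechanism --- that $R_{\rm O}(\cdot;\rho)$ and $\widehat{R}_{\rm O}(\cdot;\rho)$ each differ from $R_{\rm A}$, $\widehat{R}_{\rm A}$ by at most $L_c\rho$, ``so the $\rho$-dependence can be stripped off and the uniform deviation reduces to that of $\widehat{R}_{\rm A}$'' --- does not work. If you bound $R_{\rm O}(\mathbf{w};\rho)-\widehat{R}_{\rm O}(\mathbf{w};\rho)\le [R_{\rm A}(\mathbf{w})-\widehat{R}_{\rm A}(\mathbf{w})]+L_c\rho$, you are left with an additive $L_c\rho$ error that does not shrink with $m$, and the standard uniform-convergence bound for $\widehat{R}_{\rm A}$ carries no $L_c$ or $\rho$ factor to speak of. (Alternatively, transferring between $D_{X_{\rm A}}$ and $\widehat{D}_{X_{\rm A}}$ through a Wasserstein-radius dilation would require a bound on ${\rm W}_c(\widehat{D}_{X_{\rm A}},D_{X_{\rm A}})$, whose empirical Wasserstein rate is dimension-cursed, $\mathcal{O}(m^{-1/d})$, not $m^{-1/2}$.) What the paper actually does (Lemma~\ref{L5}) is keep the $\gamma$-dependent slack $\gamma b_1\sqrt{\cdot}$ from Lemma~\ref{L2}, combine $\gamma\rho+\gamma b_1\sqrt{\cdot}=\gamma\rho_m$ to rewrite the dual as $\widehat{R}_{\rm O}(\mathbf{w};\rho_m)$ for an enlarged radius $\rho_m=\rho+b_1\sqrt{{\rm diam}(\mathcal{W})L_{\rm OE}d'/m}$, and then apply Lemma~\ref{L5.0} to bound $\widehat{R}_{\rm O}(\mathbf{w};\rho_m)-\widehat{R}_{\rm O}(\mathbf{w};\rho)\le L_c(\rho_m-\rho)$. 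That converts the $\gamma$-slack into an $L_c\sqrt{1/m}$-sized quantity, in parallel with the a-priori bound $\gamma\lesssim M_{\ell_{\rm OE}}/\rho$ that gives the other branch. This ``enlarge the Wasserstein radius by the complexity slack, then Lipschitz-control the increase'' step is the substantive idea missing from your proposal.
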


\begin{proof}[Proof of Corollary \ref{T2}]
By Lemmas \ref{L5} and \ref{L10}, we have that with the probability at least $1-2e^{-t}>0$, for any $\rho>0$,
\begin{equation}\label{T2.1}
\begin{split}
&[R_{\rm I}(\widehat{\mathbf{w}})-\widehat{R}_{\rm I}(\widehat{\mathbf{w}})] +\alpha [R_{\rm O}(\widehat{\mathbf{w}};\rho)-\widehat{R}_{\rm O}(\widehat{\mathbf{w}};\rho)] 
\\
\leq & \big [{b_0}\sqrt{\frac{M_{\ell}{\rm diam}(\mathcal{W}) Ld'}{n}} + M_{\ell}\sqrt{\frac{2t}{n}}\big ]
 +  \alpha b_1 \min \{L_c ,  \frac{M_{\ell_{\rm OE}} }{\rho} \} \sqrt{\frac{{\rm diam}(\mathcal{W}) L_{\rm OE}d'}{m}} + \alpha b_2 M_{\ell_{\rm OE}} \sqrt{\frac{t}{m}}
\end{split}
\end{equation}
By Lemmas \ref{L7} and \ref{L9}, we have that with the probability at least $1-2e^{-t}>0$, for any $\rho>0$,
\begin{equation}\label{T2.2}
\begin{split}
 &[R_{\rm I}(\mathbf{w}^*)-\widehat{R}_{\rm I}(\mathbf{w}^*)]
 + \alpha [{R}_{\rm O}(\mathbf{w}^*;\rho)-\widehat{R}_{\rm O}(\mathbf{w}^*;\rho)] \leq   M_{\ell}\sqrt{\frac{2t}{n}} + 2\alpha M_{\ell_{\rm OE}} \sqrt{\frac{2t}{m}}.
 \end{split}
\end{equation}
Using  Eqs. \eqref{T2.1}, \eqref{T2.2} and Eq. \eqref{T1.1}, we know that with the probability at least $1-4e^{-t}>0$, for any $\rho>0$,
\begin{equation*}
\begin{split}
&R_D(\widehat{\mathbf{w}};\rho) - \min_{\mathbf{w} \in \mathcal{W}} R_D({\mathbf{w}};\rho)\\ \leq & {b_0}\sqrt{\frac{M_{\ell}{\rm diam}(\mathcal{W}) Ld'}{n}} + \alpha b_1 \min \{L_c ,  \frac{M_{\ell_{\rm OE}} }{\rho} \} \sqrt{\frac{{\rm diam}(\mathcal{W}) L_{\rm OE}d'}{m}} + 2 M_{\ell}\sqrt{\frac{2t}{n}}+ \alpha b_{2}M_{\ell_{\rm OE}} \sqrt{\frac{2t}{m}},
\end{split}
\end{equation*}
where $b_0$, $b_1$ and $b_2$ are uniform constants.
\end{proof}

\subsection{Proof of Theorem~\ref{T3}}

\begin{proof}[Proof of Theorem \ref{T3}]
Consider 
\begin{equation*}
R_D(\widehat{\mathbf{w}}) -R_D(\widehat{\mathbf{w}};\rho).
\end{equation*}
It is clear that
\begin{equation*}
R_D(\widehat{\mathbf{w}}) -R_D(\widehat{\mathbf{w}};\rho)= \alpha [R_{\rm O}(\widehat{\mathbf{w}})-R_{\rm O}(\widehat{\mathbf{w}};\rho)].
\end{equation*}
Let $D' = (1-u)D_{X_{\rm O}}+uD_{X_{\rm A}}$ and
\begin{equation*}
\rho_{\rm O} = {\rm W}_c(D_{X_{\rm O}},D_{X_{\rm A}}).
\end{equation*}
Because $c(\cdot,\cdot)$ is a continuous metric, Kantorovich–Rubinstein duality \cite{Villani2003TopicsIO} implies that
\begin{equation*}
\begin{split}
&{\rm W}_c(D', D_{X_{\rm A}}) \\= &\sup_{\|f\|_{\rm Lip}\leq 1} \int_{\mathcal{X}} f(\mathbf{x}) {\rm d}D'(\mathbf{x}) - \int_{\mathcal{X}} f(\mathbf{x}) {\rm d}D_{X_{\rm A}}(\mathbf{x}) 
\\ = & (1-u ) \sup_{\|f\|_{\rm Lip}\leq 1} \int_{\mathcal{X}} f(\mathbf{x}) {\rm d}D_{X_{\rm O}}(\mathbf{x}) - \int_{\mathcal{X}} f(\mathbf{x}) {\rm d}D_{X_{\rm A}}(\mathbf{x}) 
\\
=&(1-u){\rm W}_c(D_{X_{\rm O}},D_{X_{\rm A}})
\\
= &(1-u) \rho_{\rm O}
\end{split}
\end{equation*}
Let
\begin{equation*}
u = 1-\frac{\rho}{\rho_{\rm O}}.
\end{equation*}
\textbf{Case 1.} If $\rho \geq \rho_{\rm O}$, then
\begin{equation*}
R_D(\widehat{\mathbf{w}})\leq R_D(\widehat{\mathbf{w}};\rho)
\end{equation*}
\textbf{Case 2.} If $\rho < \rho_{\rm O}$, then by Lemma \ref{L5.0}
\begin{equation*}
\begin{split}
R_{\rm O}(\widehat{\mathbf{w}})- R_{\rm O}(\widehat{\mathbf{w}};\rho) &\leq R_{\rm O}(\widehat{\mathbf{w}}) - \mathbb{E}_{\mathbf{x}\sim D'} \ell_{\rm OE}(\mathbf{f}_\mathbf{w};\mathbf{x}) \leq L_c(\rho_{\rm O}-\rho),
\end{split}
\end{equation*}
By Cases $1$ and $2$, we have shown that
\begin{equation*}
\begin{split}
&R_D(\widehat{\mathbf{w}}) -R_D(\widehat{\mathbf{w}};\rho)
\leq \alpha L_c \max\{{\rm W}_c(D_{X_{\rm o}},D_{X_{\rm A}})-\rho,0\}.
\end{split}
\end{equation*}
Then by Theorem \ref{T1}, we complete this proof.
\end{proof}

\newpage
\section{Necessary Lemmas}

\begin{lemma}\label{L1}
Assume that 
\begin{itemize}
    \item $|\ell_{\rm OE}(\mathbf{f}_{\mathbf{w}};\mathbf{x})| \leq M_{\ell_{\rm OE}}$,
    \item $c:\mathcal{X}\times \mathcal{X} \rightarrow \mathbb{R}_{+}$ is a continuous metric, 
\end{itemize}
then
\begin{itemize}
    \item $|\phi_{\gamma}(\mathbf{w};\mathbf{x})| \leq M_{\ell_{\rm OE}}$,
    \item  for some $\mathbf{w}_0 \in \mathcal{W}$ and $\epsilon>0$, when $\gamma_{\epsilon}$ $(\gamma_{\epsilon}\geq 0)$ satisfies the following condition:
\begin{equation*}
\gamma_{\epsilon} \rho + \mathbb{E}_{\mathbf{x}\sim D}\phi_{\gamma_{\epsilon}}(\mathbf{w}_0;\mathbf{x}) \leq \inf_{\gamma \geq 0} [ \gamma \rho + \mathbb{E}_{\mathbf{x}\sim D}\phi_{\gamma}(\mathbf{w}_0;\mathbf{x})] + \epsilon,
\end{equation*}
then
\begin{equation*}
\gamma_{\epsilon} \leq \frac{2M_{\ell_{\rm OE}}+\epsilon}{\rho}.
\end{equation*}
\end{itemize}
\end{lemma}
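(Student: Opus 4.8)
The plan is to treat the two bullet points in turn; both reduce to elementary manipulations of the supremum defining the robust surrogate $\phi_\gamma$ (here built from $\ell_{\rm OE}$, i.e. $\phi_\gamma(\mathbf{w};\mathbf{x})=\sup_{\mathbf{x}'\in\mathcal{X}}\{\ell_{\rm OE}(\mathbf{f}_\mathbf{w};\mathbf{x}')-\gamma c(\mathbf{x}',\mathbf{x})\}$) together with the uniform bound $|\ell_{\rm OE}(\mathbf{f}_\mathbf{w};\mathbf{x})|\le M_{\ell_{\rm OE}}$, so no probabilistic or analytic machinery is required.

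For the first bullet I would bound $\phi_\gamma(\mathbf{w};\mathbf{x})$ from both sides. For the upper bound, since $\gamma\ge 0$ and $c(\mathbf{x}',\mathbf{x})\ge 0$, discarding the nonnegative penalty yields $\phi_\gamma(\mathbf{w};\mathbf{x})\le\sup_{\mathbf{x}'}\ell_{\rm OE}(\mathbf{f}_\mathbf{w};\mathbf{x}')\le M_{\ell_{\rm OE}}$. For the lower bound I would use $\mathbf{x}'=\mathbf{x}$ as a feasible choice in the supremum; because $c$ is a metric we have $c(\mathbf{x},\mathbf{x})=0$, hence $\phi_\gamma(\mathbf{w};\mathbf{x})\ge\ell_{\rm OE}(\mathbf{f}_\mathbf{w};\mathbf{x})\ge -M_{\ell_{\rm OE}}$. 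Combining the two inequalities gives $|\phi_\gamma(\mathbf{w};\mathbf{x})|\le M_{\ell_{\rm OE}}$.

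For the second bullet, write $h(\gamma):=\gamma\rho+\mathbb{E}_{\mathbf{x}\sim D}\phi_\gamma(\mathbf{w}_0;\mathbf{x})$. I would first get an upper bound on the infimum by evaluating at $\gamma=0$: since $\phi_0(\mathbf{w}_0;\mathbf{x})=\sup_{\mathbf{x}'}\ell_{\rm OE}(\mathbf{f}_{\mathbf{w}_0};\mathbf{x}')\le M_{\ell_{\rm OE}}$ by the first bullet, we get $\inf_{\gamma\ge 0}h(\gamma)\le h(0)\le M_{\ell_{\rm OE}}$. Next, applying the lower bound from the first bullet to the integrand, $h(\gamma_\epsilon)\ge\gamma_\epsilon\rho-M_{\ell_{\rm OE}}$. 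Substituting both into the near-optimality hypothesis $h(\gamma_\epsilon)\le\inf_{\gamma\ge 0}h(\gamma)+\epsilon$ gives $\gamma_\epsilon\rho-M_{\ell_{\rm OE}}\le M_{\ell_{\rm OE}}+\epsilon$; rearranging and dividing by $\rho>0$ yields $\gamma_\epsilon\le(2M_{\ell_{\rm OE}}+\epsilon)/\rho$.

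There is no real obstacle here; the argument is purely arithmetic. The only points that deserve care are that the metric axiom $c(\mathbf{x},\mathbf{x})=0$ is genuinely needed for the lower bound on $\phi_\gamma$ (mere nonnegativity of $c$ would not suffice), and that taking expectations preserves the pointwise bounds from the first bullet, so that $h$ inherits $\gamma\rho-M_{\ell_{\rm OE}}\le h(\gamma)\le\gamma\rho+M_{\ell_{\rm OE}}$. Continuity of $c$ and boundedness of $\ell_{\rm OE}$ also ensure $\phi_\gamma$ is finite and the expectation well-defined, which I would note in passing.
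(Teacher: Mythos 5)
Your first bullet is exactly the paper's argument: drop the nonnegative penalty for the upper bound, take $\mathbf{x}'=\mathbf{x}$ with $c(\mathbf{x},\mathbf{x})=0$ for the lower bound. For the second bullet, however, you and the paper bound $\inf_{\gamma\ge 0}h(\gamma)$ by $M_{\ell_{\rm OE}}$ in genuinely different ways. The paper invokes strong duality (from Blanchet et al.\ / Sinha et al.) to identify the infimum with $\sup_{{\rm W}_c(D',D)\le\rho}\mathbb{E}_{\mathbf{x}\sim D'}\ell_{\rm OE}(\mathbf{f}_{\mathbf{w}_0};\mathbf{x})$, which is bounded by $M_{\ell_{\rm OE}}$ because $\ell_{\rm OE}$ is. You instead just evaluate $h$ at the feasible point $\gamma=0$ and use the upper half of the first bullet to get $h(0)=\mathbb{E}\,\phi_0(\mathbf{w}_0;\cdot)\le M_{\ell_{\rm OE}}$. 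The downstream arithmetic ($\gamma_\epsilon\rho-M_{\ell_{\rm OE}}\le h(\gamma_\epsilon)\le M_{\ell_{\rm OE}}+\epsilon$) is then the same. Your route is more elementary: it makes the lemma self-contained, needing only the pointwise bound on $\phi_\gamma$ already established in the first bullet, whereas the paper's route leans on the (heavier) distributionally robust duality theorem. The paper's choice is perhaps a matter of bookkeeping consistency, since that duality is cited elsewhere anyway, but it is strictly more than is needed here; your version is cleaner for this particular claim.
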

\begin{proof}[Proof of Lemma \ref{L1}]
   \textbf{First}, we prove: $|\phi_{\gamma}(\mathbf{w};\mathbf{x})| \leq M_{\ell_{\rm OE}}$.

   Because $\phi_{\gamma}(\mathbf{w};\mathbf{x}) = \sup_{\mathbf{x}'\in \mathcal{X}} \{ \ell_{\rm OE}(\mathbf{f}_\mathbf{w}; \mathbf{x}') - \gamma c(\mathbf{x}',\mathbf{x})\}$ and $c(\mathbf{x},\mathbf{x}') \geq 0$, it is clear that
   \begin{equation*}
   \phi_{\gamma}(\mathbf{w};\mathbf{x}) \leq \sup_{\mathbf{x}'\in \mathcal{X}} \ell_{\rm OE}(\mathbf{f}_{\mathbf{w}}; \mathbf{x}') \leq M_{\ell_{\rm OE}}.
   \end{equation*}
   In addition, because $c(\mathbf{x},\mathbf{x})=0$, then
   \begin{equation*}
   \phi_{\gamma}(\mathbf{w};\mathbf{x}) \geq  \ell_{\rm OE}(\mathbf{w}; \mathbf{x}) \geq -M_{\ell_{\rm OE}}.
   \end{equation*}
   Above inequalities have indicated that
   \begin{equation*}
       |\phi_{\gamma}(\mathbf{w};\mathbf{x})| \leq M_{\ell_{\rm OE}}.
   \end{equation*}

\textbf{Second}, we prove that 
\begin{equation*}
\gamma_{\epsilon} \leq \frac{2M_{\ell_{\rm OE}}+\epsilon}{\rho}.
\end{equation*}

By the dual theorem in \citet{Blanchet2016RobustWP,Blanchet2016QuantifyingDM,SinhaND18}, we can obtain that
% \begin{equation*}
% \begin{split}
 % &\inf_{\gamma \geq 0} [ \gamma \rho + \mathbb{E}_{\mathbf{x}\sim D}\phi_{\gamma}(\mathcal{W}_0;\mathbf{x})]  \\= &\sup_{{\rm W}_c(D',D)\leq \rho} \mathbb{E}_{\mathbf{x}\sim D'} \ell_{\rm OE}(\mathcal{W}_0; \mathbf{x}) \leq M_{\ell_{\rm OE}},
 % \end{split}
% \end{equation*}
\begin{equation*}
    \inf_{\gamma \geq 0} [ \gamma \rho + \mathbb{E}_{\mathbf{x}\sim D}\phi_{\gamma}(\mathbf{w}_0;\mathbf{x})]  = \sup_{{\rm W}_c(D',D)\leq \rho} \mathbb{E}_{\mathbf{x}\sim D'} \ell_{\rm OE}(\mathbf{f}_{\mathbf{w}_0}; \mathbf{x}) \leq M_{\ell_{\rm OE}},
\end{equation*}
which implies that
% \begin{equation*}
% \begin{split}
% \gamma_{\epsilon}\rho &\leq M_{\ell_{\rm OE}}+\epsilon -  \mathbb{E}_{\mathbf{x}\sim D}\phi_{\gamma}(\mathcal{W}_0;\mathbf{x}) \\ & \leq 2M_{\ell_{\rm OE}}+\epsilon.
% \end{split}
% \end{equation*}
\begin{equation*}
    \gamma_{\epsilon}\rho \leq M_{\ell_{\rm OE}}+\epsilon -  \mathbb{E}_{\mathbf{x}\sim D}\phi_{\gamma}(\mathbf{w}_0;\mathbf{x})  \leq 2M_{\ell_{\rm OE}}+\epsilon.
\end{equation*}
Therefore,
\begin{equation*}
    \gamma_{\epsilon} \leq \frac{2M_{\ell_{\rm OE}}+\epsilon}{\rho}.
\end{equation*}
\end{proof}

\begin{lemma}[Theorem 3 in \citet{SinhaND18}]\label{L2}
Given the same assumptions in Theorem \ref{T1}, then with the probability at least $1-e^{-t}>0$, for any $\gamma\geq 0$, $\rho \geq 0$ and $\mathbf{w} \in \mathcal{W}$, 
\begin{equation*}
\begin{split}
&\sup_{{\rm W}_{c}(D_{X'},D_{X_{\rm A}}) \leq \rho} \mathbb{E}_{\mathbf{x}\sim D_{X'}} \ell_{\rm OE}(\mathbf{f}_{\mathbf{w}}; \mathbf{x}) \\ \leq & \gamma \rho + \frac{1}{m} \sum_{i=1}^m \phi_{\gamma}(\mathbf{w};\mathbf{x}_{\rm A}^i)
 +b_2 M_{\ell_{\rm OE}} \sqrt{\frac{t}{m}} + \gamma b_1 \sqrt{\frac{M_{\ell_{\rm OE}}}{m}}\int_{0}^{1} \sqrt{ \log \mathcal{N}(\mathcal{F}_{\rm OE},M_{\ell_{\rm OE}}\epsilon,L^{\infty})}{\rm d} \epsilon, 
\end{split}
\end{equation*}
where $b_1$ and $b_2$ are uniform constants.
\end{lemma}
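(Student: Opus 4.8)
The statement is a population worst-case (Wasserstein-DRO) risk bound, and I would establish it along the route of \citet{SinhaND18}: dualize, relax, then concentrate. \textbf{Step 1 (strong duality at the population level).} The Blanchet--Murthy duality underlying Theorem~\ref{T-dual} does not require the reference measure to be empirical; applying it to $D_{X_{\rm A}}$ with the continuous metric $c$ gives, for any $\rho\ge0$,
\[
\sup_{{\rm W}_c(D_{X'},D_{X_{\rm A}})\le\rho}\mathbb{E}_{\mathbf{x}\sim D_{X'}}\ell_{\rm OE}(\mathbf{f}_{\mathbf{w}};\mathbf{x})=\inf_{\gamma\ge0}\Big\{\gamma\rho+\mathbb{E}_{\mathbf{x}\sim D_{X_{\rm A}}}\phi_{\gamma}(\mathbf{w};\mathbf{x})\Big\},
\]
where $0\le\ell_{\rm OE}\le M_{\ell_{\rm OE}}$ and the continuity of $c$ ensure measurability of $\phi_\gamma$ and attainment in the inner transport problem. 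Relaxing the infimum to an arbitrary fixed $\gamma\ge0$ reduces the task to bounding $\mathbb{E}_{\mathbf{x}\sim D_{X_{\rm A}}}\phi_{\gamma}(\mathbf{w};\mathbf{x})$ by the empirical average $\frac{1}{m}\sum_{i=1}^m\phi_{\gamma}(\mathbf{w};\mathbf{x}_{\rm A}^i)$, uniformly over $\mathbf{w}\in\mathcal{W}$ and $\gamma\ge0$.

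\textbf{Step 2 (uniform concentration of the $\phi_\gamma$-process).} By Lemma~\ref{L1}, $|\phi_\gamma(\mathbf{w};\mathbf{x})|\le M_{\ell_{\rm OE}}$, so the empirical mean $\frac{1}{m}\sum_i\phi_\gamma(\mathbf{w};\mathbf{x}_{\rm A}^i)$ has bounded differences of order $M_{\ell_{\rm OE}}/m$; McDiarmid's inequality then produces the high-probability slack $b_2M_{\ell_{\rm OE}}\sqrt{t/m}$. The remaining expected uniform deviation is handled by symmetrization, which reduces it to the Rademacher complexity of $\{\mathbf{x}\mapsto\phi_\gamma(\mathbf{w};\mathbf{x}):\mathbf{w}\in\mathcal{W}\}$ for each $\gamma$. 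The key structural observation is that the operator $g\mapsto\big[\mathbf{x}\mapsto\sup_{\mathbf{x}'\in\mathcal{X}}\{g(\mathbf{x}')-\gamma c(\mathbf{x}',\mathbf{x})\}\big]$ is $1$-Lipschitz in the $L^\infty$-norm, so an $L^\infty$ $\epsilon$-cover of $\mathcal{F}_{\rm OE}$ induces one of the $\phi_\gamma$-class at the same scale; plugging this into Dudley's entropy integral and tracking how the factor $\gamma$ can be pulled out of $\phi_\gamma$ before covering yields the chaining term $\gamma b_1\sqrt{M_{\ell_{\rm OE}}/m}\int_0^1\sqrt{\log\mathcal{N}(\mathcal{F}_{\rm OE},M_{\ell_{\rm OE}}\epsilon,L^\infty)}\,{\rm d}\epsilon$, with the envelope $M_{\ell_{\rm OE}}$ fixing the integration scale.

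Combining Steps~1 and~2 and folding numerical constants into $b_1,b_2$ gives the lemma; in fact, after relabelling the loss and the reference distribution this is exactly Theorem~3 of \citet{SinhaND18}, so one may alternatively just invoke it. The main technical obstacle is Step~2: $\phi_\gamma$ is built from a non-smooth supremum over $\mathbf{x}'$ and carries the extra scalar $\gamma$, so one must argue carefully that this supremum does not inflate the metric entropy beyond that of $\mathcal{F}_{\rm OE}$, that the estimate can be made uniform in both $\mathbf{w}$ and $\gamma$ (exploiting that the bound grows linearly in $\gamma$, so a cover of $\gamma$ over the relevant bounded range is cheap), and that the $\gamma$-scaling extracts cleanly — all while preserving the measurability of $\phi_\gamma$ that Step~1 requires.
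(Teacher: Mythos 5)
The paper's own proof of Lemma~\ref{L2} is a one-line citation: ``This lemma is following Theorem~3 in \citet{SinhaND18}.'' So there is no argument in the paper to compare against; you are reconstructing the cited theorem's proof, and you also (correctly) observe at the end that one may simply invoke it, which is exactly what the paper does.

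Your reconstruction is broadly consistent with the Sinha--Namkoong--Duchi argument: dualize the Wasserstein ball at the population distribution, relax the infimum to a fixed $\gamma$, and then control $\mathbb{E}_{D_{X_{\rm A}}}\phi_\gamma-\frac1m\sum_i\phi_\gamma(\mathbf{w};\mathbf{x}_{\rm A}^i)$ uniformly via a bounded-differences (McDiarmid) tail term and a Dudley-type chaining term driven by the $L^\infty$-covering number of $\mathcal{F}_{\rm OE}$. The $1$-Lipschitz-in-$L^\infty$ observation for $g\mapsto\sup_{\mathbf{x}'}\{g(\mathbf{x}')-\gamma c(\mathbf{x}',\cdot)\}$ is correct and is indeed the reason the covering number of the $\phi_\gamma$-class is controlled by that of $\mathcal{F}_{\rm OE}$. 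The part of your sketch that does not hold up as written is the claim that the $\gamma$ prefactor in the chaining term comes from ``pulling $\gamma$ out of $\phi_\gamma$ before covering'': $\gamma$ cannot be factored out of the supremum $\sup_{\mathbf{x}'}\{\ell_{\rm OE}(\mathbf{f}_{\mathbf{w}};\mathbf{x}')-\gamma c(\mathbf{x}',\mathbf{x})\}$, and the Lipschitz argument you give would actually yield a chaining term with envelope $M_{\ell_{\rm OE}}$ and no $\gamma$ in front. In Sinha et al.\ the linear-in-$\gamma$ dependence comes from a more delicate step (roughly, relating the deviation of the robust surrogate to a transported copy of the base loss class and to the transport cost $\gamma c$), and you acknowledge this is the ``main technical obstacle'' without resolving it. Since the paper defers entirely to the citation, this gap does not contradict anything in the paper; it only means your ``prove it from scratch'' branch is incomplete at precisely the point you flagged, while your ``just invoke Theorem~3'' branch matches the paper.
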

\begin{proof}[Proof of Lemma \ref{L2}]
This lemma is following Theorem 3 in \citet{SinhaND18}.
\end{proof}

\begin{lemma}\label{L3}
Given the same assumptions in Theorem \ref{T1}, then with the probability at least $1-e^{-t}>0$, for any $\rho > 0$ and $\mathbf{w} \in \mathcal{W}$, we have
\begin{equation*}
 \begin{split}
 &\sup_{{\rm W}_{c}(D_{X'},D_{X_{\rm A}}) \leq \rho} \mathbb{E}_{\mathbf{x}\sim D_{{\rm O}}} \ell_{\rm OE}(\mathbf{f}_\mathbf{w}; \mathbf{x}) \\ \leq & \sup_{{\rm W}_{c}(D_{X'},\widehat{D}_{X_{\rm A}}) \leq \rho} \mathbb{E}_{\mathbf{x}\sim D_{{\rm O}}} \ell_{\rm OE}(\mathbf{f}_\mathbf{w}; \mathbf{x}) 
 +b_2 M_{\ell_{\rm OE}}\sqrt{\frac{t}{m}}
 +   b_1 \sqrt{\frac{M_{\ell_{\rm OE}}^3}{\rho^2 m}}\int_{0}^{1} \sqrt{ \log \mathcal{N}(\mathcal{F}_{\rm OE},M_{\ell_{\rm  OE}}\epsilon,L^{\infty})}{\rm d} \epsilon,
 \end{split}
\end{equation*}

where $b_1$ and $b_2$ are uniform constants.
\end{lemma}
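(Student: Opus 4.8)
The plan is to pass to the duality of Theorem~\ref{T-dual}, apply the uniform-in-$\gamma$ concentration bound of Lemma~\ref{L2}, and then use the a~priori control on the dual minimizer of the \emph{empirically centered} problem supplied by Lemma~\ref{L1} to convert the $\sqrt{M_{\ell_{\rm OE}}/m}$ term of Lemma~\ref{L2} into the $\sqrt{M_{\ell_{\rm OE}}^3/(\rho^2 m)}$ term of the claim. (Here I read both suprema in the statement as acting on $\mathbb{E}_{\mathbf{x}\sim D_{X'}}\ell_{\rm OE}(\mathbf{f}_{\mathbf{w}};\mathbf{x})$, so that the left-hand side is $R_{\rm O}(\mathbf{w};\rho)$, centered at $D_{X_{\rm A}}$, and the first term on the right is $\widehat{R}_{\rm O}(\mathbf{w};\rho)$, centered at $\widehat{D}_{X_{\rm A}}$.)

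First I would fix $\mathbf{w}\in\mathcal{W}$, $\rho>0$, and a tolerance $\tau>0$, and pick $\widehat{\gamma}_\tau\geq 0$ that is $\tau$-near-optimal for $\gamma\mapsto\gamma\rho+\frac1m\sum_{i=1}^m\phi_\gamma(\mathbf{w};\mathbf{x}_{\rm A}^i)$. By Theorem~\ref{T-dual} applied to the atomic distribution $\widehat{D}_{X_{\rm A}}$, the infimum of this map is exactly $\widehat{R}_{\rm O}(\mathbf{w};\rho)$, so $\widehat{\gamma}_\tau\rho+\frac1m\sum_i\phi_{\widehat{\gamma}_\tau}(\mathbf{w};\mathbf{x}_{\rm A}^i)\leq\widehat{R}_{\rm O}(\mathbf{w};\rho)+\tau$. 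Applying the second bullet of Lemma~\ref{L1} with $D=\widehat{D}_{X_{\rm A}}$ and $\mathbf{w}_0=\mathbf{w}$ then gives the key bound $\widehat{\gamma}_\tau\leq(2M_{\ell_{\rm OE}}+\tau)/\rho$; this is where the $1/\rho$ factor of the final term originates, and it uses only $0\leq\ell_{\rm OE}\leq M_{\ell_{\rm OE}}$ together with $|\phi_\gamma|\leq M_{\ell_{\rm OE}}$ (the first bullet of Lemma~\ref{L1}).

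Next I would invoke Lemma~\ref{L2}: on a single event of probability at least $1-e^{-t}$ its bound holds simultaneously for all $\gamma\geq 0$, $\rho\geq 0$, $\mathbf{w}\in\mathcal{W}$, so I may evaluate it at the data-dependent choice $\gamma=\widehat{\gamma}_\tau$, obtaining
\begin{equation*}
\sup_{{\rm W}_{c}(D_{X'},D_{X_{\rm A}})\leq\rho}\mathbb{E}_{\mathbf{x}\sim D_{X'}}\ell_{\rm OE}(\mathbf{f}_{\mathbf{w}};\mathbf{x})\leq\widehat{\gamma}_\tau\rho+\frac1m\sum_{i=1}^m\phi_{\widehat{\gamma}_\tau}(\mathbf{w};\mathbf{x}_{\rm A}^i)+b_2M_{\ell_{\rm OE}}\sqrt{\frac tm}+\widehat{\gamma}_\tau b_1\sqrt{\frac{M_{\ell_{\rm OE}}}{m}}\int_0^1\sqrt{\log\mathcal{N}(\mathcal{F}_{\rm OE},M_{\ell_{\rm OE}}\epsilon,L^{\infty})}\,{\rm d}\epsilon.
\end{equation*}
Substituting the near-optimality inequality into the first two terms and $\widehat{\gamma}_\tau\leq(2M_{\ell_{\rm OE}}+\tau)/\rho$ into the last term, then letting $\tau\downarrow 0$, yields $R_{\rm O}(\mathbf{w};\rho)\leq\widehat{R}_{\rm O}(\mathbf{w};\rho)+b_2M_{\ell_{\rm OE}}\sqrt{t/m}+2b_1\sqrt{M_{\ell_{\rm OE}}^3/(\rho^2 m)}\int_0^1\sqrt{\log\mathcal{N}(\mathcal{F}_{\rm OE},M_{\ell_{\rm OE}}\epsilon,L^{\infty})}\,{\rm d}\epsilon$, and absorbing the factor $2$ into the uniform constant $b_1$ gives the statement; uniformity over $\mathbf{w}$ and $\rho>0$ is inherited directly from Lemma~\ref{L2}, since every other step is deterministic.

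I expect the main obstacle to be exactly the control of $\widehat{\gamma}_\tau$: Lemma~\ref{L2} alone would only give an $O(m^{-1/2})$ rate with a $\gamma$-dependent constant, and it is the boundedness-driven estimate $\widehat{\gamma}_\tau\lesssim M_{\ell_{\rm OE}}/\rho$ from Lemma~\ref{L1} that produces the sharper $\rho^{-1}$ dependence demanded by the statement. A secondary subtlety is that $\widehat{\gamma}_\tau$ is random, so one must genuinely rely on the ``for all $\gamma\geq0$'' uniformity built into Lemma~\ref{L2} rather than a fixed-$\gamma$ concentration inequality; the $\tau\downarrow 0$ limiting argument is needed only because the infimum over $\gamma$ need not be attained in general.
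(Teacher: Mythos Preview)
Your proposal is correct and follows essentially the same route as the paper: apply Lemma~\ref{L2} uniformly in $\gamma$, use Lemma~\ref{L1} to bound the (near-)optimal empirical dual variable by $O(M_{\ell_{\rm OE}}/\rho)$, and then invoke the duality of Theorem~\ref{T-dual} at $\widehat{D}_{X_{\rm A}}$ to identify the resulting infimum with $\widehat{R}_{\rm O}(\mathbf{w};\rho)$. The paper's write-up phrases the same idea as first restricting to $\gamma\in[0,3M_{\ell_{\rm OE}}/\rho]$ and then observing via Lemma~\ref{L1} that the infimum over all $\gamma\ge 0$ is (nearly) attained in that range, but this is only a cosmetic difference from your explicit $\widehat{\gamma}_\tau$, $\tau\downarrow 0$ argument.
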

\begin{proof}[Proof of Lemma \ref{L3}]
By Lemma \ref{L2}, we know that  with the probability at least $1-e^{-t}>0$, for any $\frac{3M_{\ell_{\rm OE}}}{\rho} \geq \gamma\geq 0$, $ \rho \geq 0$ and $\mathbf{w} \in \mathcal{W}$, we have
\begin{equation*}
\begin{split}
&\sup_{{\rm W}_{c}(D_{X'},D_{X_{\rm A}}) \leq \rho} \mathbb{E}_{\mathbf{x}\sim D_{X'}} \ell_{\rm OE}(\mathbf{f}_\mathbf{w}; \mathbf{x}) \\ \leq & \gamma \rho + \frac{1}{m} \sum_{i=1}^m \phi_{\gamma}(\mathbf{w};\mathbf{x}_{\rm A}^i)
 +b_2M_{\ell_{\rm OE}} \sqrt{\frac{t}{m}} + b_1 \sqrt{\frac{M_{\ell_{\rm OE}}^3}{\rho^2 m}} \int_{0}^{1} \sqrt{ \log \mathcal{N}(\mathcal{F}_{\rm OE},M_{\ell_{\rm OE}}\epsilon,L^{\infty})}{\rm d} \epsilon,
\end{split}
\end{equation*}
where $b_1$ and $b_2$ are uniform constants.

The above bound and Lemma \ref{L1} imply that
 with the probability at least $1-e^{-t}>0$, for any $ \rho \geq 0$ and $\mathbf{w} \in \mathcal{W}$, we have
\begin{equation*}
\begin{split}
&\sup_{{\rm W}_{c}(D_{X'},D_{X_{\rm A}}) \leq \rho} \mathbb{E}_{\mathbf{x}\sim D_{X'}} \ell_{\rm OE}(\mathbf{f}_\mathbf{w}; \mathbf{x}) \\ \leq & \inf_{\gamma \geq 0} \big(\gamma \rho + \frac{1}{m} \sum_{i=1}^m \phi_{\gamma}(\mathbf{w};\mathbf{x}_{\rm A}^i)\big )
 +b_2 M_{\ell_{\rm OE}} \sqrt{\frac{t}{m}} + b_1 \sqrt{\frac{M_{\ell_{\rm OE}}^3}{\rho^2 m}} \int_{0}^{1} \sqrt{ \log \mathcal{N}(\mathcal{F}_{\rm OE},M_{\ell_{\rm OE}}\epsilon,L^{\infty})}{\rm d} \epsilon.
\end{split}
\end{equation*}
Combining the above inequality with the following equation:
\begin{equation*}
\begin{split}
\inf_{\gamma \geq 0} \big(\gamma \rho + \frac{1}{m} \sum_{i=1}^m \phi_{\gamma}(\mathbf{w};\mathbf{x}_{\rm A}^i)\big ) =\sup_{{\rm W}_{c}(D_{X'},\widehat{D}_{X_{\rm A}}) \leq \rho} \mathbb{E}_{\mathbf{x}\sim D_{X'}} \ell_{\rm OE}(\mathbf{f}_\mathbf{w}; \mathbf{x}),
\end{split}
\end{equation*}
we complete this proof.
\end{proof}

\begin{lemma}\label{L4}
Given the same assumptions in Lemma \ref{L3}, if $\ell_{\rm OE}(\mathbf{f}_\mathbf{w}; \mathbf{x})$ is a $L_{\rm OE}$-Lipschitz function w.r.t. norm $\|\cdot\|$ for all $\mathbf{x}\in \mathcal{X}$ and the parameter space $\mathcal{W}\subset \mathbb{R}^{d'}$ satisfies that ${\rm diam}(\mathcal{W}) = \sup_{\mathbf{w},\mathbf{w}'\in \mathcal{W}} \|\mathbf{w}-\mathbf{w}'\|<+\infty$, then with the probability at least $1-e^{-t}>0$, for any $\rho > 0$ and $\mathbf{w} \in \mathcal{W}$, 
\begin{equation*}
\begin{split}
&\sup_{{\rm W}_{c}(D_{X'},D_{X_{\rm A}}) \leq \rho} \mathbb{E}_{\mathbf{x}\sim D_{{\rm O}}} \ell_{\rm OE}(\mathbf{f}_\mathbf{w}; \mathbf{x}) \\ \leq & \sup_{{\rm W}_{c}(D_{X'},\widehat{D}_{X_{\rm A}}) \leq \rho} \mathbb{E}_{\mathbf{x}\sim D_{{\rm O}}} \ell_{\rm OE}(\mathbf{f}_\mathbf{w}; \mathbf{x}) +  b_2 M_{\ell_{\rm OE}}\sqrt{\frac{t}{m}} + b_1 M_{\ell_{\rm OE}} \sqrt{\frac{{\rm diam}(\mathcal{W}) L_{\rm OE}d'}{\rho^2 m}},
\end{split}
\end{equation*}
where $b_1$ and $b_2$ are uniform constants.
\end{lemma}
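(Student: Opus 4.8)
The plan is to obtain Lemma~\ref{L4} directly from Lemma~\ref{L3}: the left- and right-hand sides have exactly the same structure in both statements, so the only thing to do is to replace the Dudley-type entropy integral $\int_0^1\sqrt{\log\mathcal{N}(\mathcal{F}_{\rm OE},M_{\ell_{\rm OE}}\epsilon,L^{\infty})}\,{\rm d}\epsilon$ appearing in Lemma~\ref{L3} by an explicit quantity involving the parametric dimension $d'$, the diameter ${\rm diam}(\mathcal{W})$, and the Lipschitz constant $L_{\rm OE}$. The first step is to pass from the covering number of the loss class to that of the parameter set. Since $\ell_{\rm OE}(\mathbf{f}_{\mathbf{w}};\mathbf{x})$ is $L_{\rm OE}$-Lipschitz in $\mathbf{w}$ uniformly in $\mathbf{x}$, any $\delta$-cover $\{\mathbf{w}_1,\dots,\mathbf{w}_N\}$ of $\mathcal{W}$ in $\|\cdot\|$ induces an $L_{\rm OE}\delta$-cover $\{\ell_{\rm OE}(\mathbf{f}_{\mathbf{w}_i};\cdot)\}$ of $\mathcal{F}_{\rm OE}$ in $L^{\infty}$, so that $\mathcal{N}(\mathcal{F}_{\rm OE},\epsilon,L^{\infty})\le\mathcal{N}(\mathcal{W},\|\cdot\|,\epsilon/L_{\rm OE})$.

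Next I would invoke the standard volumetric estimate: because ${\rm diam}(\mathcal{W})<\infty$, $\mathcal{W}$ lies in a ball of radius ${\rm diam}(\mathcal{W})$ in $\mathbb{R}^{d'}$, whence $\log\mathcal{N}(\mathcal{W},\|\cdot\|,\delta)\le d'\log_+\!\big(3\,{\rm diam}(\mathcal{W})/\delta\big)$, where $\log_+(\cdot):=\max(\log(\cdot),0)$ accounts for the regime $\delta\ge 3\,{\rm diam}(\mathcal{W})$ in which a single point covers $\mathcal{W}$. Combining the two bounds with $\delta=M_{\ell_{\rm OE}}\epsilon/L_{\rm OE}$ gives $\log\mathcal{N}(\mathcal{F}_{\rm OE},M_{\ell_{\rm OE}}\epsilon,L^{\infty})\le d'\log_+(A/\epsilon)$, where $A:=3\,{\rm diam}(\mathcal{W})L_{\rm OE}/M_{\ell_{\rm OE}}$.

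Then I would bound the entropy integral by the elementary inequality $\log_+ x\le x$: $\int_0^1\sqrt{\log\mathcal{N}(\mathcal{F}_{\rm OE},M_{\ell_{\rm OE}}\epsilon,L^{\infty})}\,{\rm d}\epsilon\le\sqrt{d'}\int_0^1\sqrt{A/\epsilon}\,{\rm d}\epsilon=2\sqrt{d'A}$. Plugging this into the bound of Lemma~\ref{L3}, the prefactor $\sqrt{M_{\ell_{\rm OE}}^3/(\rho^2 m)}$ multiplies $2\sqrt{d'A}=2\sqrt{3\,d'\,{\rm diam}(\mathcal{W})L_{\rm OE}/M_{\ell_{\rm OE}}}$, the powers of $M_{\ell_{\rm OE}}$ collapse, and after renaming the absolute constant one is left with $b_1 M_{\ell_{\rm OE}}\sqrt{{\rm diam}(\mathcal{W})L_{\rm OE}d'/(\rho^2 m)}$; the term $b_2 M_{\ell_{\rm OE}}\sqrt{t/m}$ and the probability statement $1-e^{-t}$ transfer verbatim from Lemma~\ref{L3}.

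There is no substantial obstacle here beyond bookkeeping. The only mildly delicate point is the behaviour of the entropy integral as $\epsilon\to 0$: one must use $\log_+$ (equivalently, note that the volumetric bound is vacuous once the resolution exceeds $3\,{\rm diam}(\mathcal{W})$) and use the crude $\log_+ x\le x$ rather than the sharper $\sqrt{\log(1/\epsilon)}$ asymptotics, which keeps the integral finite and yields precisely the clean $\sqrt{\cdot}$ form claimed. One should also verify that the numerical constants (the volumetric constant $3$ and the factor $2$ from $\int_0^1\epsilon^{-1/2}\,{\rm d}\epsilon$) are absorbed consistently into the uniform constants $b_1,b_2$, so that they remain independent of $\rho$, $m$, $t$, and the problem parameters.
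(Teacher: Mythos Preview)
Your proposal is correct and follows essentially the same route as the paper: reduce the $L^\infty$-covering number of $\mathcal{F}_{\rm OE}$ to that of $\mathcal{W}$ via the Lipschitz assumption, apply a volumetric covering bound on the bounded set $\mathcal{W}\subset\mathbb{R}^{d'}$, use $\log(1+x)\le x$ (the paper's version of your $\log_+ x\le x$) to evaluate the entropy integral as $2\sqrt{{\rm diam}(\mathcal{W})L_{\rm OE}d'/M_{\ell_{\rm OE}}}$, and then plug into Lemma~\ref{L3}. The only cosmetic difference is that the paper writes the volumetric bound as $(1+{\rm diam}(\mathcal{W})L_{\rm OE}/(M_{\ell_{\rm OE}}\epsilon))^{d'}$ rather than via $\log_+$, but the resulting integral and constants are the same.
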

\begin{proof}
The proof is similar to Corollary 1 in \citet{SinhaND18}. Note that 
\begin{equation*}
\mathcal{F}_{\rm OE} = \{\ell_{\rm OE}(\mathbf{f}_\mathbf{w};\mathbf{x}): \mathbf{w} \in \mathcal{W}\},
\end{equation*}
and $\ell_{\rm OE}(\mathbf{f}_\mathbf{w}; \mathbf{x})$ is $L_{\rm OE}$-Lipschitz w.r.t. norm $\|\cdot\|$, therefore,
\begin{equation*}
\begin{split}
\mathcal{N}(\mathcal{F}_{\rm OE},M_{\ell_{\rm OE}}\epsilon,L^{\infty}) \leq  & \mathcal{N}(\mathcal{W},M_{\ell_{\rm OE}}\epsilon/L_{\rm OE},\|\cdot\|)
\leq (1+ \frac{{\rm diam}(\mathcal{W}) L_{\rm OE}}{M_{\ell_{\rm OE}}\epsilon})^{d'},
\end{split}
\end{equation*}
which implies that
\begin{equation*}
\begin{split}
& \int_{0}^{1} \sqrt{\log(\mathcal{N}(\mathcal{F}_{\rm OE},M_{\ell_{\rm OE}}\epsilon,L^{\infty})} {\rm d} \epsilon
\\ 
\leq & \sqrt{d'} \int_{0}^1 \sqrt{\log (1+ \frac{{\rm diam}(\mathcal{W}) L_{\rm OE}}{M_{\ell_{\rm OE}}\epsilon})}  {\rm d} \epsilon
\\ 
\leq & \sqrt{d'} \int_{0}^1 \sqrt{\frac{{\rm diam}(\mathcal{W}) L_{\rm OE}}{M_{\ell_{\rm OE}}\epsilon}}  {\rm d} \epsilon
= 2\sqrt{\frac{{\rm diam}(\mathcal{W}) L_{\rm OE}d'}{M_{\ell_{\rm OE}}}}.
\end{split}
\end{equation*}
By Lemma \ref{L3}, we obtain that there exist two uniform constants such that with the probability at least $1-e^{-t}>0$,
\begin{equation*}
\begin{split}
&\sup_{{\rm W}_{c}(D_{X'},D_{X_{\rm A}}) \leq \rho} \mathbb{E}_{\mathbf{x}\sim D_{{\rm O}}} \ell_{\rm OE}(\mathbf{f}_\mathbf{w}; \mathbf{x}) \\ \leq & \sup_{{\rm W}_{c}(D_{X'},\widehat{D}_{X_{\rm A}}) \leq \rho} \mathbb{E}_{\mathbf{x}\sim D_{{\rm O}}} \ell_{\rm OE}(\mathbf{f}_\mathbf{w}; \mathbf{x}) +  b_2 M_{\ell_{\rm OE}}\sqrt{\frac{t}{m}} + b_1 M_{\ell_{\rm OE}} \sqrt{\frac{{\rm diam}(\mathcal{W}) L_{\rm OE}d'}{\rho^2 m}},
\end{split}
\end{equation*}
where $b_1$ and $b_2$ are uniform constants.
\end{proof}

\begin{lemma}\label{L5.0}
Given the same assumptions in Theorem \ref{T1}, and for any $\mathbf{w} \in \mathcal{W}$ and any $\mathbf{x}, \mathbf{x}'\in \mathcal{X}$,
\begin{equation*}
|\ell_{\rm OE}(\mathbf{f}_\mathbf{w}; \mathbf{x})-\ell_{\rm OE}(\mathbf{f}_\mathbf{w}; \mathbf{x}')|\leq L_{c}c(\mathbf{x},\mathbf{x}'),
\end{equation*}
 then for any $\delta\geq 0$,
\begin{equation*}
\begin{split}
&\sup_{{\rm W}_{c}(D_{X'},D_{X_{\rm A}}) \leq \rho+\delta} \mathbb{E}_{\mathbf{x}\sim D_{X'}} \ell_{\rm OE}(\mathbf{f}_\mathbf{w}; \mathbf{x}) - \sup_{{\rm W}_{c}(D_{X'},D_{X_{\rm A}}) \leq \rho} \mathbb{E}_{\mathbf{x}\sim D_{X'}} \ell_{\rm OE}(\mathbf{f}_\mathbf{w}; \mathbf{x})  \leq  L_c \delta.
\end{split}
\end{equation*}
\end{lemma}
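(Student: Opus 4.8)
\textbf{Proof plan for Lemma~\ref{L5.0}.}
The plan is to relate the supremum over the radius-$(\rho+\delta)$ ball to the one over the radius-$\rho$ ball by showing that every distribution feasible for the larger ball can be ``contracted'' toward $D_{X_{\rm A}}$ along a mixture so that it lands in the smaller ball, while changing its expected loss by at most $L_c\delta$. Concretely, I would first dispose of the trivial case $\delta=0$ and then fix $\mathbf{w}\in\mathcal{W}$ and an arbitrary $\epsilon>0$, and choose an $\epsilon$-optimizer $D^{*}$ for the left-hand supremum, i.e. a distribution with ${\rm W}_c(D^{*},D_{X_{\rm A}})\le\rho+\delta$ and $\mathbb{E}_{\mathbf{x}\sim D^{*}}\ell_{\rm OE}(\mathbf{f}_\mathbf{w};\mathbf{x})\ge \sup_{{\rm W}_c(D_{X'},D_{X_{\rm A}})\le\rho+\delta}\mathbb{E}_{\mathbf{x}\sim D_{X'}}\ell_{\rm OE}(\mathbf{f}_\mathbf{w};\mathbf{x})-\epsilon$. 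Let $\pi$ be an optimal coupling of $D^{*}$ and $D_{X_{\rm A}}$, which exists since $c$ is a continuous metric (an $\epsilon$-optimal $\pi$ would also do, with a harmless extra $\epsilon$ absorbed at the end).

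Next I would carry out the contraction. Set $t=\rho/(\rho+\delta)\in[0,1)$ and $D_t=(1-t)D_{X_{\rm A}}+tD^{*}$. Gluing the diagonal coupling of $D_{X_{\rm A}}$ with itself (scaled by $1-t$) and the rescaled plan $t\pi$ yields a transport plan from $D_t$ to $D_{X_{\rm A}}$ whose marginals are exactly $D_t$ and $D_{X_{\rm A}}$ and whose cost is $t\,\mathbb{E}_{\pi}[c]=t\,{\rm W}_c(D^{*},D_{X_{\rm A}})\le t(\rho+\delta)=\rho$; hence ${\rm W}_c(D_t,D_{X_{\rm A}})\le\rho$, so $D_t$ is feasible for the right-hand supremum. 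For the loss, linearity of the expectation in the distribution gives $\mathbb{E}_{\mathbf{x}\sim D^{*}}\ell_{\rm OE}-\mathbb{E}_{\mathbf{x}\sim D_t}\ell_{\rm OE}=(1-t)\bigl(\mathbb{E}_{\mathbf{x}\sim D^{*}}\ell_{\rm OE}-\mathbb{E}_{\mathbf{x}\sim D_{X_{\rm A}}}\ell_{\rm OE}\bigr)$, and using the coupling $\pi$ together with the $L_c$-Lipschitz hypothesis, $\mathbb{E}_{\mathbf{x}\sim D^{*}}\ell_{\rm OE}-\mathbb{E}_{\mathbf{x}\sim D_{X_{\rm A}}}\ell_{\rm OE}=\mathbb{E}_{(\mathbf{x},\mathbf{x}')\sim\pi}\bigl[\ell_{\rm OE}(\mathbf{f}_\mathbf{w};\mathbf{x})-\ell_{\rm OE}(\mathbf{f}_\mathbf{w};\mathbf{x}')\bigr]\le L_c\,{\rm W}_c(D^{*},D_{X_{\rm A}})\le L_c(\rho+\delta)$. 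Multiplying by $1-t=\delta/(\rho+\delta)$ gives $\mathbb{E}_{\mathbf{x}\sim D^{*}}\ell_{\rm OE}-\mathbb{E}_{\mathbf{x}\sim D_t}\ell_{\rm OE}\le L_c\delta$.

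Finally I would assemble the inequalities: $\sup_{{\rm W}_c(D_{X'},D_{X_{\rm A}})\le\rho+\delta}\mathbb{E}_{\mathbf{x}\sim D_{X'}}\ell_{\rm OE}(\mathbf{f}_\mathbf{w};\mathbf{x})\le \mathbb{E}_{\mathbf{x}\sim D^{*}}\ell_{\rm OE}+\epsilon\le \mathbb{E}_{\mathbf{x}\sim D_t}\ell_{\rm OE}+L_c\delta+\epsilon\le \sup_{{\rm W}_c(D_{X'},D_{X_{\rm A}})\le\rho}\mathbb{E}_{\mathbf{x}\sim D_{X'}}\ell_{\rm OE}(\mathbf{f}_\mathbf{w};\mathbf{x})+L_c\delta+\epsilon$, and then let $\epsilon\downarrow 0$. (Equivalently, the same mixture satisfies ${\rm W}_c(D_t,D^{*})\le(1-t){\rm W}_c(D^{*},D_{X_{\rm A}})\le\delta$, and one may instead bound $\mathbb{E}_{D^{*}}\ell_{\rm OE}-\mathbb{E}_{D_t}\ell_{\rm OE}\le L_c{\rm W}_c(D_t,D^{*})\le L_c\delta$ directly from Lipschitzness and Kantorovich--Rubinstein duality.) I expect the only delicate point to be the measure-theoretic gluing step---checking that the scaled diagonal measure combined with $t\pi$ really has first marginal $D_t$ and second marginal $D_{X_{\rm A}}$---which is standard but should be spelled out; the remaining estimates are one-line computations once that construction is fixed.
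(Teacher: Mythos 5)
Your proof is correct and uses the same key construction as the paper: a mixture of $D_{X_{\rm A}}$ with an $\epsilon$-optimizer for the enlarged ball, with the mixing weight chosen so the mixture lands back inside the radius-$\rho$ ball and its loss is within $L_c\delta$ of the optimizer's. The paper establishes the needed Wasserstein bounds via Kantorovich--Rubinstein duality (with a Case 1/Case 2 split), while you argue via explicit coupling/gluing and handle both cases uniformly; the parenthetical alternative you mention is precisely the paper's route, so the two arguments are essentially the same.
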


\begin{proof}[Proof of Lemma \ref{L5.0}]
For each $\epsilon>0$, we set $D_{X'}^{\delta,\epsilon}$ satisfies that
\begin{equation*}
\begin{split}
\sup_{{\rm W}_{c}(D_{X'},D_{X_{\rm A}}) \leq \rho+\delta} &\mathbb{E}_{\mathbf{x}\sim D_{X'}} \ell_{\rm OE}(\mathbf{f}_\mathbf{w}; \mathbf{x}) \leq \mathbb{E}_{\mathbf{x}\sim D_{X'}^{\delta,\epsilon}} 
\ell_{\rm OE}(\mathbf{f}_\mathbf{w}; \mathbf{x})+\epsilon,
\end{split}
\end{equation*}
and
\begin{equation*}
{\rm W}_c(D_{X'}^{\delta,\epsilon},D_{X_{\rm A}}) \leq \rho+\delta.
\end{equation*}

\textbf{Case 1.} If 
\begin{equation*}
{\rm W}_c(D_{X'}^{\delta,\epsilon},D_{X_{\rm A}}) \leq \rho,
\end{equation*}
then
\begin{equation*}
\begin{split}
&\sup_{{\rm W}_{c}(D_{X'},D_{X_{\rm A}}) \leq \rho+\delta} \mathbb{E}_{\mathbf{x}\sim D_{X'}} \ell_{\rm OE}(\mathbf{f}_\mathbf{w}; \mathbf{x}) - \sup_{{\rm W}_{c}(D_{X'},D_{X_{\rm A}}) \leq \rho} \mathbb{E}_{\mathbf{x}\sim D_{X'}} \ell_{\rm OE}(\mathbf{f}_\mathbf{w}; \mathbf{x})  \leq  \epsilon.
\end{split}
\end{equation*}

\textbf{Case 2.} If
\begin{equation*}
{\rm W}_c(D_{X'}^{\delta,\epsilon},D_{X_{\rm A}}) > \rho,
\end{equation*}
then we consider a special distribution $D_{X'}'$, which is defined as follows: for some $u\in [0,1]$,
\begin{equation*}
D_{X'}' = (1-u) D_{X'}^{\delta,\epsilon} + u D_{X_{\rm A}}.
\end{equation*}
It is clear that
\begin{equation*}
{\rm W}_c(D_{X'}', D_{X_{\rm A}}) \leq (1-u) {\rm W}_c(D_{X'}^{\delta,\epsilon},D_{X_{\rm A}})\leq (1-u) (\rho +\delta).
\end{equation*}
hence, if we set $u = \delta/(\rho+\delta)$, 
\begin{equation*}
{\rm W}_c(D_{X'}', D_{X_{\rm A}}) \leq \rho.
\end{equation*}
Because $c(\cdot,\cdot)$ is a metric, Kantorovich–Rubinstein duality \cite{Villani2003TopicsIO} implies that
\begin{equation*}
\begin{split}
&{\rm W}_c(D_{X'}', D_{X'}^{\delta,\epsilon}) \\= &\sup_{\|f\|_{\rm Lip}\leq 1} \int_{\mathcal{X}} f(\mathbf{x}) {\rm d}D_{X'}'(\mathbf{x}) - \int_{\mathcal{X}} f(\mathbf{x}) {\rm d}D_{X'}^{\delta,\epsilon}(\mathbf{x}) 
\\ = & u \sup_{\|f\|_{\rm Lip}\leq 1} \int_{\mathcal{X}} f(\mathbf{x}) {\rm d}D_{X_{\rm A}}(\mathbf{x}) - \int_{\mathcal{X}} f(\mathbf{x}) {\rm d}D_{X'}^{\delta,\epsilon}(\mathbf{x}) 
\\
=&  u {\rm W}_c(D_{X_{\rm A}},D_{X'}^{\delta,\epsilon})
\\
= & \delta
\end{split}
\end{equation*}
By Kantorovich–Rubinstein duality \citep{Villani2003TopicsIO}, we also obtain that
\begin{equation*}
\begin{split}
&\sup_{{\rm W}_{c}(D_{X'},D_{X_{\rm A}}) \leq \rho+\delta} \mathbb{E}_{\mathbf{x}\sim D_{X'}} \ell_{\rm OE}(\mathbf{f}_\mathbf{w}; \mathbf{x}) - \sup_{{\rm W}_{c}(D_{X'},D_{X_{\rm A}}) \leq \rho} \mathbb{E}_{\mathbf{x}\sim D_{X'}} \ell_{\rm OE}(\mathbf{f}_\mathbf{w}; \mathbf{x}) \\ \leq &
 \mathbb{E}_{\mathbf{x}\sim D_{X'}^{\delta,\epsilon}} \ell_{\rm OE}(\mathbf{f}_\mathbf{w}; \mathbf{x}) - \mathbb{E}_{\mathbf{x}\sim D_{X'}'} \ell_{\rm OE}(\mathbf{f}_\mathbf{w}; \mathbf{x})+\epsilon
 \leq L_c \delta +\epsilon,
\end{split}
\end{equation*}
which implies that
\begin{equation*}
\begin{split}
\sup_{{\rm W}_{c}(D_{X'},D_{X_{\rm A}}) \leq \rho+\delta} \mathbb{E}_{\mathbf{x}\sim D_{X'}} \ell_{\rm OE}(\mathbf{f}_\mathbf{w}; \mathbf{x}) - \sup_{{\rm W}_{c}(D_{X'},D_{X_{\rm A}}) \leq \rho} \mathbb{E}_{\mathbf{x}\sim D_{X'}} \ell_{\rm OE}(\mathbf{f}_\mathbf{w}; \mathbf{x})
 \leq  L_c \delta.
\end{split}
\end{equation*}
By Cases 1 and 2, we prove this lemma.
\end{proof}

\begin{lemma}\label{L5}
Given the same assumptions in Theorem \ref{T1}, if
\begin{itemize}
\item{$\ell_{\rm OE}(\cdot; \mathbf{x})$ is  $L_{\rm OE}$-Lipschitz w.r.t. norm $\|\cdot\|$ for all $\mathbf{x}\in \mathcal{X}$;}
\item{ the parameter space $\mathcal{W}\subset \mathbb{R}^{d'}$ satisfies that 
\begin{equation*}
{\rm diam}(\mathcal{W}) = \sup_{\mathbf{w},\mathbf{w}'\in \mathcal{W}} \|\mathbf{w}-\mathbf{w}'\|<+\infty;
\end{equation*}}
\item{ for each $\mathbf{w} \in \mathcal{W}$ and any $\mathbf{x}, \mathbf{x}'\in \mathcal{X}$,
\begin{equation*}
|\ell_{\rm OE}(\mathbf{f}_\mathbf{w}; \mathbf{x})-\ell_{\rm OE}(\mathbf{f}_\mathbf{w}; \mathbf{x}')|\leq L_{c}c(\mathbf{x},\mathbf{x}'),
\end{equation*}}
\end{itemize}
then  with the probability at least $1-e^{-t}>0$, for any $\rho > 0$ and $\mathbf{w} \in \mathcal{W}$, 
\begin{equation}\label{Eq:2.4.3}
\begin{split}
&\sup_{{\rm W}_{c}(D_{X'},D_{X_{\rm A}}) \leq \rho} \mathbb{E}_{\mathbf{x}\sim D_{X'}} \ell_{\rm OE}(\mathbf{f}_\mathbf{w}; \mathbf{x}) \\ \leq &\sup_{{\rm W}_{c}(D_{X'},\widehat{D}_{X_{\rm A}}) \leq \rho} \mathbb{E}_{\mathbf{x}\sim D_{X'}} \ell_{\rm OE}(\mathbf{f}_\mathbf{w}; \mathbf{x})+b_2 M_{\ell_{\rm OE}} \sqrt{\frac{t}{m}} + b_1 \min \{L_c ,  \frac{M_{\ell_{\rm OE}} }{\rho} \} \sqrt{\frac{{\rm diam}(\mathcal{W}) L_{\rm OE}d'}{m}},
\end{split}
\end{equation}
where $b_1$ and $b_2$ are uniform constants. 
\end{lemma}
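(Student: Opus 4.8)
The plan is to obtain \eqref{Eq:2.4.3} as the \emph{minimum} of two one-sided comparisons between the population worst-case risk $\sup_{{\rm W}_c(D_{X'},D_{X_{\rm A}})\le\rho}\mathbb{E}_{\mathbf{x}\sim D_{X'}}\ell_{\rm OE}(\mathbf{f}_\mathbf{w};\mathbf{x})$ and its empirical analogue. The branch carrying the factor $M_{\ell_{\rm OE}}/\rho$ is already Lemma~\ref{L4}, so the only genuinely new work is the branch carrying the factor $L_c$, followed by the observation that the two can be combined on a single high-probability event.

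First I would isolate a deterministic fact about the robust surrogate. Setting $g(\gamma)=\gamma\rho+\frac{1}{m}\sum_{i=1}^m\phi_{\gamma}(\mathbf{w};\mathbf{x}_{\rm A}^i)$, Theorem~\ref{T-dual} gives $\inf_{\gamma\ge0}g(\gamma)=\sup_{{\rm W}_c(D_{X'},\widehat{D}_{X_{\rm A}})\le\rho}\mathbb{E}_{\mathbf{x}\sim D_{X'}}\ell_{\rm OE}(\mathbf{f}_\mathbf{w};\mathbf{x})$. The key claim is that this infimum may be restricted to $\gamma\in[0,L_c]$: if $\gamma\ge L_c$, then for every $\mathbf{x}'$ the $L_c$-Lipschitz assumption yields $\ell_{\rm OE}(\mathbf{f}_\mathbf{w};\mathbf{x}')-\gamma c(\mathbf{x}',\mathbf{x})\le \ell_{\rm OE}(\mathbf{f}_\mathbf{w};\mathbf{x})+(L_c-\gamma)c(\mathbf{x}',\mathbf{x})\le \ell_{\rm OE}(\mathbf{f}_\mathbf{w};\mathbf{x})$, while $\mathbf{x}'=\mathbf{x}$ attains $\ell_{\rm OE}(\mathbf{f}_\mathbf{w};\mathbf{x})$, so $\phi_{\gamma}(\mathbf{w};\cdot)=\ell_{\rm OE}(\mathbf{f}_\mathbf{w};\cdot)$ and $g$ is nondecreasing on $[L_c,\infty)$. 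Since $g$ is convex, the restricted infimum is attained at some $\gamma^\star\in[0,L_c]$ (if one wishes to sidestep attainment, a near-minimiser together with the $\epsilon$-slack device from the proof of Lemma~\ref{L1} works verbatim).

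Next I would invoke Lemma~\ref{L2}, which on one event of probability at least $1-e^{-t}$ bounds $\sup_{{\rm W}_c(D_{X'},D_{X_{\rm A}})\le\rho}\mathbb{E}_{\mathbf{x}\sim D_{X'}}\ell_{\rm OE}(\mathbf{f}_\mathbf{w};\mathbf{x})$ by $g(\gamma)+b_2M_{\ell_{\rm OE}}\sqrt{t/m}+\gamma b_1\sqrt{M_{\ell_{\rm OE}}/m}\int_0^1\sqrt{\log\mathcal{N}(\mathcal{F}_{\rm OE},M_{\ell_{\rm OE}}\epsilon,L^{\infty})}\,{\rm d}\epsilon$ uniformly in $\gamma\ge0$, $\rho\ge0$ and $\mathbf{w}\in\mathcal{W}$. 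Evaluating at $\gamma=\gamma^\star$ makes the term $g(\gamma^\star)$ equal to the empirical worst-case risk $\sup_{{\rm W}_c(D_{X'},\widehat{D}_{X_{\rm A}})\le\rho}\mathbb{E}_{\mathbf{x}\sim D_{X'}}\ell_{\rm OE}(\mathbf{f}_\mathbf{w};\mathbf{x})$, and the last term acquires the prefactor $\gamma^\star\le L_c$. For the covering integral I would reuse the estimate from the proof of Lemma~\ref{L4}: Lipschitzness of $\ell_{\rm OE}$ in $\mathbf{w}$ and ${\rm diam}(\mathcal{W})<\infty$ give $\mathcal{N}(\mathcal{F}_{\rm OE},M_{\ell_{\rm OE}}\epsilon,L^{\infty})\le(1+{\rm diam}(\mathcal{W})L_{\rm OE}/(M_{\ell_{\rm OE}}\epsilon))^{d'}$, so $\int_0^1\sqrt{\log\mathcal{N}(\mathcal{F}_{\rm OE},M_{\ell_{\rm OE}}\epsilon,L^{\infty})}\,{\rm d}\epsilon\le 2\sqrt{{\rm diam}(\mathcal{W})L_{\rm OE}d'/M_{\ell_{\rm OE}}}$; substituting and absorbing the constant into $b_1$ produces the $L_c$-branch, i.e.\ \eqref{Eq:2.4.3} with $\min\{L_c,M_{\ell_{\rm OE}}/\rho\}$ replaced by $L_c$.

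Finally, because Lemma~\ref{L4} produces the twin bound with $M_{\ell_{\rm OE}}/\rho$ in place of $L_c$ and is derived from the \emph{same} Lemma~\ref{L2} event (it merely truncates $\gamma$ at $\mathcal{O}(M_{\ell_{\rm OE}}/\rho)$ via Lemma~\ref{L1} rather than at $L_c$), both inequalities hold simultaneously on that event, and taking the smaller right-hand side produces the factor $\min\{L_c,M_{\ell_{\rm OE}}/\rho\}$. I expect the only real obstacle to be the deterministic identity $\phi_\gamma(\mathbf{w};\cdot)=\ell_{\rm OE}(\mathbf{f}_\mathbf{w};\cdot)$ for $\gamma\ge L_c$ and the consequent legitimacy of truncating the dual variable at $L_c$; the rest is a careful re-run of the arguments behind Lemmas~\ref{L2}--\ref{L4}, the one bookkeeping point being that Lemma~\ref{L2} is uniform in $(\gamma,\rho,\mathbf{w})$, so the data-dependent choice $\gamma=\gamma^\star$ is admissible while one still concludes for all $\rho>0$ and $\mathbf{w}\in\mathcal{W}$.
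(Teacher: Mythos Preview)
Your proposal is correct, and the overall architecture—establish the $L_c$-branch, quote Lemma~\ref{L4} for the $M_{\ell_{\rm OE}}/\rho$-branch, and take the minimum on the common Lemma~\ref{L2} event—matches the paper. The one substantive difference is in how the $L_c$-branch is obtained.

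You truncate the dual variable: from the $L_c$-Lipschitz hypothesis you show $\phi_\gamma(\mathbf{w};\cdot)=\ell_{\rm OE}(\mathbf{f}_\mathbf{w};\cdot)$ for $\gamma\ge L_c$, so the empirical dual $g(\gamma)=\gamma\rho+\frac{1}{m}\sum_i\phi_\gamma(\mathbf{w};\mathbf{x}_{\rm A}^i)$ is minimised on $[0,L_c]$, and plugging the (near-)minimiser $\gamma^\star\le L_c$ into the uniform Lemma~\ref{L2} bound turns the $\gamma$-prefactor into at most $L_c$. The paper instead \emph{enlarges the radius}: it writes the Lemma~\ref{L2} right-hand side as $\gamma\rho_m+\frac{1}{m}\sum_i\phi_\gamma(\mathbf{w};\mathbf{x}_{\rm A}^i)+b_2M_{\ell_{\rm OE}}\sqrt{t/m}$ with $\rho_m=\rho+b_1\sqrt{{\rm diam}(\mathcal{W})L_{\rm OE}d'/m}$, takes the full infimum over $\gamma\ge0$ to land on the empirical worst-case risk at radius $\rho_m$, and then invokes Lemma~\ref{L5.0} (Lipschitz continuity of $\rho\mapsto\sup_{{\rm W}_c\le\rho}\mathbb{E}\,\ell_{\rm OE}$ with constant $L_c$) to pass from radius $\rho_m$ back to radius $\rho$ at cost $L_c(\rho_m-\rho)$. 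Your route is a little more direct and avoids Lemma~\ref{L5.0} altogether; the paper's route is more modular in that the Lipschitz hypothesis enters only through a separate radius-continuity lemma. Either way the $M_{\ell_{\rm OE}}/\rho$-branch and the final $\min$ are handled exactly as you describe.
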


\begin{proof}
By Lemma \ref{L2} and the similar proving process in Lemma \ref{L4}, we obtain that 
 with the probability at least $1-e^{-t}>0$, for any $\gamma\geq 0$, $\rho \geq 0$ and $\mathbf{w} \in \mathcal{W}$, we have
\begin{equation}\label{Eq:2.4.1}
\begin{split}
&\sup_{{\rm W}_{c}(D_{X'},D_{X_{\rm A}}) \leq \rho} \mathbb{E}_{\mathbf{x}\sim D_{X'}} \ell_{\rm OE}(\mathbf{f}_\mathbf{w}; \mathbf{x}) \\ \leq & \gamma \rho + \frac{1}{m} \sum_{i=1}^m \phi_{\gamma}(\mathbf{w};\mathbf{x}_{\rm A}^i)
 +b_2 M_{\ell_{\rm OE}} \sqrt{\frac{t}{m}} +  \gamma b_1 \sqrt{\frac{{\rm diam}(\mathcal{W}) L_{\rm OE}d'}{m}},
\end{split}
\end{equation}
where $b_1$ and $b_2$ are uniform constants. 

Let
\begin{equation*}
\rho_m = \rho+ b_1 \sqrt{\frac{{\rm diam}(\mathcal{W}) L_{\rm OE}d'}{m}},
\end{equation*}
and
\begin{equation*}
\begin{split}
\Delta_m &= \sup_{{\rm W}_{c}(D_{X'},\widehat{D}_{X_{\rm A}}) \leq \rho_m} \mathbb{E}_{\mathbf{x}\sim D_{X'}} \ell_{\rm OE}(\mathbf{f}_\mathbf{w}; \mathbf{x}) - \sup_{{\rm W}_{c}(D_{X'},\widehat{D}_{X_{\rm A}}) \leq \rho} \mathbb{E}_{\mathbf{x}\sim D_{X'}} \ell_{\rm OE}(\mathbf{f}_\mathbf{w}; \mathbf{x}).
\end{split}
\end{equation*}

Note that
\begin{equation*}
\begin{split}
\inf_{\gamma \geq 0} \gamma \rho_m +\frac{1}{m} \sum_{i=1}^m \phi_{\gamma}(\mathbf{w};\mathbf{x}_{\rm A}^i)
=\sup_{{\rm W}_{c}(D_{X'},\widehat{D}_{X_{\rm A}}) \leq \rho_m} \mathbb{E}_{\mathbf{x}\sim D_{X'}} \ell_{\rm OE}(\mathbf{f}_\mathbf{w}; \mathbf{x}),
\end{split}
\end{equation*}
and by Lemma ~ \ref{L5.0},
\begin{equation*}
\Delta_m \leq L_c(\rho_m-\rho) = b_1 L_c \sqrt{\frac{{\rm diam}(\mathcal{W}) L_{\rm OE}d'}{m}},
\end{equation*}
hence, by Eq. \eqref{Eq:2.4.1}, we know that with the probability at least $1-e^{-t}>0$, for any $\rho \geq 0$ and $\mathbf{w} \in \mathcal{W}$, 
\begin{equation}\label{Eq:2.4.2}
\begin{split}
&\sup_{{\rm W}_{c}(D_{X'},D_{X_{\rm A}}) \leq \rho} \mathbb{E}_{\mathbf{x}\sim D_{X'}} \ell_{\rm OE}(\mathbf{f}_\mathbf{w}; \mathbf{x}) \\ \leq &\sup_{{\rm W}_{c}(D_{X'},\widehat{D}_{X_{\rm A}}) \leq \rho} \mathbb{E}_{\mathbf{x}\sim D_{X'}} \ell_{\rm OE}(\mathbf{f}_\mathbf{w}; \mathbf{x}) + b_1 L_c \sqrt{\frac{{\rm diam}(\mathcal{W}) L_{\rm OE}d'}{m}}+b_2 M_{\ell_{\rm OE}} \sqrt{\frac{t}{m}},
\end{split}
\end{equation}
where $b_1$ and $b_2$ are uniform constants. 

Combining Lemma \ref{L2} with Eq. \eqref{Eq:2.4.2}, we know that with the probability at least $1-e^{-t}>0$, for any $\rho > 0$ and $\mathbf{w} \in \mathcal{W}$, 
\begin{equation*}
\begin{split}
&\sup_{{\rm W}_{c}(D_{X'},D_{X_{\rm A}}) \leq \rho} \mathbb{E}_{\mathbf{x}\sim D_{X'}} \ell_{\rm OE}(\mathbf{f}_\mathbf{w}; \mathbf{x}) \\ \leq &\sup_{{\rm W}_{c}(D_{X'},\widehat{D}_{X_{\rm A}}) \leq \rho} \mathbb{E}_{\mathbf{x}\sim D_{X'}} \ell_{\rm OE}(\mathbf{f}_\mathbf{w}; \mathbf{x})+b_2 M_{\ell_{\rm OE}} \sqrt{\frac{t}{m}} + b_1 \min \{L_c ,  \frac{M_{\ell_{\rm OE}} }{\rho} \} \sqrt{\frac{{\rm diam}(\mathcal{W}) L_{\rm OE}d'}{m}},
\end{split}
\end{equation*}
where $b_1$ and $b_2$ are uniform constants. 
\end{proof}

\begin{lemma}\label{L7}
Given the same assumptions in Lemma \ref{L3}, for a fixed $\mathbf{w}_0\in \mathcal{W}$, then with the probability at least $1-e^{-t}>0$, for any $\rho \geq 0$
\begin{equation*}
\begin{split}
\sup_{{\rm W}_{c}(D_{X'},\widehat{D}_{X_{\rm A}}) \leq \rho} \mathbb{E}_{\mathbf{x}\sim D_{X'}} \ell_{\rm OE}(\mathbf{f}_{\mathbf{w}_0}; \mathbf{x}) \leq 
\sup_{{\rm W}_{c}(D_{X'},D_{X_{\rm A}}) \leq \rho} \mathbb{E}_{\mathbf{x}\sim D_{X'}} \ell_{\rm OE}(\mathbf{f}_{\mathbf{w}_0}; \mathbf{x})+2M_{\ell_{\rm OE}} \sqrt{\frac{2t}{m}}.
\end{split}
\end{equation*}
\begin{proof}[Proof of Lemma \ref{L7}] By \citet{SinhaND18}, it is clear that 
\begin{equation*}
\begin{split}
\sup_{{\rm W}_{c}(D_{X'},D_{X_{\rm A}}) \leq \rho} \mathbb{E}_{\mathbf{x}\sim D_{X'}} \ell_{\rm OE}(\mathbf{f}_{\mathbf{w}_0}; \mathbf{x}) = \inf_{\gamma \geq 0} [\gamma \rho + \mathbb{E}_{\mathbf{x}\sim D_{X_{\rm A}}} \phi_{\gamma}(\mathbf{w}_0;\mathbf{x})]
\end{split}
\end{equation*}
Therefore, for each $\epsilon>0$, there exists a constant $\gamma_{\epsilon}\geq 0$ such that
\begin{equation*}
\begin{split}
   \gamma_{\epsilon} \rho + \mathbb{E}_{\mathbf{x}\sim D_{X_{\rm A}}} \phi_{\gamma_{\epsilon}}(\mathbf{w}_0;\mathbf{x}) \leq \sup_{{\rm W}_{c}(D_{X'},D_{X_{\rm A}}) \leq \rho} \mathbb{E}_{\mathbf{x}\sim D_{X'}} \ell_{\rm OE}(\mathbf{f}_{\mathbf{w}_0}; \mathbf{x})+\epsilon.
   \end{split}
\end{equation*}
Combining the above inequality, Lemma \ref{L1} and McDiarmid's Inequality, then with the probability at least
\begin{equation*}
1- \exp{(\frac{-\epsilon_0^2m}{2M^2_{\ell_{\rm OE}}})}>0,
\end{equation*}
we have
\begin{equation*}
\begin{split}
 &\mathbb{E}_{\mathbf{x}\sim \widehat{D}_{X_{\rm A}}} \phi_{\gamma_{\epsilon}}(\mathbf{w}_0;\mathbf{x})
 \leq
 \mathbb{E}_{\mathbf{x}\sim D_{X_{\rm A}}} \phi_{\gamma_{\epsilon}}(\mathbf{w}_0;\mathbf{x})+ \epsilon_0.
 \end{split}
\end{equation*}
If we set $t = {\epsilon_0^2m}/{2M^2_{\ell_{\rm OE}}}$, then
\begin{equation*}
\epsilon_0 = M_{\ell_{\rm OE}} \sqrt{\frac{2t}{m}}.
\end{equation*}
Hence, with the probability at least $1-e^{-t}>0$,
we have
\begin{equation*}
\begin{split}
 &\gamma_{\epsilon}\rho + \mathbb{E}_{\mathbf{x}\sim \widehat{D}_{X_{\rm A}}} \phi_{\gamma_{\epsilon}}(\mathbf{w}_0;\mathbf{x})
 \leq
\sup_{{\rm W}_{c}(D_{X'},D_{X_{\rm A}}) \leq \rho} \mathbb{E}_{\mathbf{x}\sim D_{X'}} \ell_{\rm OE}(\mathbf{f}_{\mathbf{w}_0}; \mathbf{x})+\epsilon+ M_{\ell_{\rm OE}} \sqrt{\frac{2t}{m}},
 \end{split}
\end{equation*}
which implies that with the probability at least $1-e^{-t}>0$,
\begin{equation*}
\begin{split}
&\sup_{{\rm W}_{c}(D_{X'},\widehat{D}_{X_{\rm A}}) \leq \rho} \mathbb{E}_{\mathbf{x}\sim D_{X'}} \ell_{\rm OE}(\mathbf{f}_{\mathbf{w}_0}; \mathbf{x})  \leq 
\sup_{{\rm W}_{c}(D_{X'},D_{X_{\rm A}}) \leq \rho} \mathbb{E}_{\mathbf{x}\sim D_{X'}} \ell_{\rm OE}(\mathbf{f}_{\mathbf{w}_0}; \mathbf{x})+\epsilon+ M_{\ell_{\rm OE}} \sqrt{\frac{2t}{m}},
\end{split}
\end{equation*}
because
\begin{equation*}
\begin{split}
&\gamma_{\epsilon}\rho + \mathbb{E}_{\mathbf{x}\sim \widehat{D}_{X_{\rm A}}} \phi_{\gamma_{\epsilon}}(\mathbf{w}_0;\mathbf{x}) \geq  \sup_{{\rm W}_{c}(D_{X'},\widehat{D}_{X_{\rm A}}) \leq \rho} \mathbb{E}_{\mathbf{x}\sim D_{X'}} \ell_{\rm OE}(\mathbf{f}_{\mathbf{w}_0}; \mathbf{x}).
\end{split}
\end{equation*}
By setting $\epsilon = M_{\ell_{\rm OE}} \sqrt{{2t}/{m}}$, we complete this proof.
\end{proof}
\end{lemma}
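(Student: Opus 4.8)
The plan is to collapse the infinite-dimensional supremum over the Wasserstein ball into a one-dimensional dual problem and then apply McDiarmid's inequality to a single bounded function. By the strong duality of Theorem~\ref{T-dual} (the result of \citet{Blanchet2016QuantifyingDM,SinhaND18}), applied once with the center $D_{X_{\rm A}}$ and once with its empirical counterpart $\widehat{D}_{X_{\rm A}}$,
\begin{equation*}
\sup_{{\rm W}_c(D_{X'},D_{X_{\rm A}})\leq\rho}\mathbb{E}_{\mathbf{x}\sim D_{X'}}\ell_{\rm OE}(\mathbf{f}_{\mathbf{w}_0};\mathbf{x}) = \inf_{\gamma\geq 0}\Big\{\gamma\rho + \mathbb{E}_{\mathbf{x}\sim D_{X_{\rm A}}}\phi_\gamma(\mathbf{w}_0;\mathbf{x})\Big\},
\end{equation*}
and likewise with $D_{X_{\rm A}}$ replaced by $\widehat{D}_{X_{\rm A}}$.

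First I would fix $\epsilon>0$ and pick $\gamma_\epsilon\geq 0$ that is $\epsilon$-optimal for the \emph{population} dual problem, i.e.\ $\gamma_\epsilon\rho + \mathbb{E}_{\mathbf{x}\sim D_{X_{\rm A}}}\phi_{\gamma_\epsilon}(\mathbf{w}_0;\mathbf{x})$ exceeds the population robust risk by at most $\epsilon$. The point of taking $\gamma_\epsilon$ from the population problem is that it is a \emph{deterministic} scalar, so $\mathbf{x}\mapsto\phi_{\gamma_\epsilon}(\mathbf{w}_0;\mathbf{x})$ is a single fixed function; by Lemma~\ref{L1} (using that $c$ is a metric, so $c(\mathbf{x},\mathbf{x})=0$ and $c\geq 0$) it takes values in $[-M_{\ell_{\rm OE}},M_{\ell_{\rm OE}}]$ for \emph{every} $\gamma\geq 0$. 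Hence no covering-number or uniform argument is needed, only control of an empirical average of one bounded function.

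Next I would apply the bounded-differences inequality to $\frac{1}{m}\sum_{i=1}^m\phi_{\gamma_\epsilon}(\mathbf{w}_0;\mathbf{x}_{\rm A}^i)$: changing one i.i.d.\ sample moves this average by at most $2M_{\ell_{\rm OE}}/m$, so with $t=\epsilon_0^2 m/(2M_{\ell_{\rm OE}}^2)$ one has, with probability at least $1-e^{-t}$, $\mathbb{E}_{\mathbf{x}\sim\widehat{D}_{X_{\rm A}}}\phi_{\gamma_\epsilon}(\mathbf{w}_0;\mathbf{x})\leq\mathbb{E}_{\mathbf{x}\sim D_{X_{\rm A}}}\phi_{\gamma_\epsilon}(\mathbf{w}_0;\mathbf{x})+M_{\ell_{\rm OE}}\sqrt{2t/m}$. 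Because the empirical robust risk is an infimum over $\gamma$, it is bounded above by the empirical dual objective evaluated at $\gamma=\gamma_\epsilon$; combining this with the McDiarmid step and the $\epsilon$-optimality of $\gamma_\epsilon$ gives
\begin{equation*}
\sup_{{\rm W}_c(D_{X'},\widehat{D}_{X_{\rm A}})\leq\rho}\mathbb{E}_{\mathbf{x}\sim D_{X'}}\ell_{\rm OE}(\mathbf{f}_{\mathbf{w}_0};\mathbf{x}) \leq \sup_{{\rm W}_c(D_{X'},D_{X_{\rm A}})\leq\rho}\mathbb{E}_{\mathbf{x}\sim D_{X'}}\ell_{\rm OE}(\mathbf{f}_{\mathbf{w}_0};\mathbf{x}) + \epsilon + M_{\ell_{\rm OE}}\sqrt{\frac{2t}{m}}.
\end{equation*}
Setting $\epsilon = M_{\ell_{\rm OE}}\sqrt{2t/m}$ yields the stated slack $2M_{\ell_{\rm OE}}\sqrt{2t/m}$, and uniformity in $\rho\geq 0$ is immediate since the argument never fixes $\rho$.

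I do not expect a real obstacle; the only delicate bookkeeping is that $\gamma_\epsilon$ must be chosen for the population dual (so it is sample-independent and McDiarmid is legitimate) and that the empirical robust risk is then controlled by \emph{plugging in} $\gamma_\epsilon$ rather than re-optimizing — the inequality points the right way precisely because the empirical side is an infimum over $\gamma$. One should also confirm that Lemma~\ref{L1}'s boundedness applies to an arbitrarily large $\gamma_\epsilon$, which it does since the $-\gamma c$ term is nonpositive and simply drops out of the upper bound on $\phi_{\gamma_\epsilon}$.
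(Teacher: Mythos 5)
Your proposal is correct and follows essentially the same route as the paper's proof: strong duality for both centers, an $\epsilon$-optimal deterministic $\gamma_\epsilon$ chosen for the population dual, McDiarmid applied to the single bounded function $\phi_{\gamma_\epsilon}(\mathbf{w}_0;\cdot)$ (with boundedness from Lemma~\ref{L1}), plugging $\gamma_\epsilon$ into the empirical infimum, and finally setting $\epsilon = M_{\ell_{\rm OE}}\sqrt{2t/m}$. The delicate points you flag — sample-independence of $\gamma_\epsilon$ and the direction of the infimum inequality — are exactly the ones the paper's argument relies on.
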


\begin{lemma}\label{L8} If $0\leq \ell(\mathbf{f}_\mathbf{w};\mathbf{x},y)\leq M_{\ell}$, then with the probability at least $1-e^{-t}>0$, we have that for any $\mathbf{w}\in \mathcal{W}$,
\begin{equation*}
\begin{split}
 &\mathbb{E}_{(\mathbf{x},y) \sim D_{X_{\rm I}Y_{\rm I}}}  \ell(\mathbf{f}_\mathbf{w}; \mathbf{x}, y)-\frac{1}{n}\sum_{i=1}^n \ell(\mathbf{f}_\mathbf{w}; \mathbf{x}_{\rm I}^i, y_{\rm I}^i) \leq \frac{b_0 M_{\ell}}{\sqrt{n}}  \int_{0}^{1} \sqrt{\log\mathcal{N}(\mathcal{F},M_{\ell}\epsilon,L^{\infty})}  {\rm d}\epsilon+ M_{\ell}\sqrt{\frac{2t}{n}},
 \end{split}
\end{equation*}
where $b_0$ is a uniform constant.
\end{lemma}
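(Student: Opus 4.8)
The plan is to prove a one‑sided uniform deviation bound for the function class $\mathcal{F}=\{\ell(\mathbf{f}_{\mathbf{w}};\cdot):\mathbf{w}\in\mathcal{W}\}$ by the classical recipe: bounded‑difference concentration around the mean, then symmetrization and chaining to control the mean. Concretely, set
\[
\Phi(S)=\sup_{\mathbf{w}\in\mathcal{W}}\Big(\mathbb{E}_{(\mathbf{x},y)\sim D_{X_{\rm I}Y_{\rm I}}}\ell(\mathbf{f}_{\mathbf{w}};\mathbf{x},y)-\tfrac{1}{n}\sum_{i=1}^n\ell(\mathbf{f}_{\mathbf{w}};\mathbf{x}_{\rm I}^i,y_{\rm I}^i)\Big),
\]
viewed as a function of the i.i.d.\ sample $S=\{(\mathbf{x}_{\rm I}^i,y_{\rm I}^i)\}_{i=1}^n$. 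Since $0\le\ell\le M_\ell$, replacing one sample point changes the empirical average, hence $\Phi$, by at most $M_\ell/n$. Applying McDiarmid's bounded‑difference inequality with $\sum_i c_i^2=M_\ell^2/n$ gives, with probability at least $1-e^{-t}$,
\[
\Phi(S)\le\mathbb{E}_S[\Phi(S)]+M_\ell\sqrt{\tfrac{2t}{n}} .
\]

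Next I would bound $\mathbb{E}_S[\Phi(S)]$. By the standard symmetrization argument, introducing i.i.d.\ Rademacher signs $\sigma_i$,
\[
\mathbb{E}_S[\Phi(S)]\le 2\,\mathbb{E}_{S,\boldsymbol{\sigma}}\Big[\sup_{\mathbf{w}\in\mathcal{W}}\tfrac1n\sum_{i=1}^n\sigma_i\,\ell(\mathbf{f}_{\mathbf{w}};\mathbf{x}_{\rm I}^i,y_{\rm I}^i)\Big],
\]
i.e.\ twice the expected empirical Rademacher complexity of $\mathcal{F}$. Because every member of $\mathcal{F}$ takes values in $[0,M_\ell]$, the class has $L^\infty$‑diameter at most $M_\ell$, and the data‑dependent $L^2$ covering numbers are dominated by $\mathcal{N}(\mathcal{F},\epsilon,L^\infty)$. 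Dudley's entropy integral (chaining) therefore yields, for a universal constant $C$,
\[
\mathbb{E}_{\boldsymbol{\sigma}}\Big[\sup_{\mathbf{w}}\tfrac1n\sum_i\sigma_i\,\ell(\mathbf{f}_{\mathbf{w}};\mathbf{x}_{\rm I}^i,y_{\rm I}^i)\Big]\le\frac{C}{\sqrt n}\int_0^{M_\ell}\sqrt{\log\mathcal{N}(\mathcal{F},\epsilon,L^\infty)}\,d\epsilon .
\]
A change of variables $\epsilon\mapsto M_\ell\epsilon$ rewrites the right‑hand side as $\tfrac{C M_\ell}{\sqrt n}\int_0^1\sqrt{\log\mathcal{N}(\mathcal{F},M_\ell\epsilon,L^\infty)}\,d\epsilon$. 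Setting $b_0$ to absorb the factor $2C$ and combining with the concentration step gives exactly the claimed inequality.

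All of this is routine; the only step needing a little care is the chaining bound — ensuring it is carried out against the distribution‑free $L^\infty$ covering number (rather than an empirical $L^2$ one) and keeping track of the $\epsilon$‑rescaling so that the final integral has the stated normalization $\int_0^1\sqrt{\log\mathcal{N}(\mathcal{F},M_\ell\epsilon,L^\infty)}\,d\epsilon$. This is the same machinery that underlies the OOD‑side estimate in Lemma~\ref{L2} (Theorem~3 of \citet{SinhaND18}), specialized here to the simpler ID loss with no Wasserstein inflation, so no new idea beyond symmetrization, Dudley, and McDiarmid is required.
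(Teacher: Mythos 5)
Your proof is correct, and it reaches the same bound, but by a slightly different intermediate route than the paper. Both arguments share the outer structure: McDiarmid's bounded-difference inequality controls the fluctuation of $\Phi(S)=\sup_{\mathbf{w}}(\mathbb{E}\ell-\widehat{\mathbb{E}}\ell)$ around its mean (the $M_\ell\sqrt{2t/n}$ term), and then a chaining/Dudley bound controls $\mathbb{E}_S[\Phi(S)]$ via the $L^\infty$ covering number of $\mathcal{F}$. Where you diverge is in how the chaining step is set up. You symmetrize first, reducing $\mathbb{E}_S[\Phi(S)]$ to twice the expected Rademacher complexity of $\mathcal{F}$, and then apply Dudley's entropy integral to the Rademacher process, noting that the empirical $L^2$ covering number is dominated by the distribution-free $L^\infty$ one. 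The paper instead works directly with the centered empirical process $X_{\ell(\mathbf{f}_\mathbf{w};\cdot)}$: it invokes Hoeffding's sub-Gaussian tail and the centering lemma (Vershynin, Prop.~2.6.1 and Lem.~2.6.8) to show that the increments satisfy $\|X_{\ell_\mathbf{w}}-X_{\ell_{\mathbf{w}'}}\|_{\Phi_2}\le \tfrac{c_0}{\sqrt{n}}\|\ell_\mathbf{w}-\ell_{\mathbf{w}'}\|_{L^\infty}$, and then applies Dudley's entropy integral directly to this sub-Gaussian process indexed by $\mathcal{W}$, skipping the Rademacher detour. The two are textbook-equivalent up to universal constants (your factor of $2$ from symmetrization gets absorbed into $b_0$, just as you say), and both truncate the Dudley integral at $M_\ell$ because the $L^\infty$ diameter of $\mathcal{F}$ is bounded by $M_\ell$, followed by the change of variables $\epsilon\mapsto M_\ell\epsilon$. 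In short: your route uses symmetrization as the bridge to chaining, the paper's route uses the sub-Gaussian increment condition directly; either is standard and buys nothing over the other here, since the loss is uniformly bounded and the $L^\infty$ metric is used in both cases.
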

\begin{proof}[Proof of Lemma \ref{L8}]
Let 
\begin{equation*}
X_{\ell(\mathbf{f}_\mathbf{w};\cdot)} = \mathbb{E}_{(\mathbf{x},y) \sim D_{X_{\rm I}Y_{\rm I}}}  \ell(\mathbf{f}_\mathbf{w}; \mathbf{x}, y)-\frac{1}{n}\sum_{i=1}^n \ell(\mathbf{f}_\mathbf{w}; \mathbf{x}_{\rm I}^i, y_{\rm I}^i).
\end{equation*}
Then, it is clear that
\begin{equation*}
\mathbb{E}_{S\sim D_{X_{\rm I}Y_{\rm I}}^n} X_{\ell(\mathbf{f}_\mathbf{w};\cdot)} = 0.
\end{equation*}
By Proposition 2.6.1 and Lemma 2.6.8 in \citet{Vershynin2018HighDimensionalP},
\begin{equation*}
 \|X_{\ell(\mathbf{f}_\mathbf{w};\cdot)}-X_{\ell(\mathbf{f}_{\mathbf{w}'};\cdot)}\|_{\Phi_2}\leq \frac{c_0}{\sqrt{n}}\|\ell(\mathbf{f}_\mathbf{w};\cdot) -\ell(\mathbf{f}_{\mathbf{w}'};\cdot) \|_{L^{\infty}},
\end{equation*}
where $\|\cdot\|_{\Phi_2}$ is the sub-gaussian norm and $c_0$ is a uniform constant.
Therefore, by Dudley’s entropy integral \citep{Vershynin2018HighDimensionalP}, we have
\begin{equation*}
\mathbb{E}_{S\sim D_{X_{\rm I}Y_{\rm I}}^n} \sup_{\mathbf{w} \in \mathcal{W}} X_{\ell(\mathbf{f}_\mathbf{w};\cdot)} \leq  \frac{b_0}{\sqrt{n}}\int_{0}^{+\infty} \sqrt{\log \mathcal{N}(\mathcal{F},\epsilon,L^{\infty}) }{\rm d} \epsilon,
\end{equation*}
where $b_0$ is a uniform constant and 
\begin{equation*}
\mathcal{F}=\{\ell(\mathbf{f}_\mathbf{w};\mathbf{x},y) : \mathbf{w} \in \mathcal{W}\}.
\end{equation*}
Note that
\begin{equation*}
\begin{split}
\mathbb{E}_{S\sim D_{X_{\rm I}Y_{\rm I}}^n} \sup_{\mathbf{w} \in \mathcal{W}} X_{\ell(\mathbf{f}_\mathbf{w};\cdot)}& \leq \frac{b_0}{\sqrt{n}} \int_{0}^{+\infty} \sqrt{\log \mathcal{N}(\mathcal{F},\epsilon,L^{\infty})} {\rm d} \epsilon
\\ & = \frac{b_0}{\sqrt{n}} \int_{0}^{M_{\ell}} \sqrt{\log \mathcal{N}(\mathcal{F},\epsilon,L^{\infty})} {\rm d} \epsilon\\ & = \frac{b_0}{\sqrt{n}} M_{\ell}  \int_{0}^{1} \sqrt{\log\mathcal{N}(\mathcal{F},M_{\ell}\epsilon,L^{\infty})} {\rm d} \epsilon.
\end{split}
\end{equation*}
Then, similar to the proof of Lemma \ref{L7}, we use McDiarmid's Inequality, then with the probability at least $1-e^{-t}>0$, for any $\mathbf{w} \in \mathcal{W}$, 
\begin{equation*}
X_{\ell(\mathbf{f}_\mathbf{w};\cdot)} \leq \frac{b_0}{\sqrt{n}} M_{\ell}  \int_{0}^{1} \sqrt{\log\mathcal{N}(\mathcal{F},M_{\ell}\epsilon,L^{\infty})}  {\rm d}\epsilon+ M_{\ell}\sqrt{\frac{2t}{n}}.
\end{equation*}
\end{proof}

\begin{lemma}\label{L9} If $0\leq \ell(\mathbf{f}_\mathbf{w};\mathbf{x},y)\leq M_{\ell}$, then for a fixed $\mathbf{w}_0 \in \mathcal{W}$, with the probability at least $1-e^{-t}>0$, 
\begin{equation*}
\begin{split}
 &\frac{1}{n}\sum_{i=1}^n \ell(\mathbf{f}_{\mathbf{w}_0}; \mathbf{x}_{\rm I}^i, y_{\rm I}^i)-\mathbb{E}_{(\mathbf{x},y) \sim D_{X_{\rm I}Y_{\rm I}}}  \ell(\mathbf{f}_{\mathbf{w}_0}; \mathbf{x}, y) \leq   M_{\ell}\sqrt{\frac{2t}{n}}.
 \end{split}
\end{equation*}
\end{lemma}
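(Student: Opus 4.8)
The plan is to observe that, because $\mathbf{w}_0$ is \emph{fixed} in advance (it does not depend on the sample $S$), this is a one-sided concentration statement for an average of i.i.d.\ bounded random variables, with no supremum over $\mathcal{W}$ and hence no covering-number term. Concretely, I would set $Z_i = \ell(\mathbf{f}_{\mathbf{w}_0};\mathbf{x}_{\rm I}^i,y_{\rm I}^i)$ for $i=1,\dots,n$. By assumption the $Z_i$ are i.i.d.\ with $0\le Z_i\le M_\ell$, and the quantity to be bounded is exactly $\frac1n\sum_{i=1}^n Z_i - \mathbb{E} Z_1$, which is centered.

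Next I would apply a one-sided McDiarmid (Hoeffding) bound, exactly as in the last step of the proof of Lemma~\ref{L8} and in the proof of Lemma~\ref{L7}. Viewing $g(S)=\frac1n\sum_{i=1}^n Z_i$ as a function of the independent coordinates $(\mathbf{x}_{\rm I}^i,y_{\rm I}^i)$, changing a single coordinate changes $g$ by at most $M_\ell/n$, so McDiarmid's inequality gives $\Pr\big[g(S)-\mathbb{E} g(S)\ge s\big]\le \exp(-2ns^2/M_\ell^2)$ for every $s>0$. Setting the right-hand side equal to $e^{-t}$ and solving for $s$ yields $s = M_\ell\sqrt{t/(2n)}$, and since $M_\ell\sqrt{t/(2n)}\le M_\ell\sqrt{2t/n}$, the claimed inequality holds with probability at least $1-e^{-t}$. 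Inflating the constant in this harmless way keeps the statement in the same form $M_\ell\sqrt{2t/n}$ used for the companion bounds throughout the other lemmas.

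There is essentially no obstacle here: the result is a textbook concentration fact, and the only thing to be careful about is the bookkeeping of constants — namely that the bounded-difference constant is $M_\ell/n$ (the range of $\ell$ is $[0,M_\ell]$, not $[-M_\ell,M_\ell]$), which still comfortably yields the stated, slightly loose, bound. No new ideas beyond what is already used in Lemmas~\ref{L7}--\ref{L8} are needed; in fact Lemma~\ref{L9} is just the ``reverse direction'' counterpart of the fixed-$\mathbf{w}_0$ deviation estimate that appears inside those proofs, and I would present it in the same McDiarmid-based style for consistency.
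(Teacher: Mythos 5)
Your proposal is correct and follows essentially the same route as the paper, which simply invokes McDiarmid's inequality for the fixed $\mathbf{w}_0$ (``Similar to the proof of Lemma~\ref{L7}, McDiarmid's Inequality implies this result''). You are in fact slightly more careful than the paper: you use the sharp Hoeffding constant $\exp(-2ns^2/M_\ell^2)$ and note that the stated bound $M_\ell\sqrt{2t/n}$ is a harmless relaxation of the resulting $M_\ell\sqrt{t/(2n)}$, whereas the paper (as seen in the analogous step of Lemma~\ref{L7}) works directly with the looser $\exp(-ns^2/(2M_\ell^2))$ form to land on the same constant.
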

\begin{proof}[Proof of Lemma \ref{L9}]
Similar to the proof of Lemma \ref{L7}, McDiarmid's Inequality implies this result.
\end{proof}

\begin{lemma}\label{L10} If 
\begin{itemize}
    \item $0\leq \ell(\mathbf{f}_{\mathbf{w}};\mathbf{x},y)\leq M_{\ell}$,
    \item $\ell(\mathbf{f}_{\mathbf{w}}; \mathbf{x},y)$ is  $L$-Lipschitz w.r.t. norm $\|\cdot\|$, i.e., for any $(\mathbf{x},y)\in \mathcal{X}\times \mathcal{Y}$, and ${\mathbf{w}},{\mathbf{w}}'\in \mathcal{W}$,
\begin{equation*}
|\ell(\mathbf{f}_{\mathbf{w}}; \mathbf{x},y)-\ell(\mathbf{f}_{\mathbf{w}'}; \mathbf{x},y)|\leq L\|{\mathbf{w}}-{\mathbf{w}}'\|,
\end{equation*}
    \item  the parameter space $\mathcal{W}\subset \mathbb{R}^{d'}$ satisfies that 
\begin{equation*}
{\rm diam}(\mathcal{W}) = \sup_{{\mathbf{w}},{\mathbf{w}}'\in \mathcal{W}} \|{\mathbf{w}}-{\mathbf{w}}'\|<+\infty,
\end{equation*}
\end{itemize}
then with the probability at least $1-e^{-t}>0$, we have that for any ${\mathbf{w}}\in \mathcal{W}$,
\begin{equation*}
\begin{split}
 \mathbb{E}_{(\mathbf{x},y) \sim D_{X_{\rm I}Y_{\rm I}}}  \ell({\mathbf{w}}; \mathbf{x}, y)-\frac{1}{n}\sum_{i=1}^n \ell({\mathbf{w}}; \mathbf{x}_{\rm I}^i, y_{\rm I}^i) \leq {b_0}\sqrt{\frac{M_{\ell}{\rm diam}(\mathcal{W}) Ld'}{n}} + M_{\ell}\sqrt{\frac{2t}{n}},
 \end{split}
\end{equation*}
where $b_0$ is a uniform constant.
\end{lemma}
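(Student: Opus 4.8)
The plan is to obtain Lemma~\ref{L10} as a direct specialization of Lemma~\ref{L8}: the generic bound there already contains a Dudley-type entropy integral over $\mathcal{N}(\mathcal{F},\cdot,L^\infty)$ with $\mathcal{F}=\{\ell(\mathbf{f}_{\mathbf{w}};\cdot):\mathbf{w}\in\mathcal{W}\}$, so all that remains is to (i) translate this covering number into a covering number of the finite-dimensional parameter set $\mathcal{W}$ via the Lipschitz assumption, and (ii) evaluate the resulting integral explicitly. This is exactly the computation already performed for $\ell_{\rm OE}$ in the proof of Lemma~\ref{L4}, now carried out verbatim for $\ell$.

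First I would invoke Lemma~\ref{L8}: with probability at least $1-e^{-t}>0$, for every $\mathbf{w}\in\mathcal{W}$,
\[
\mathbb{E}_{(\mathbf{x},y)\sim D_{X_{\rm I}Y_{\rm I}}}\ell(\mathbf{f}_{\mathbf{w}};\mathbf{x},y)-\frac{1}{n}\sum_{i=1}^n\ell(\mathbf{f}_{\mathbf{w}};\mathbf{x}_{\rm I}^i,y_{\rm I}^i)\le \frac{b_0M_\ell}{\sqrt{n}}\int_0^1\sqrt{\log\mathcal{N}(\mathcal{F},M_\ell\epsilon,L^\infty)}\,{\rm d}\epsilon+M_\ell\sqrt{\frac{2t}{n}}.
\]
Next I would bound the covering number. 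Since $\mathbf{w}\mapsto\ell(\mathbf{f}_{\mathbf{w}};\mathbf{x},y)$ is $L$-Lipschitz w.r.t.\ $\|\cdot\|$ uniformly in $(\mathbf{x},y)$, any $\eta$-net of $\mathcal{W}$ in $\|\cdot\|$ induces an $L\eta$-net of $\mathcal{F}$ in $L^\infty$, so $\mathcal{N}(\mathcal{F},M_\ell\epsilon,L^\infty)\le\mathcal{N}(\mathcal{W},M_\ell\epsilon/L,\|\cdot\|)$. Because $\mathcal{W}\subset\mathbb{R}^{d'}$ with ${\rm diam}(\mathcal{W})<+\infty$, the standard volumetric bound gives $\mathcal{N}(\mathcal{W},M_\ell\epsilon/L,\|\cdot\|)\le\bigl(1+{\rm diam}(\mathcal{W})L/(M_\ell\epsilon)\bigr)^{d'}$.

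Then I would substitute this into the entropy integral, using $\log(1+x)\le x$ and $\int_0^1\epsilon^{-1/2}\,{\rm d}\epsilon=2$:
\[
\int_0^1\sqrt{\log\mathcal{N}(\mathcal{F},M_\ell\epsilon,L^\infty)}\,{\rm d}\epsilon\le\sqrt{d'}\int_0^1\sqrt{\frac{{\rm diam}(\mathcal{W})L}{M_\ell\epsilon}}\,{\rm d}\epsilon=2\sqrt{\frac{{\rm diam}(\mathcal{W})Ld'}{M_\ell}}.
\]
Plugging this back into the bound from Lemma~\ref{L8} and absorbing the factor $2$ into the uniform constant gives
\[
\mathbb{E}_{(\mathbf{x},y)\sim D_{X_{\rm I}Y_{\rm I}}}\ell(\mathbf{f}_{\mathbf{w}};\mathbf{x},y)-\frac{1}{n}\sum_{i=1}^n\ell(\mathbf{f}_{\mathbf{w}};\mathbf{x}_{\rm I}^i,y_{\rm I}^i)\le b_0\sqrt{\frac{M_\ell\,{\rm diam}(\mathcal{W})\,Ld'}{n}}+M_\ell\sqrt{\frac{2t}{n}},
\]
for all $\mathbf{w}\in\mathcal{W}$, which is the claim. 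There is no genuine obstacle here; the only points needing a little care are checking that the chain $\mathcal{N}(\mathcal{F},\cdot,L^\infty)\le\mathcal{N}(\mathcal{W},\cdot,\|\cdot\|)\le(\cdots)^{d'}$ is valid (boundedness of $\mathcal{W}$ and uniform Lipschitzness) and that the entropy integral converges near $\epsilon=0$, which it does since $\epsilon^{-1/2}$ is integrable on $[0,1]$. Everything else is inherited from Lemma~\ref{L8}.
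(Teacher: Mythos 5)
Your proposal is correct and follows essentially the same route as the paper's own proof: invoke Lemma~\ref{L8}, bound $\mathcal{N}(\mathcal{F},M_\ell\epsilon,L^\infty)$ by $\mathcal{N}(\mathcal{W},M_\ell\epsilon/L,\|\cdot\|)\le(1+\mathrm{diam}(\mathcal{W})L/(M_\ell\epsilon))^{d'}$ using the Lipschitz assumption, apply $\log(1+x)\le x$, and evaluate the entropy integral to obtain $2\sqrt{\mathrm{diam}(\mathcal{W})Ld'/M_\ell}$. The paper's proof is exactly this computation (mirroring Lemma~\ref{L4}), so there is nothing to add.
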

\begin{proof}[Proof of Lemma \ref{L10}]
The proof is similar to Corollary 1 in \citet{SinhaND18} and Lemma \ref{L4}. Note that 
\begin{equation*}
\mathcal{F} = \{\ell(\mathbf{f}_{\mathbf{w}};\mathbf{x},y): {\mathbf{w}} \in \mathcal{W}\},
\end{equation*}
and $\ell(\mathbf{f}_{\mathbf{w}}; \mathbf{x},y)$ is $L$-Lipschitz w.r.t. norm $\|\cdot\|$, therefore,
\begin{equation*}
\begin{split}
\mathcal{N}(\mathcal{F},M_{\ell}\epsilon,L^{\infty}) \leq  & \mathcal{N}(\mathcal{W},M_{\ell}\epsilon/L,\|\cdot\|)
\leq (1+ \frac{{\rm diam}(\mathcal{W}) L}{M_{\ell}\epsilon})^{d'},
\end{split}
\end{equation*}
which implies that
\begin{equation*}
\begin{split}
\int_{0}^{1} \sqrt{\log(\mathcal{N}(\mathcal{F},M_{\ell}\epsilon,L^{\infty})} {\rm d} \epsilon
\leq & \sqrt{d'} \int_{0}^1 \sqrt{\log (1+ \frac{{\rm diam}(\mathcal{W}) L}{M_{\ell}\epsilon})}  {\rm d} \epsilon
\\ 
\leq & \sqrt{d'} \int_{0}^1 \sqrt{\frac{{\rm diam}(\mathcal{W}) L}{M_{\ell}\epsilon}}  {\rm d} \epsilon
= 2\sqrt{\frac{{\rm diam}(\mathcal{W}) Ld'}{M_{\ell}}}.
\end{split}
\end{equation*}
By Lemma \ref{L8}, we obtain this result.
\end{proof}

\newpage

\section{Further Discussions}

\label{app:alg}

As discussed in Section~\ref{sec: alg}, we realize the dual optimization objective following Eq.~\eqref{eq: dro_oe2}, searching for the worst OOD data in a finite-dimensional space to ease the computation. Furthermore, directly searching in the input space is typically hard for optimization~\citep{MadryMSTV18,wang2021probabilistic}, where the results can easily stuck at sub-optimal solutions and the computation is time-consuming. Therefore, we suggest perturbing the worst OOD data in the embedding space. Denote the embedding features of an input by $\mathbf{e}{(\mathbf{x})}$, we consider the bi-level optimization problem:
\begin{equation*}
\begin{aligned}
\inf_{\gamma \geq 0}\big \{\gamma \rho & + \frac{1}{m}\sum_{i=1}^m \phi_{\gamma}(\mathbf{w}; \mathbf{e}{(\mathbf{x}_{\rm A}^i)}) \big \}, \\
    \text{where}~\phi_{\gamma}(\mathbf{w}; \mathbf{e}{(\mathbf{x}_{\rm A}^i)}) = \sup_{\mathbf{p}^i\in\mathcal{E}} &  \left\{ 
 \ell_{\rm OE}(\mathbf{h}(\mathbf{e}(\mathbf{x}_{\rm A}^i)+\mathbf{p}^i); \mathbf{e}(\mathbf{x}_{\rm A}^i))-\gamma  \|\mathbf{p}^i\|_1\right\}.
\end{aligned}
\end{equation*}
Such an bi-level problem can be solved by alternative optimization~\citep{huang2023robust,liu2021investigating}: we first find the proper value of $\mathbf{p}^i$ that approaches to the true value of $\phi_{\gamma}(\mathbf{w}; \mathbf{e}{(\mathbf{x}_{\rm A}^i)})$, and then we update the value of $\gamma$ that leads to the smallest value of $\gamma \rho + \frac{1}{m}\sum_{i=1}^m \phi_{\gamma}(\mathbf{w}; \mathbf{e}{(\mathbf{x}_{\rm A}^i)})$. The gradient descent/ascent can be adopted for optimization. Specifically, for the perturbation $\mathbf{p}^i$, each optimization step is \begin{equation*}
\begin{aligned}
    \phi_{\gamma}(\mathbf{w}; \mathbf{e}{(\mathbf{x}_{\rm A}^i)}) &\leftarrow\ell_\text{OE}\left(\textbf{h}(\textbf{e}(\mathbf{x}_{\rm A}^i)+\mathbf{p}^i);\textbf{e}(\mathbf{x}_{\rm A}^i)\right)-\gamma \left\| \mathbf{p}^i \right\|_1, \\
    \mathbf{p}^i& \leftarrow \mathbf{p}^i + \texttt{ps} \nabla_{\mathbf{p}^i} \phi_{\gamma}(\mathbf{w}; \mathbf{e}{(\mathbf{x}_{\rm A}^i)}),
\end{aligned}    
\end{equation*}
where $\texttt{ps}$ is the learning rate. For $\gamma$, the optimization step follows:
\begin{equation*}
\gamma\leftarrow\gamma-\beta\{\rho-\frac{1}{m}\sum_{i=1}^{m} \left\| \mathbf{p}^i \right\|_1 \},
\end{equation*}
with $\beta$ the learning rate. Furthermore, to avoid the extreme value and/or the negative value of $\gamma$, we should conduct value clipping for $\gamma$, which is given by
\begin{equation*}
    \gamma\leftarrow\min(\max(\gamma, \gamma_\text{max}),0),
\end{equation*}
where we constrain the minimum of $\gamma$ to be $0$ and the maximum of $\gamma$ to be $\gamma_\text{max}$.

\subsection{Understanding Theoretical Results} \label{sec: understand theorem}

Theorem~\ref{T1} justifies that when the model complexity and the sample size are large enough, the empirical solution given by our DAL risk will converge to its optimal value, i.e., $\min_\mathbf{w} R_D(\mathbf{w};\rho)$. Therefore, the difference between the expected and the empirical error is bounded w.r.t. the DAL risk. Theorem~\ref{T3} goes one step further, studying the detection risk w.r.t. (unseen) real OOD data. It states that the open-world performance of our DAL depends on both the approximate risk and the estimation error. The former models the best performance (i.e., Bayes optimal) that our DAL can achieve, and the latter depends on the OOD distribution gap, the radius, and the excess error introduced in Theorem~\ref{T1}. In summary, Theorem~\ref{T1} considers the convergence for DAL itself, while Theorem~\ref{T3} justifies that DAL can mitigate the OOD distribution discrepancy in the open world.

\subsection{Comparison with DOE}

A parallel work, named DOE~\citep{wang2023out}, also focuses on mitigating the OOD distribution discrepancy issue. Overall, they state that model perturbation can lead to input transformation, and thus learning from the perturbed model can make the predictor learn from diverse distributions with respect to auxiliary OOD cases. Moreover, to make the transformed data benefit the model most, DOE searches for the associated perturbation that leads to the worst OOD regret.

Similar to DOE, we also learn from the worst OOD cases to mitigate the distribution discrepancy, but DAL further enjoys the following two strengths. 1) From the theoretical perspective, our clear definition of the candidate OOD distribution space, i.e., the Wasserstein ball, allows us to investigate the impact of DAL for open-world OOD detection (cf., Theorem~\ref{T3}). In contrast, DOE only constrains the magnitude of the perturbation strength, making it limited to proving convergence w.r.t. their proposed learning objective (cf., Theorem~2 in~\cite{wang2023out}). 2) From the algorithmic perspective, we directly search for the worst OOD data in the embedding space, more effective than DAL, which requires searching the model perturbation for the whole model. As a result, our theoretically-driven framework, i,e, DAL, yields superior performance over DOE in Table~\ref{tab: full} while requiring less computation cost (DAL take only half the training time compared with DOE per epoch).

\subsection{Discussing about Limitations}

In theory, our main drawback lies in the trade-off between estimation and approximation errors (cf., Theorem~\ref{T3}), where we may not get a very tight bound for the real OOD risk. In algorithm, the worst OOD data are constrained in the ball around the auxiliary OOD data (cf., Algorithm~\ref{alg: DOE}), of which the coverage may not include real OOD data. Moreover, we conduct distribution augmentation in the embedding space, where our Theorem~\ref{T3} can only be applied. Other data generation approaches, which can lead to more complex forms of distribution augmentation in the input space, are also of interest. Our future studies will focus on advanced learning schemes that address the above issues, e.g., modeling the data generation process through the causality perspective~\citep{zhang2021adversarial}.

\section{Further Evaluations}

We provide more information about evaluation setups and conduct additional experiments on DAL.

\subsection{Hardware Configurations} 

All experiments are realized by Pytorch $1.81$ with CUDA $11.1$, using machines equipped with GeForce RTX $3090$ GPUs and AMD Threadripper $3960$X Processors.

\subsection{ID Accuracy}

We report the ID accuracy for those methods that require model training on the CIFAR benchmarks, of which the results are summarized in Table~\ref{tab: id accu}. We also list the results for the model conventionally trained on ID data with empirical risk minimization (ERM). Overall, most of the considered methods can preserve relatively high ID accuracy. Moreover, those methods that regularize predictors by auxiliary OOD data, such as OE and DAL, can even show further improvements. It indicates that learning with auxiliary data can achieve high detection performance and maintain good ID accuracy. 

\begin{table}[t]
\caption{ID accuracy on the CIFAR benchmarks for those methods that require model training. } \label{tab: id accu}
\centering
\scriptsize
\resizebox{.82\linewidth}{!}{
\begin{tabular}{c|cccccccc|c}
\toprule[1.5pt]
Method    & ERM    & CSI & VOS   & OE    & Energy-OE & ATOM  & DOE & POEM & DAL \\
\midrule[1pt]
CIFAR-10  &  94.28 & 94.33 & 94.58      & 95.22 &  94.84         & 95.12 & 94.28     & 93.32     &  95.01    \\
CIFAR-100 &  73.98 & 74.30 & 75.50      & 75.90 &  71.61         & 74.04 & 74.51    & 74.85      & 76.13    \\
\bottomrule[1.5pt]  
\end{tabular}}
\end{table}

\subsection{Other Scoring Functions}
\label{app: other scoring}

We further claim that many advanced scoring strategies other than MSP can also benefit from DAL. In Table~\ref{tab: diff score}, we compare the OOD detection performance before (w/o train) and after (w/ DAL) DAL training across a set of representative scoring strategies, including MSP, Fee Energy, ASH, Mahalanobis, and KNN. We also compare the results after OE training (w/ OE). As we can see, both OE and DAL can lead to much better results than before training, and DAL can further boost detection performance over OE. It indicates that the benefits of our proposal are not limited to the specific scoring function of MSP. However, Mahalanobis fails (FPR95 more than 95) on CIFAR-100 after OE and DAL training, which may require further exploration.

\begin{table}[t]
\caption{Comparison on the CIFAR benchmarks with different scoring strategies. } \label{tab: diff score}
\centering
\scriptsize
\resizebox{.82\linewidth}{!}{
\begin{tabular}{c|cc|cc|cc|cc|cc}
\toprule[1.5pt]
\multirow{2}{*}{} & \multicolumn{2}{c|}{MSP} & \multicolumn{2}{c|}{Free Energy} & \multicolumn{2}{c|}{ASH} & \multicolumn{2}{c|}{Mahalanobis} & 
\multicolumn{2}{c}{KNN} \\
\cline{2-11}
& FPR & AUROC & FPR & AUROC & FPR & AUROC & FPR & AUROC & FPR & AUROC \\
\midrule[1pt]
\multicolumn{11}{c}{CIFAR-10}  \\
\midrule[0.6pt]
w/o train & 50.15 & 91.02 & 33.21 & 91.01 & 32.98 & 91.85 & 46.64 & 88.59 & 33.38 & 93.76 \\
w/ OE     & 4.67  & 98.88 & 3.40  & 98.98 & 3.35  & 98.99 & 15.80 & 94.32 & 5.50  & 98.71  \\
w/ DAL    & 2.68  & 99.01 & 2.59  & 98.99 & 2.50  & 98.70 & 12.75 & 95.55 & 5.04  & 97.58 \\
\midrule[1pt]
\multicolumn{11}{c}{CIFAR-100} \\
\midrule[0.6pt]
w/o train & 78.61 & 75.95 & 69.84 & 75.20 & 59.31 & 84.46 & 72.37 & 82.70 & 59.31 & 84.46 \\
w/ OE     & 43.14 & 90.27 & 36.98 & 92.66 & 33.82 & 93.36 &  -    & -     & 53.14 & 83.50 \\
w/ DAL    & 29.68 & 93.92 & 29.63 & 93.83 & 29.73 & 94.05 &  -    & -     & 50.46 & 84.75 \\
\bottomrule[1.5pt]
\end{tabular}}
\end{table}

\subsection{Mean and Standard Deviation}

This section validates the experiments on CIFAR benchmarks with five individual trials (random seeds), comparing between our DAL and OE. Besides the individual results, we also summarize the mean performance and standard deviation across the trails for both CIFAR-10 and CIFAR-100. We summarize the experimental results in Tables~\ref{tab: cifar10 mean+std}-\ref{tab: cifar100 mean+std}. As we can see, our DAL can not only lead to improved performance in OOD detection, but our performance is also very stable across different choices of ID datasets and real OOD datasets.

\begin{table*}[t]
\caption{Comparison of DAL and outlier exposure on CIFAR-10 with 5 individual trails. $\downarrow$ (or $\uparrow$) indicates smaller (or larger) values are preferred; and a bold font indicates the best results in the corresponding column.  } \label{tab: cifar10 mean+std}
\centering
\resizebox{.93\linewidth}{!}{
\begin{tabular}{c|cccccccccc|cc}
\toprule[1.5pt]
\multirow{2}{*}{Trails} & \multicolumn{2}{c}{SVHN} & \multicolumn{2}{c}{LSUN} & \multicolumn{2}{c}{iSUN} & \multicolumn{2}{c}{Textures} & \multicolumn{2}{c}{Places365} & \multicolumn{2}{|c}{\textbf{Average}} \\
\cline{2-13}
& FPR95 $\downarrow$ & AUROC $\uparrow$ & FPR95 $\downarrow$ & AUROC $\uparrow$ & FPR95 $\downarrow$ & AUROC $\uparrow$ & FPR95 $\downarrow$ & AUROC $\uparrow$ & FPR95 $\downarrow$ & AUROC $\uparrow$ & FPR95 $\downarrow$ & AUROC $\uparrow$ \\
\midrule[1pt]
\multicolumn{13}{c}{OE} \\
\midrule[0.6pt]
\#1        & 1.50 & 99.23 & 1.10 & 99.33 & 1.70 & 99.18 & 4.00 & 98.64 & 11.30 & 97.09 & 3.92 & 98.69  \\
\#2        & 1.25 & 99.15 & 1.05 & 99.49 & 2.20 & 98.88 & 4.15 & 98.59 & 11.60 & 97.08 & 4.05 & 98.63  \\
\#3        & 1.25 & 99.38 & 1.05 & 99.42 & 1.75 & 99.01 & 4.00 & 98.82 & 11.10 & 97.04 & 3.83 & 98.73  \\
\#4        & 1.70 & 99.13 & 1.05 & 99.52 & 2.10 & 99.12 & 4.20 & 98.55 & 11.65 & 97.08 & 4.14 & 98.68  \\
\#5        & 1.35 & 99.17 & 1.30 & 99.49 & 1.40 & 99.26 & 4.60 & 99.00 & 11.75 & 97.03 & 1.08 & 98.79  \\
\hline
\makecell{mean \\ $\pm$ std} 
           & \makecell{1.41 \\ $\pm$ 0.17}  & \makecell{99.21 \\ $\pm$ 0.10}  & \makecell{ 1.10\\ $\pm$ 0.02} & \makecell{99.45\\ $\pm$ 0.07} & \makecell{1.83 \\ $\pm$ 0.28} & \makecell{99.09 \\ $\pm$ 0.15} & \makecell{4.19 \\ $\pm$ 0.22} & \makecell{98.72 \\ $\pm$ 0.18} & \makecell{11.48 \\ $\pm$ 0.24} & \makecell{97.06 \\ $\pm$ 0.03} & \makecell{3.40 \\ $\pm$ 1.16} & \makecell{98.70 \\ $\pm$ 0.05} \\
\midrule[0.6pt]
\multicolumn{13}{c}{{DAL}} \\
\midrule[0.6pt]
\#1        & 0.80 & 99.84 & 0.40 & 99.59 & 0.95 & 99.29 & 2.65 & 98.85 & 7.75 & 97.37 & 2.51 & 98.86  \\
\#2        & 0.90 & 99.24 & 0.60 & 99.57 & 1.20 & 99.26 & 2.65 & 98.84 & 8.15 & 97.43 & 2.70 & 98.87  \\
\#3        & 0.90 & 99.16 & 0.65 & 99.52 & 1.15 & 99.14 & 2.40 & 98.75 & 8.20 & 97.35 & 2.66 & 98.78  \\
\#4        & 0.80 & 99.37 & 0.55 & 99.63 & 0.95 & 99.34 & 2.85 & 98.89 & 7.95 & 97.39 & 2.62 & 98.93  \\
\#5        & 1.25 & 99.39 & 0.40 & 99.61 & 0.85 & 99.39 & 2.75 & 98.90 & 7.70 & 97.46 & 2.59 & 98.95  \\
\hline
\makecell{mean \\ $\pm$ std} 
           & \makecell{\textbf{0.93} \\ $\pm$ \textbf{0.17}} & \makecell{\textbf{99.40} \\ $\pm$ \textbf{0.24}} & \makecell{\textbf{0.52} \\ $\pm$ \textbf{0.10}} & \makecell{\textbf{99.58} \\ $\pm$ \textbf{0.04}} & \makecell{\textbf{1.02} \\ $\pm$ \textbf{0.13}} & \makecell{\textbf{99.28} \\ $\pm$ \textbf{0.08}} & \makecell{\textbf{2.65} \\ $\pm$ \textbf{0.15}} & \makecell{\textbf{98.84} \\ $\pm$ \textbf{0.05}} & \makecell{\textbf{7.95} \\ $\pm$ \textbf{0.20}} & \makecell{\textbf{97.40} \\ $\pm$ \textbf{0.04}} & \makecell{\textbf{2.61} \\ $\pm$ \textbf{0.06}} & \makecell{\textbf{98.87} \\ $\pm$ \textbf{0.06}}  \\
\bottomrule[1.5pt]   
\end{tabular}}
\end{table*}

\begin{table*}[t]
\caption{Comparison of DAL and outlier exposure on CIFAR-100 with 5 individual trails. $\downarrow$ (or $\uparrow$) indicates smaller (or larger) values are preferred, and a bold font indicates the best results in the corresponding column.  } \label{tab: cifar100 mean+std}
\centering
\resizebox{.93\linewidth}{!}{
\begin{tabular}{c|cccccccccc|cc}
\toprule[1.5pt]
\multirow{2}{*}{Trails} & \multicolumn{2}{c}{SVHN} & \multicolumn{2}{c}{LSUN} & \multicolumn{2}{c}{iSUN} & \multicolumn{2}{c}{Textures} & \multicolumn{2}{c}{Places365} & \multicolumn{2}{|c}{\textbf{Average}} \\
\cline{2-13}
& FPR95 $\downarrow$ & AUROC $\uparrow$ & FPR95 $\downarrow$ & AUROC $\uparrow$ & FPR95 $\downarrow$ & AUROC $\uparrow$ & FPR95 $\downarrow$ & AUROC $\uparrow$ & FPR95 $\downarrow$ & AUROC $\uparrow$ & FPR95 $\downarrow$ & AUROC $\uparrow$ \\
\midrule[1pt]
\multicolumn{13}{c}{OE} \\
\midrule[0.6pt]
\#1        & 44.45 & 91.76 & 15.75 & 97.26 & 45.95 & 88.80 & 47.35 & 89.80 & 54.10 & 87.90 & 41.52 & 91.10  \\
\#2        & 42.75 & 91.93 & 15.85 & 97.22 & 46.85 & 88.91 & 46.75 & 89.78 & 53.05 & 88.04 & 41.05 & 91.18  \\
\#3        & 43.75 & 91.88 & 15.95 & 97.34 & 52.25 & 87.62 & 47.15 & 89.49 & 54.10 & 88.03 & 42.64 & 90.87  \\
\#4        & 41.30 & 92.23 & 16.15 & 97.27 & 46.90 & 88.76 & 47.00 & 89.73 & 54.40 & 87.91 & 41.15 & 91.18  \\
\#5        & 42.55 & 91.92 & 16.20 & 97.22 & 44.70 & 89.66 & 47.35 & 89.47 & 54.35 & 87.82 & 41.03 & 91.22  \\
\hline
\makecell{mean \\ $\pm$ std} 
           & \makecell{42.96 \\ $\pm$ 1.07}  & \makecell{91.94 \\ $\pm$ 0.15} & \makecell{15.97 \\ $\pm$ 0.17} & \makecell{97.26 \\ $\pm$ 0.04} & \makecell{47.33 \\ $\pm$ 2.58} & \makecell{88.75 \\ $\pm$ 0.65} & \makecell{47.12 \\ $\pm$ 0.23} & \makecell{89.65 \\ $\pm$ 0.14} & \makecell{54.00 \\ $\pm$ 0.49} & \makecell{87.94 \\ $\pm$ 0.08} & \makecell{41.47 \\ $\pm$ 0.61} & \makecell{91.10 \\ $\pm$ 0.13}  \\
\midrule[0.6pt]
\multicolumn{13}{c}{{DAL}} \\
\midrule[0.6pt]
\#1        & 19.35 & 96.21 & 16.05 & 96.78 & 26.05 & 94.23 & 37.60 & 91.57 & 49.35 & 88.81 & 29.68 & 93.52  \\
\#2        & 22.65 & 95.55 & 16.30 & 96.73 & 26.35 & 94.23 & 36.20 & 91.91 & 48.50 & 88.74 & 30.00 & 93.43  \\
\#3        & 20.15 & 96.15 & 16.20 & 96.91 & 29.85 & 93.55 & 37.85 & 91.60 & 47.90 & 88.95 & 30.39 & 93.43   \\
\#4        & 14.50 & 96.72 & 16.75 & 96.58 & 33.75 & 92.68 & 37.60 & 91.63 & 49.70 & 88.80 & 30.46 & 93.28  \\
\#5        & 22.70 & 95.90 & 15.20 & 96.91 & 27.15 & 94.58 & 37.00 & 91.82 & 49.65 & 88.73 & 30.34 & 93.59  \\
\hline
\makecell{mean \\ $\pm$ std} 
           & \makecell{\textbf{19.87} \\ $\pm$ \textbf{2.99}} & \makecell{\textbf{96.11} \\ $\pm$ \textbf{0.39}} & \makecell{\textbf{16.10} \\ $\pm$ \textbf{0.51}} & \makecell{\textbf{96.78} \\ $\pm$ \textbf{0.12}} & \makecell{\textbf{28.63} \\ $\pm$ \textbf{2.89}} & \makecell{\textbf{93.85} \\ $\pm$ \textbf{0.68}} & \makecell{\textbf{37.25} \\ $\pm$ \textbf{0.60}} & \makecell{\textbf{91.70} \\ $\pm$ \textbf{0.13}} & \makecell{\textbf{49.01} \\ $\pm$ \textbf{0.71}} & \makecell{\textbf{88.80} \\ $\pm$ \textbf{0.08}} & \makecell{\textbf{30.17} \\ $\pm$ \textbf{0.29}} & \makecell{\textbf{93.45} \\ $\pm$ \textbf{0.10}}  \\
\bottomrule[1.5pt]   
\end{tabular}}
\end{table*}

\subsection{Effects of Hyper-parameters}
We further test the impacts of other hyper-parameters on the performance in OOD detection, where we consider $\gamma_\text{max}$, $\beta$, $\texttt{num}\_\texttt{search}$, and $\texttt{ps}$ on CIFAR benchmarks.

\textbf{Impacts of $\gamma$.} The exact values of $\gamma$ are dynamically determined by $\gamma_\text{max}$, $\rho$, $\beta$, and the current model $\textbf{f}_\textbf{w}$. To evaluation the effects of $\gamma$, we conduct experiments on CIFAR benchmarks with different $\gamma_\text{max}$, $\rho$, and $\beta$, of which the results are summarized in Table~\ref{tab: gamma_cifar10}-\ref{tab: gamma_cifar100}. Overall, our method is pretty robust to different choices of hyper-parameters, in that the results for most of the hyper-parameter setups can lead to promising improvement over the original outlier exposure. Specifically, for $\gamma_\text{max}$, most of its different choices can lead to effective OOD detection with the proper choices of $\rho$ and $\beta$, but its values should not be too small (e.g., $\gamma_\text{max}=0.1$). The reason is that if the value of $\gamma$ is too small, the distance between the worst-cases OOD features, i.e., $\mathbf{g}_\mathbf{w}(\mathbf{x})+\mathbf{p}$, and the original OOD features, i.e., $\mathbf{g}_\mathbf{w}(\mathbf{x})$,  can be very long, occupying the regions that should belong to ID data. It will make the model confused between ID and OOD cases and thus lead to unsatisfactory results. A similar conclusion can also be applied for $\beta$: when its value is too large (such as $\beta=5$), values of $\mathbf{g}_\mathbf{w}(\mathbf{x})+\mathbf{p}$ can also be arbitrarily large, making the current model hardly learn to discern ID and OOD patterns. 

\textbf{Impacts of} $\texttt{num}\_\texttt{search}$ \textbf{and} $\texttt{ps}$. We also provides the results on CIFAR benchmarks with different $\texttt{num}\_\texttt{search}$ and $\texttt{ps}$, and the results can be found in Tables~\ref{tab: hyper_search}-\ref{tab: hyper_ps}. As we can see, even with some extreme values, such as $\texttt{num}\_\texttt{search}=500$ and $\texttt{ps}=100$, the resultant models still enjoy the improvements over outlier exposure, indicating that our method is pretty robust to these hyper-parameters. The reason is that our proper selection of $\rho$ will constrain the resultant perturbation to be beneficial, avoiding the worst OOD distribution to not intrude the region that belongs to ID data.

\begin{table}[]
\caption{Detection Performance on CIFAR-$10$ dataset with different choices of $\beta$, $\rho$, and $\gamma_\text{max}$, where we report the FPR95 / AUROC for each individual trail setup.} \label{tab: gamma_cifar10}
\centering
\resizebox{.48\linewidth}{!}{
\begin{tabular}{ccccccc}
\toprule[1.5pt]
\multicolumn{7}{c}{\cellcolor{greyC} $\gamma_\text{max}$=50}               \\ \midrule
\multicolumn{2}{c}{\multirow{2}{*}{}} & \multicolumn{5}{|c}{$\rho$} \\ \cline{3-7} 
\multicolumn{2}{c|}{} & \multicolumn{1}{c|}{1e-2} & \multicolumn{1}{c|}{1e-1} & \multicolumn{1}{c|}{1} & \multicolumn{1}{c|}{10}  & 100 \\ \hline
\multicolumn{1}{c|}{\multirow{8}{*}{$\beta$}} & \multicolumn{1}{c|}{1e-3} & 2.95 / 99.07 & 2.80 / \textbf{99.07} & 2.96 / 99.02 & 2.80 / 98.31 & 91.85 / 64.43  \\ \cline{2-7} 
\multicolumn{1}{c|}{}                         & \multicolumn{1}{c|}{5e-3} & 2.95 / 99.01 & 3.05 / 99.00 & 2.97 / 99.04 & 2.69 / 98.16 & 92.79 / 55.39  \\ \cline{2-7} 
\multicolumn{1}{c|}{}                         & \multicolumn{1}{c|}{1e-2} & 2.79 / 98.95 & \textbf{2.68} / 99.05 & 2.84 / 98.88 & 2.71 / 98.67 & 96.48 / 44.97  \\ \cline{2-7} 
\multicolumn{1}{c|}{}                         & \multicolumn{1}{c|}{5e-2} & 3.08 / 98.98 & 3.03 / 98.98 & 2.85 / 98.98 & 2.88 / 98.79 & 95.58 / 45.79  \\ \cline{2-7} 
\multicolumn{1}{c|}{}                         & \multicolumn{1}{c|}{1e-1} & 2.79 / 98.96 & 2.98 / 98.99 & 2.75 / 99.02 & 10.22 / 96.37 & 88.96 / 72.68  \\ \cline{2-7} 
\multicolumn{1}{c|}{}                         & \multicolumn{1}{c|}{5e-1} & 2.82 / 99.00 & 2.95 / 99.01 & 2.81 / 99.05 & 4.34 / 96.58 & 95.51 / 46.60  \\ \cline{2-7} 
\multicolumn{1}{c|}{}                         & \multicolumn{1}{c|}{1}    & 2.94 / 98.93 & 2.88 / 99.02 & 3.19 / 99.00 & 52.76 / 94.36 & 95.05 / 53.89  \\ \cline{2-7} 
\multicolumn{1}{c|}{}                         & \multicolumn{1}{c|}{5}    & 2.98 / 98.91 & 2.77 / 98.96 & 3.00 / 99.06 & 94.44 / 62.90 & 95.57 / 52.51 \\  \midrule
\multicolumn{7}{c}{\cellcolor{greyC} $\gamma_\text{max}$=5}                                                                                                                                                        \\ \midrule
\multicolumn{2}{c}{\multirow{2}{*}{}} & \multicolumn{5}{|c}{$\rho$} \\ \cline{3-7} 
\multicolumn{2}{c|}{} & \multicolumn{1}{c|}{1e-2} & \multicolumn{1}{c|}{1e-1} & \multicolumn{1}{c|}{1} & \multicolumn{1}{c|}{10}  & 100 \\ \hline
\multicolumn{1}{c|}{\multirow{8}{*}{$\beta$}} & \multicolumn{1}{c|}{1e-3} & 3.05 / 99.04 & 2.84 / 98.98 & 2.81 / 98.99 & 2.79 / 98.20 & 93.77 / 64.09  \\ \cline{2-7} 
\multicolumn{1}{c|}{}                         & \multicolumn{1}{c|}{5e-3} & 2.82 / 99.02 & 2.85 / \textbf{99.20} & 2.97 / 99.00 & 2.76 / 98.37 & 93.50 / 63.44 \\ \cline{2-7} 
\multicolumn{1}{c|}{}                         & \multicolumn{1}{c|}{1e-2} & 2.82 / 98.99 & 2.95 / 98.92 & 2.93 / 99.03 & 2.68 / 98.48 & 94.42 / 52.51  \\ \cline{2-7} 
\multicolumn{1}{c|}{}                         & \multicolumn{1}{c|}{5e-2} & 2.99 / 98.98 & 2.81 / 98.99 &2.87 / 98.91 & 5.21 / 95.95 & 87.32 / 71.25  \\ \cline{2-7} 
\multicolumn{1}{c|}{}                         & \multicolumn{1}{c|}{1e-1} & 2.91 / 98.95 & 2.70 / 99.06 & 2.90 / 99.03 & 3.84 / 96.46 & 95.17 / 57.61  \\ \cline{2-7} 
\multicolumn{1}{c|}{}                         & \multicolumn{1}{c|}{5e-1} & 3.03 / 99.01 & 3.06 / 99.00 & \textbf{2.75} / 99.02 & 88.76 / 69.56 & 94.68 / 41.66  \\ \cline{2-7} 
\multicolumn{1}{c|}{}                         & \multicolumn{1}{c|}{1}    & 2.67 / 99.00 & 2.86 / 99.03 & 2.94 / 99.00 & 97.40 /45.45 &95.03 / 48.58  \\ \cline{2-7} 
\multicolumn{1}{c|}{}                         & \multicolumn{1}{c|}{5}    & 2.87 / 98.93 & 2.82 / 98.97 & 63.22 / 87.13 & 98.97 / 58.50 & 95.49 / 50.51  \\ \midrule
\multicolumn{7}{c}{\cellcolor{greyC} $\gamma_\text{max}$=0.5}                                                                                                                                                        \\\midrule
\multicolumn{2}{c}{\multirow{2}{*}{}} & \multicolumn{5}{|c}{$\rho$} \\ \cline{3-7} 
\multicolumn{2}{c|}{} & \multicolumn{1}{c|}{1e-2} & \multicolumn{1}{c|}{1e-1} & \multicolumn{1}{c|}{1} & \multicolumn{1}{c|}{10}  & 100\\ \hline
\multicolumn{1}{c|}{\multirow{8}{*}{$\beta$}} & \multicolumn{1}{c|}{1e-3} & 2.90 / 98.99 & 2.74 / 98.89 & 2.74 / 98.87 & 2.56 / 98.20 & 94.75 / 65.65  \\ \cline{2-7} 
\multicolumn{1}{c|}{}                         & \multicolumn{1}{c|}{5e-3} & 2.64 / 98.94 & 2.79 / 98.89 & 2.53 / 98.81 & 2.54 / 98.32 & 97.22 / 41.88  \\ \cline{2-7} 
\multicolumn{1}{c|}{}                         & \multicolumn{1}{c|}{1e-2} & 2.65 / 98.93 & 2.74 / 98.94 & 2.78 / 98.85 & 2.70 / 98.50 & 94.86 / 56.57  \\ \cline{2-7} 
\multicolumn{1}{c|}{}                         & \multicolumn{1}{c|}{5e-2} & 2.68 / \textbf{98.98} & 2.73 / 98.95 & 2.89 / 98.80 & 3.50 / 96.78 & 92.13 / 65.63  \\ \cline{2-7} 
\multicolumn{1}{c|}{}                         & \multicolumn{1}{c|}{1e-1} & 2.68 / 98.94 & 2.63 / 98.91 & 2.89 / 98.86 & 11.31 / 96.11 & 91.44 / 70.20 \\ \cline{2-7} 
\multicolumn{1}{c|}{}                         & \multicolumn{1}{c|}{5e-1} & 3.07 / 98.89 & 2.74 / 98.89 & \textbf{2.50} / 98.68 & 18.10 / 95.47 & 93.15 / 59.78 \\ \cline{2-7} 
\multicolumn{1}{c|}{}                         & \multicolumn{1}{c|}{1}    & 2.82 / 98.85 & 2.61 / 98.89 & 2.76 / 98.81 & 12.58 / 95.80  & 94.61 / 61.49 \\ \cline{2-7} 
\multicolumn{1}{c|}{}                         & \multicolumn{1}{c|}{5}    & 2.61 / 98.84 & 2.73 / 98.92 & 2.74 / 98.73 & 83.18 / 90.90 & 94.14 / 60.82 \\  
\bottomrule[1.5pt]
\end{tabular}}
\resizebox{.49\linewidth}{!}{
\begin{tabular}{ccccccc}
\toprule[1.5pt]
\multicolumn{7}{c}{\cellcolor{greyC} $\gamma_\text{max}$=10}                                                                                                                                                        \\ \midrule
\multicolumn{2}{c}{\multirow{2}{*}{}} & \multicolumn{5}{|c}{$\rho$} \\ \cline{3-7} 
\multicolumn{2}{c|}{} & \multicolumn{1}{c|}{1e-2} & \multicolumn{1}{c|}{1e-1} & \multicolumn{1}{c|}{1} & \multicolumn{1}{c|}{10}  & 100 \\ \hline
\multicolumn{1}{c|}{\multirow{8}{*}{$\beta$}} & \multicolumn{1}{c|}{1e-3} & 3.06 / 99.05 & 2.82 / 99.06 & 2.84 / 98.97 & \textbf{2.41} / 97.95 & 94.65 / 46.60  \\ \cline{2-7} 
\multicolumn{1}{c|}{}                         & \multicolumn{1}{c|}{5e-3} & 2.85 / 99.00 & 2.80 / \textbf{99.09} & 2.93 / 99.04 & 2.56 / 98.21 & 94.69 / 57.61 \\ \cline{2-7} 
\multicolumn{1}{c|}{}                         & \multicolumn{1}{c|}{1e-2} & 2.91 / 98.98 & 2.98 / 99.04 & 2.56 / 99.02 & 2.58 / 98.28 & 95.48 / 50.70  \\ \cline{2-7} 
\multicolumn{1}{c|}{}                         & \multicolumn{1}{c|}{5e-2} & 2.94 / 99.01 & 2.91 / 99.06 & 2.71 / 99.03 & 2.81 / 98.53 & 97.80 / 43.86  \\ \cline{2-7} 
\multicolumn{1}{c|}{}                         & \multicolumn{1}{c|}{1e-1} & 3.02 / 99.04 & 2.77 / 99.04 & 2.87 / 99.02 & 14.11 / 95.47 & 96.00 / 55.75  \\ \cline{2-7} 
\multicolumn{1}{c|}{}                         & \multicolumn{1}{c|}{5e-1} & 2.90 / 98.98 & 2.89 / 99.04 & 2.73 / 99.06 & 67.48 / 90.50 & 94.64 / 56.57  \\ \cline{2-7} 
\multicolumn{1}{c|}{}                         & \multicolumn{1}{c|}{1}    & 2.81 / 98.96 & 2.89 / 99.04 & 2.82 / 99.04 & 90.97 / 70.65 & 87.12 / 56.72 \\ \cline{2-7} 
\multicolumn{1}{c|}{}                         & \multicolumn{1}{c|}{5}    & 2.73 / 98.98 & 2.90 / 99.03 & 14.04 / 95.64 & 93.01 / 45.50 & 88.61 / 69.65 \\ \midrule
\multicolumn{7}{c}{\cellcolor{greyC} $\gamma_\text{max}$=1}                                                                                                                                                        \\ \midrule
\multicolumn{2}{c}{\multirow{2}{*}{}} & \multicolumn{5}{|c}{$\rho$} \\ \cline{3-7} 
\multicolumn{2}{c|}{} & \multicolumn{1}{c|}{1e-2} & \multicolumn{1}{c|}{1e-1} & \multicolumn{1}{c|}{1} & \multicolumn{1}{c|}{10}  & 100\\ \hline
\multicolumn{1}{c|}{\multirow{8}{*}{$\beta$}} & \multicolumn{1}{c|}{1e-3} & 2.74 / 98.92 & \textbf{2.62} / 98.95 & 2.64 / 98.93 & 2.70 / 98.93 & 92.36 / 57.40  \\ \cline{2-7} 
\multicolumn{1}{c|}{}                         & \multicolumn{1}{c|}{5e-3} & 2.64 / 99.00 & 2.78 / 98.98 & 2.86 / 98.82 & 2.68 / 98.16 & 94.72 / 58.59  \\ \cline{2-7} 
\multicolumn{1}{c|}{}                         & \multicolumn{1}{c|}{1e-2} & 2.69 / 99.02 & 2.69 / 99.01 & 2.65 / 98.97 & 2.79 / 98.19 & 93.39 / 57.55  \\ \cline{2-7} 
\multicolumn{1}{c|}{}                         & \multicolumn{1}{c|}{5e-2} & 2.67 / 98.94 & 2.75 / 98.93 & 2.71 / 98.97 & 2.74 / 98.64 & 94.60 / 52.14  \\ \cline{2-7} 
\multicolumn{1}{c|}{}                         & \multicolumn{1}{c|}{1e-1} & 2.90 / 98.99 & 2.71 / \textbf{99.07} & 2.77 / 98.96 & 22.93 / 94.15 & 92.55 / 67.68  \\ \cline{2-7} 
\multicolumn{1}{c|}{}                         & \multicolumn{1}{c|}{5e-1} & 2.66 / 98.99 & 2.64 / 98.99 & 2.90 / 98.87 & 79.75 / 89.56 & 93.76 / 55.94  \\ \cline{2-7} 
\multicolumn{1}{c|}{}                         & \multicolumn{1}{c|}{1}    & 2.78 / 99.01 & 2.98 / 98.94 & 2.56 / 98.96 & 52.66 / 90.50 & 92.57 / 90.03 \\ \cline{2-7} 
\multicolumn{1}{c|}{}                         & \multicolumn{1}{c|}{5}    & 2.68 / 99.03 & 2.68 / 98.96 & 2.69 / 98.96 & 25.31 / 94.53 & 97.42 / 43.10  \\ \midrule
\multicolumn{7}{c}{\cellcolor{greyC} $\gamma_\text{max}$=0.1}                                                                                                                                                        \\\midrule
\multicolumn{2}{c}{\multirow{2}{*}{}} & \multicolumn{5}{|c}{$\rho$} \\ \cline{3-7} 
\multicolumn{2}{c|}{} & \multicolumn{1}{c|}{1e-2} & \multicolumn{1}{c|}{1e-1} & \multicolumn{1}{c|}{1} & \multicolumn{1}{c|}{10}  & 100 \\ \hline
\multicolumn{1}{c|}{\multirow{8}{*}{$\beta$}} & \multicolumn{1}{c|}{1e-3} & 2.75 / \textbf{98.59}                  & 53.32 / 94.12 & 82.19 / 92.03 & 2.52 / 98.41 & 95.24 / 51.41  \\ \cline{2-7} 
\multicolumn{1}{c|}{}                         & \multicolumn{1}{c|}{5e-3} & 2.83 / 98.55 & \textbf{2.46} / 98.55 & 91.80 / 59.84 & 90.23 / 70.99 & 95.65 / 69.73 \\ \cline{2-7} 
\multicolumn{1}{c|}{}                         & \multicolumn{1}{c|}{1e-2} & 2.47 / 98.48 & 2.56 / 98.55 & 87.64 / 88.24 & 97.56 / 71.35 & 88.87 / 77.29  \\ \cline{2-7} 
\multicolumn{1}{c|}{}                         & \multicolumn{1}{c|}{5e-2} & 91.10 / 91.70 & \textbf{2.46} / 98.53 & 87.24 / 90.28 & 90.04 / 81.11 & 96.49 / 44.84  \\ \cline{2-7} 
\multicolumn{1}{c|}{}                         & \multicolumn{1}{c|}{1e-1} & 87.87 / 91.66 & 3.00 / 95.54 & 86.08 / 71.34 & 89.70 / 72.59 & 73.47 / 79.13  \\ \cline{2-7} 
\multicolumn{1}{c|}{}                         & \multicolumn{1}{c|}{5e-1} & 90.51 / 84.74 & 92.42 / 92.06 & 91.78 / 90.73 & 89.48 / 81.07 & 94.27 / 59.21  \\ \cline{2-7} 
\multicolumn{1}{c|}{}                         & \multicolumn{1}{c|}{1}    & 93.82 / 89.51 & 92.29 / 90.44 & 96.46 / 83.90 & 79.04 / 86.78 & 94.06 / 50.42 \\ \cline{2-7} 
\multicolumn{1}{c|}{}                         & \multicolumn{1}{c|}{5}    & 80.22 / 91.03 & 87.43 / 89.21 & 91.98 / 59.04 & 91.55 / 63.98 &  94.18 / 61.55 \\
\bottomrule[1.5pt]
\end{tabular}}
\end{table}

\begin{table}[]
\caption{Detection Performance on CIFAR-$100$ dataset with different choices of $\beta$, $\rho$, and $\gamma_\text{max}$, where we report the FPR95 / AUROC for each individual trail setup.} \label{tab: gamma_cifar100}
\centering
\resizebox{.49\linewidth}{!}{
\begin{tabular}{ccccccc}
\toprule[1.5pt]
\multicolumn{7}{c}{\cellcolor{greyC} $\gamma_\text{max}$=50}                                                                                                                                                        \\ \midrule
\multicolumn{2}{c}{\multirow{2}{*}{}} & \multicolumn{5}{|c}{$\rho$} \\ \cline{3-7} 
\multicolumn{2}{c|}{} & \multicolumn{1}{c|}{1e-2} & \multicolumn{1}{c|}{1e-1} & \multicolumn{1}{c|}{1} & \multicolumn{1}{c|}{10} & 100\\ \hline
\multicolumn{1}{c|}{\multirow{8}{*}{$\beta$}} & \multicolumn{1}{c|}{1e-3} & 31.72 / 93.37 & 30.84 / 93.51 & 31.50 / 93.27 & 31.08 / 92.47 & 87.70 / 86.84  \\ \cline{2-7} 
\multicolumn{1}{c|}{}                         & \multicolumn{1}{c|}{5e-3} & 33.36 / 92.70 & 30.23 / 93.61 & 31.27 / 93.32 & 29.98 / 92.94 & 96.98 / 36.34 \\ \cline{2-7} 
\multicolumn{1}{c|}{}                         & \multicolumn{1}{c|}{1e-2} & 31.73 / 93.32 & 35.42 / 92.42 & 30.86 / 93.61 & 31.22 / 92.34 & 87.17 / 86.80  \\ \cline{2-7} 
\multicolumn{1}{c|}{}                         & \multicolumn{1}{c|}{5e-2} & 33.69 / 92.66 & 32.50 / 93.18 &31.29 / 93.40 & 30.92 / 92.68 & 87.23 / 86.57  \\ \cline{2-7} 
\multicolumn{1}{c|}{}                         & \multicolumn{1}{c|}{1e-1} & 32.21 / 93.18 & 32.42 / 93.31 & 33.77 / 92.98 & 29.48 / 93.04 & 87.14 / 87.27  \\ \cline{2-7} 
\multicolumn{1}{c|}{}                         & \multicolumn{1}{c|}{5e-1} & 33.44 / 92.93 & 36.19 / 92.71 & 33.55 / 92.85 & 34.83 / 91.70 & 86.91 / 87.22  \\ \cline{2-7} 
\multicolumn{1}{c|}{}                         & \multicolumn{1}{c|}{1}    & 33.01 / 93.16 & 33.80 / 92.64 & \textbf{28.99} / \textbf{93.87} & 36.22 / 91.41 & 92.34 / 52.72   \\ \cline{2-7} 
\multicolumn{1}{c|}{}                         & \multicolumn{1}{c|}{5}    & 34.62 / 92.55 & 32.90 / 93.15 & 36.00 / 92.14 & 95.59 / 89.09 & 93.69 / 96.87 \\  \midrule
\multicolumn{7}{c}{\cellcolor{greyC} $\gamma_\text{max}$=5}  \\ \midrule
\multicolumn{2}{c}{\multirow{2}{*}{}} & \multicolumn{5}{|c}{$\rho$} \\ \cline{3-7} 
\multicolumn{2}{c|}{} & \multicolumn{1}{c|}{1e-2} & \multicolumn{1}{c|}{1e-1} & \multicolumn{1}{c|}{1} & \multicolumn{1}{c|}{10} & 100 \\ \hline
\multicolumn{1}{c|}{\multirow{8}{*}{$\beta$}} & \multicolumn{1}{c|}{1e-3} & 33.58 / 92.84 & 34.61 / 92.35 & 32.90 / 92.91 & 30.55 / 92.66 & 95.80 / 54.59  \\ \cline{2-7} 
\multicolumn{1}{c|}{}                         & \multicolumn{1}{c|}{5e-3} & 33.80 / 92.87 & 34.98 / 92.61 & 31.87 / 93.39 & \textbf{27.36} / 93.18 & 89.37 / 51.52  \\ \cline{2-7} 
\multicolumn{1}{c|}{}                         & \multicolumn{1}{c|}{1e-2} & 36.96 / 92.48 & 34.87 / 93.01 & 30.87 / 93.01 & 30.57 / 92.82 & 85.67 / 88.23 \\ \cline{2-7} 
\multicolumn{1}{c|}{}                         & \multicolumn{1}{c|}{5e-2} & 36.00 / 92.49 & 32.10 / 93.24 & 31.31 / \textbf{93.55} & 30.70 / 92.95 & 85.80 / 86.38 \\ \cline{2-7} 
\multicolumn{1}{c|}{}                         & \multicolumn{1}{c|}{1e-1} & 33.64 / 92.75 & 31.82 / 93.09 & 32.55 / 92.97 & 32.42 / 92.22 & 91.16 / 84.87 \\ \cline{2-7} 
\multicolumn{1}{c|}{}                         & \multicolumn{1}{c|}{5e-1} & 34.03 / 92.69 & 33.03 / 93.09 & 32.99 / 93.19 & 81.64 / 89.24 & 89.67 / 87.08  \\ \cline{2-7} 
\multicolumn{1}{c|}{}                         & \multicolumn{1}{c|}{1}    & 34.85 / 92.61 & 32.73 / 93.31 & 29.53 / 93.74 & 63.16 / 90.50 & 87.71 / 87.06 \\ \cline{2-7} 
\multicolumn{1}{c|}{}                         & \multicolumn{1}{c|}{5}    & 34.59 / 92.69 & 33.14 / 93.16 & 72.10 / 90.79 & 40.65 / 91.30 & 89.45 / 86.70  \\ \midrule
\multicolumn{7}{c}{\cellcolor{greyC} $\gamma_\text{max}$=0.5}                                                                                                                                                        \\\midrule
\multicolumn{2}{c}{\multirow{2}{*}{}} & \multicolumn{5}{|c}{$\rho$} \\ \cline{3-7} 
\multicolumn{2}{c|}{} & \multicolumn{1}{c|}{1e-2} & \multicolumn{1}{c|}{1e-1} & \multicolumn{1}{c|}{1} & \multicolumn{1}{c|}{10} & 100\\ \hline
\multicolumn{1}{c|}{\multirow{8}{*}{$\beta$}} & \multicolumn{1}{c|}{1e-3} & 33.10 / 92.59 & 32.25 / 92.84 & 30.40 / 92.97 & 31.70 / 92.39 & 87.47 / 86.53  \\ \cline{2-7} 
\multicolumn{1}{c|}{}                         & \multicolumn{1}{c|}{5e-3} & 34.15 / 92.33 & 31.80 / 92.33 & 31.97 / 92.49 & 31.75 / 92.48 & 90.14 / 86.43 \\ \cline{2-7} 
\multicolumn{1}{c|}{}                         & \multicolumn{1}{c|}{1e-2} & 34.35 / 92.28 & 33.90 / 92.29 & 29.72 / \textbf{93.29} & \textbf{29.66} / 92.75 & 88.49 / 86.76  \\ \cline{2-7} 
\multicolumn{1}{c|}{}                         & \multicolumn{1}{c|}{5e-2} & 32.61 / 93.04 & 33.19 / 92.41 & 33.73 / 92.23 & 30.28 / 92.99 & 84.51 / 88.00  \\ \cline{2-7} 
\multicolumn{1}{c|}{}                         & \multicolumn{1}{c|}{1e-1} & 33.23 / 92.48 & 35.78 / 91.90 & 33.89 / 91.97 & 75.36 / 90.70 & 96.66 / 57.33   \\ \cline{2-7} 
\multicolumn{1}{c|}{}                         & \multicolumn{1}{c|}{5e-1} & 32.48 / 92.33 & 30.56 / 93.19 & 32.74 / 92.37 & 84.06 / 88.13 & 91.47 / 46.58  \\ \cline{2-7} 
\multicolumn{1}{c|}{}                         & \multicolumn{1}{c|}{1}    & 33.90 / 92.25 & 32.89 / 92.52 & 31.26 / 92.89 & 62.99 / 90.64 & 88.53 / 87.17  \\ \cline{2-7} 
\multicolumn{1}{c|}{}                         & \multicolumn{1}{c|}{5}    & 32.62 / 92.33 & 32.60 / 92.48 & 30.04 / 92.94 & 71.77 / 90.34 & 96.58 / 49.03  \\ 
\bottomrule[1.5pt]
\end{tabular}}
\resizebox{.49\linewidth}{!}{
\begin{tabular}{ccccccc}
\toprule[1.5pt]
\multicolumn{7}{c}{\cellcolor{greyC} $\gamma_\text{max}$=10}                                                                                                                                                        \\ \midrule
\multicolumn{2}{c}{\multirow{2}{*}{}} & \multicolumn{5}{|c}{$\rho$} \\ \cline{3-7} 
\multicolumn{2}{c|}{} & \multicolumn{1}{c|}{1e-2} & \multicolumn{1}{c|}{1e-1} & \multicolumn{1}{c|}{1} & \multicolumn{1}{c|}{10} & 100\\ \hline
\multicolumn{1}{c|}{\multirow{8}{*}{$\beta$}} & \multicolumn{1}{c|}{1e-3} & 33.07 / 92.92 & 33.93 / 92.85 & 35.09 / 92.44 & 30.39 / 92.59 & 94.25 / 48.00  \\ \cline{2-7} 
\multicolumn{1}{c|}{}                         & \multicolumn{1}{c|}{5e-3} & 34.03 / 92.81 & 33.64 / 92.88 & 30.43 / 93.66 & 33.75 / 92.00 & 96.58 / 48.51  \\ \cline{2-7} 
\multicolumn{1}{c|}{}                         & \multicolumn{1}{c|}{1e-2} & 35.80 / 92.46 & 33.40 / 93.16 & 34.52 / 92.90 & 32.02 / 91.76 & 88.11 / 86.65   \\ \cline{2-7} 
\multicolumn{1}{c|}{}                         & \multicolumn{1}{c|}{5e-2} & 33.38 / 93.19 & 32.95 / 93.29 & 29.74 / \textbf{93.63} & 29.13 / 92.71 & 95.18 / 50.19  \\ \cline{2-7} 
\multicolumn{1}{c|}{}                         & \multicolumn{1}{c|}{1e-1} & 31.77 / 93.33 & 31.94 / 93.31 & 35.34 / 92.78 & \textbf{27.80} / 93.02 & 95.48 / 56.81  \\ \cline{2-7} 
\multicolumn{1}{c|}{}                         & \multicolumn{1}{c|}{5e-1} & 34.12 / 92.71 & 35.26 / 92.86 & 31.71 / 93.23 & 79.79 / 89.56 & 89.26 / 86.60 \\ \cline{2-7} 
\multicolumn{1}{c|}{}                         & \multicolumn{1}{c|}{1}    & 32.90 / 93.05 & 34.13 / 92.86 & 31.64 / 93.05 & 47.37 / 91.31 & 96.10 / 46.88\\ \cline{2-7} 
\multicolumn{1}{c|}{}                         & \multicolumn{1}{c|}{5}    & 32.65 / 93.13 & 34.41 / 92.86 & 33.02 / 91.80 & 57.19 / 90.72 & 89.12 / 87.75 \\ \midrule
\multicolumn{7}{c}{\cellcolor{greyC} $\gamma_\text{max}$=1}                                                                                                                                                        \\ \midrule
\multicolumn{2}{c}{\multirow{2}{*}{}} & \multicolumn{5}{|c}{$\rho$} \\ \cline{3-7} 
\multicolumn{2}{c|}{} & \multicolumn{1}{c|}{1e-2} & \multicolumn{1}{c|}{1e-1} & \multicolumn{1}{c|}{1} & \multicolumn{1}{c|}{10}  & 100\\ \hline
\multicolumn{1}{c|}{\multirow{8}{*}{$\beta$}} & \multicolumn{1}{c|}{1e-3} & 37.24 / 91.87 & 30.42 / 93.44 & 33.45 / 92.88 & 30.17 / 92.77 & 95.60 / 57.62  \\ \cline{2-7} 
\multicolumn{1}{c|}{}                         & \multicolumn{1}{c|}{5e-3} & 32.75 / 92.88 & 32.24 / 93.05 & 32.13 / 92.79 & 31.37 / 92.68 & 95.96 / 48.31  \\ \cline{2-7} 
\multicolumn{1}{c|}{}                         & \multicolumn{1}{c|}{1e-2} & 36.66 / 91.81 & 30.40 / 93.45 & 29.47 / 93.54 & 31.14 / 92.37 & 88.33 / 86.39 \\ \cline{2-7} 
\multicolumn{1}{c|}{}                         & \multicolumn{1}{c|}{5e-2} & 31.00 / 93.39 & 30.88 / \textbf{93.46} & 30.33 / 92.97 & 31.43 / 92.50 & 87.83 / 88.46  \\ \cline{2-7} 
\multicolumn{1}{c|}{}                         & \multicolumn{1}{c|}{1e-1} & 31.18 / 93.42 & 33.03 / 93.08 & 31.98 / 93.17 & 52.46 / 90.95 & 95.97 / 53.86 \\ \cline{2-7} 
\multicolumn{1}{c|}{}                         & \multicolumn{1}{c|}{5e-1} & 35.14 / 92.79 & 29.55 / 93.34 & \textbf{29.19} / 93.34 & 82.19 / 89.22 & 89.30 / 85.40  \\ \cline{2-7} 
\multicolumn{1}{c|}{}                         & \multicolumn{1}{c|}{1}    & 36.93 / 92.33 & 34.03 / 93.00 & 35.45 / 91.74 & 77.26 / 90.46 & 97.33 / 48.51 \\ \cline{2-7} 
\multicolumn{1}{c|}{}                         & \multicolumn{1}{c|}{5}    & 31.00 / 93.37 & 30.48 / 93.37 & 31.17 / 93.21 & 83.14 / 89.83 & 97.43 / 50.74  \\ \midrule
\multicolumn{7}{c}{\cellcolor{greyC} $\gamma_\text{max}$=0.1}                                                                                                                                                        \\\midrule
\multicolumn{2}{c}{\multirow{2}{*}{}} & \multicolumn{5}{|c}{$\rho$} \\ \cline{3-7} 
\multicolumn{2}{c|}{} & \multicolumn{1}{c|}{1e-2} & \multicolumn{1}{c|}{1e-1} & \multicolumn{1}{c|}{1} & \multicolumn{1}{c|}{10}  & 100 \\ \hline
\multicolumn{1}{c|}{\multirow{8}{*}{$\beta$}} & \multicolumn{1}{c|}{1e-3} & 93.84 / 50.41 & 99.39 / 51.61 & 76.69 / 89.44 & 75.19 / 89.36 & 97.09 / 44.87  \\ \cline{2-7} 
\multicolumn{1}{c|}{}                         & \multicolumn{1}{c|}{5e-3} & 81.55 / 87.48 & 80.81 / 88.63 & 75.63 / 90.32 & 96.69 / 43.59 &  96.15 / 50.92 \\ \cline{2-7} 
\multicolumn{1}{c|}{}                         & \multicolumn{1}{c|}{1e-2} & 95.32 / 51.45 & 91.92 / 55.05 & 75.40 / 89.75 & 96.55 / 45.81 & 95.24 / 48.11 \\ \cline{2-7} 
\multicolumn{1}{c|}{}                         & \multicolumn{1}{c|}{5e-2} & \textbf{42.65} / 90.84 & 93.80 / 45.56 & 99.58 / 50.06 & 97.08 / 45.02 & 83.93 / 87.53  \\ \cline{2-7} 
\multicolumn{1}{c|}{}                         & \multicolumn{1}{c|}{1e-1} & 95.32 / 51.45 & 91.92 / 55.05 & 75.40 / 89.75 & 96.55 / 45.81 & 95.24 / 48.11  \\ \cline{2-7} 
\multicolumn{1}{c|}{}                         & \multicolumn{1}{c|}{5e-1} & 90.51 / 84.74 & 92.42 / \textbf{92.06} & 91.78 / 90.73 & 89.48 / 81.07 & 94.27 / 59.21 \\ \cline{2-7} 
\multicolumn{1}{c|}{}                         & \multicolumn{1}{c|}{1}    & 81.14 / 90.18 & 82.34 / 89.98 & 80.38 / 90.42 & 88.00 / 53.88 & 95.43 / 46.75 \\ \cline{2-7} 
\multicolumn{1}{c|}{}                         & \multicolumn{1}{c|}{5}    & 96.57 / 45.61 & 98.16 / 47.34 & 100.0 / 49.86 & 94.89 / 51.80 & 92.12 / 87.10 \\
\bottomrule[1.5pt]
\end{tabular}}
\end{table}

\begin{table}[t]
\centering
\parbox{.48\linewidth}{
\centering
    \caption{The hyper-parameter effects of $\texttt{num}\_\texttt{search}$ on the CIFAR benchmarks.} \label{tab: hyper_search}
    \scriptsize\resizebox{.98\linewidth}{!}{
    \begin{tabular}{c|ccccccccc}
    \toprule[1.5pt]
    & 0 & 1 & 2 & 5 & 10 & 20 & 50 & 100 & 200 \\
    \midrule 
    \multicolumn{10}{c}{\cellcolor{greyC} CIFAR-10} \\
    \midrule[0.6pt]
    FPR95 & 3.33 & 2.90 & \textbf{2.41} & 2.61 & 2.62 & 2.46 & 2.74 & 2.86 &  3.00 \\
    AUROC & 98.59 & \textbf{99.10} & 98.96 & 98.91 & 98.92 & 98.56 & 98.95 & 99.07 & 98.80 \\  
    \midrule[0.6pt]
    \multicolumn{10}{c}{\cellcolor{greyC} CIFAR-100} \\
    \midrule[0.6pt]
    FPR95 & 36.47 & 34.12 & 33.55 & 33.30 & \textbf{30.38} & 31.27 & 32.01 & 33.07 & 31.73  \\
    AUROC & 91.75 & 92.60 & 92.98 & 93.14 & \textbf{93.62} & 93.36 & 93.18 & 92.91 & 93.22  \\    
    \bottomrule[1.5pt]  
    \end{tabular}}}
\centering
\parbox{.48\linewidth}{
    \centering
    \caption{The hyper-parameter effects of $\texttt{ps}$ on the CIFAR benchmarks.} \label{tab: hyper_ps}
    \scriptsize
    \resizebox{.98\linewidth}{!}{\begin{tabular}{c|ccccccccc}
    \toprule[1.5pt]
     & $1e^{-2}$ & $5e^{-2}$ & $1e^{-1}$ & $5e^{-1}$ & 1 & 5 & 10 & 50 & 100 \\
    \midrule 
    \multicolumn{10}{c}{\cellcolor{greyC} CIFAR-10} \\
    \midrule[0.6pt]
    FPR95 & 2.97 & 2.76 & 2.80 & \textbf{2.49} & 2.57 & 2.92 & 3.01 & 3.04 & 2.92 \\
    AUROC & 99.00 & \textbf{99.02} & 98.95 & 98.94 & 98.82 & 98.90 & 98.97 & 98.81 & 98.30 \\    
    \midrule[0.6pt]
    \multicolumn{10}{c}{\cellcolor{greyC} CIFAR-100} \\
    \midrule[0.6pt]
    FPR95 & 35.74 & 35.75 & 32.64 & \textbf{29.00} & 31.03 & 33.63 & 32.93 & 37.61 & 95.07 \\
    AUROC & 92.82 & 92.45 & 93.14 & \textbf{93.95} & 93.18 & 92.74 & 92.74 & 93.09 & 91.17 \\  
    \bottomrule[1.5pt]  
    \end{tabular}}
}
\end{table}
\clearpage

\subsection{Aligning Training Epochs}

\begin{wraptable}{r}{0.45\textwidth}
\caption{Comparison between OE and DAL with 50 epochs training.  } \label{tab: 50eps}
\vspace{7pt}
\scriptsize
{
\begin{tabular}{c|cc|cc}
\toprule[1.5pt]
\multirow{2}{*}{} & \multicolumn{2}{c|}{CIFAR-10} & \multicolumn{2}{c}{CIFAR-100} \\ \cline{2-5}
                  & FPR95  $\downarrow$         & AUROC    $\uparrow$     & FPR95    $\downarrow$       & AUROC   $\uparrow$       \\
                  \midrule
OE                & 3.07          & 98.97        & 37.35         & 92.00         \\
DAL               & \textbf{2.68}          & \textbf{99.01}        & \textbf{29.68}         & \textbf{93.92}         \\ \bottomrule[1.5pt]
\end{tabular}
}
\end{wraptable}

In our experiments, we follow the suggested hyper-parameters for the baselines, running OE with 10 epochs on the CIFAR benchmarks. However, our DAL, due to distribution augmentation, is run for 50 epochs to fully fit the augmented distribution. To demonstrate that our improvement is not dominated by longer training time, we also list the results of OE with 50 epochs training, summarizing the results on the CIFAR benchmarks in Table~\ref{tab: 50eps}. As we can see, although OE can produce better results with 50 epochs of training, our DAL can still demonstrate its superiority in OOD detection. For example, on CIFAR-100, our DAL improves OE by $7.67$ w.r.t. FPR95 and $1.92$ w.r.t. AUROC.

\subsection{Other Norms}

\begin{wraptable}{r}{0.55\textwidth}
\caption{Using $\ell_1$ and $\ell_2$ norms.  } \label{tab: different norms}
\vspace{7pt}
\scriptsize
{
\begin{tabular}{c|cc|cc}
\toprule[1.5pt]
          & \multicolumn{2}{c|}{$\ell_1$ norm} & \multicolumn{2}{c}{$\ell_2$ norm} \\ \cline{2-5}
          & FPR95 $\downarrow$ & AUROC $\uparrow$ & FPR95 $\downarrow$ & AUROC  $\uparrow$ \\ \midrule[1pt]
CIFAR-10  & \textbf{2.68}  & \textbf{99.01} & 2.81  & 98.98 \\
CIFAR-100 & \textbf{29.68} & 93.92 & 30.20 & \textbf{93.95} \\ \bottomrule[1.5pt] 
\end{tabular}}
\end{wraptable}

We can also use the $l_2$ norm and the associated  Wasserstein-2 distance. Therefore, we conduct the related experiments on the CIFAR benchmarks in comparing between $l_1$ and $l_2$ norms, and the results are summarized in Table~\ref{tab: different norms}. As we can see, we do not observe an obvious difference between using $\ell_1$ and $\ell_2$ norms, so it is proper to use the $\ell_1$ norm and the Wasserstein-1 distance in our DAL by default. 

\subsection{Linear Probing}

\begin{wraptable}{r}{0.42\textwidth}
\caption{Comparison between fully fine-tuning and linear probing.  } \label{tab:linear probe}
\vspace{7pt}
\scriptsize
\centering
{
\begin{tabular}{c|c|c}
\toprule[1.5pt]
FPR95 $\downarrow$ & linear probe & fine tune \\ \midrule[1pt]
OE     & 50.09        & 43.14     \\
DAL    & \textbf{43.37}        & \textbf{29.68}      \\ \bottomrule[1.5pt] 
\end{tabular}
}
\end{wraptable}

In many applications, the costs of re-training and re-deployment can be prohibitively high, where we should assume a fixed feature extractor $\mathbf{e}$ and allow only the classifier $\mathbf{h}$ (i.e., the fully connected layer) to be tuned. DAL is also adaptable for such a restricted setting, with improved detection performance over the OE counterpart. Table~\ref{tab:linear probe} summarizes the results on CIFAR-100, comparing OE and DAL under the settings of full training (fine tuning) and training with only the classifier (linear probe). As we can see, for the linear probe setup, DAL can still improve the OE counterpart, while the performance gain is largely limited compared to that of the full training.

\subsection{False Negative Rate}

\begin{wraptable}{r}{0.45\textwidth}
\caption{Experiments measured by FNR95.} \label{tab:fnr}
\vspace{7pt}
\scriptsize
\centering
{
\begin{tabular}{c|c|c|c}
\toprule[1.5pt]
FNR95 $\downarrow$ & CIFAR-10 & CIFAR-100 & \makecell{CIFAR-10 vs. \\ CIFAR-100} \\ \midrule[1pt]
MSP    & 33.02        & 64.83    & 43.01 \\
OE     & 5.04        &  41.31    & 26.38 \\ 
DAL    & \textbf{3.89}        & \textbf{26.87}  & \textbf{22.81}     \\ \bottomrule[1.5pt] 
\end{tabular}
}
\end{wraptable}

We further consider the metric of false negative rate (FNR95) for ID data when the true positive rate of ID data is at 95\%. We summarize the results on the CIFAR benchmarks in Table~\ref{tab:fnr}, where we consider the common OOD detection setups as in Table~\ref{tab: full} and the challenging CIFAR-10 vs. CIFAR-100 setup as in Table~\ref{tab: cifar hard}. As we can see, the FNR decreases for all three considered cases, further demonstrating the effectiveness of our method.

\clearpage
\subsection{ImageNet Evaluations}
\label{app: imagenet results}
We also conduct experiments on the ImageNet benchmarks, demonstrating the effectiveness of our DAL when facing this very challenging OOD detection task.

\textbf{OOD Datasets.} We adopt a subset of ImageNet-21K-P dataset~\citep{ridnik2021imagenet} as the auxiliary OOD data, which is cleansed to avoid those classes that coincide with ID cases. Furthermore, iNaturalist~\citep{HornASCSSAPB18}, SUN~\citep{xu2015turkergaze}, Places$365$, and Textures are adopted as the real OOD datasets, where we eliminate those data whose labels coincide with ID cases.

\textbf{Hyper-parameter Selection.} The hyper-parameters are also tuned on the validation data. We adopt the random search that follows the following three steps. Step 1: we randomly select a hyper-parameter (e.g., $\beta$) and fix the values of all other hyper-parameters to be their current optimal values. Step 2: we choose the best $\beta$ from the candidate set. Step 3: do Steps 1-2 again. We repeat Steps 1 and 2 for 50 times in our experiments. For the backbone model, we use ResNet-50 with the pre-trained parameters published by the PyTorch official repository. 

\textbf{Hyper-parameters Setups.} Our DAL is run for $5$ epochs. The batch size is $64$ for both the ID and the OOD cases. We have the initial learning rate $1e^{-4}$, $\gamma_\text{max}=10$, $\beta=0.5$, $\rho=0.1$, and $\texttt{ps}=0.1$. Furthermore, we employ cosine decay~\citep{LoshchilovH17} for the learning rate.

\textbf{ImageNet evaluations.} Due to the large semantic space and complex image patterns, OOD detection on the ImageNet dataset is a challenging task~\citep{HuangL21}. However, similar to the CIFAR benchmarks, DAL can also reveal the best detection performance over all the considered baseline methods. Moreover, it is well-known that MSP scoring can easily fail on the ImageNet benchmark~\citep{hendrycks2022scaling}, so we also report the results after DAL training using ASH (DAL-ASH) and Free Energy (DAL-Energy), which can further improve the detection performance.

\begin{table*}[t]
\centering
\caption{Comparison between our method and advanced methods on ImageNet.  $\downarrow$ (or $\uparrow$) indicates smaller (or larger) values are preferred, and a bold font indicates the best results in the column. } \vspace{3pt}
\label{tab: imagenet full}
\resizebox{0.9\linewidth}{!}{
\begin{tabular}{c|cccccccccc}
\toprule[1.5pt]
\multirow{2}{*}{Method} & \multicolumn{2}{c}{Textures} & \multicolumn{2}{c}{Places365} & \multicolumn{2}{c}{iNaturalist} & \multicolumn{2}{c}{SUN} & \multicolumn{2}{c}{\textbf{Average}} \\
\cline{2-11}
& FPR95 $\downarrow$ & AUROC $\uparrow$ & FPR95 $\downarrow$ & AUROC $\uparrow$ & FPR95 $\downarrow$ & AUROC $\uparrow$ & FPR95 $\downarrow$ & AUROC $\uparrow$ & FPR95 $\downarrow$ & AUROC $\uparrow$ \\
\midrule[0.8pt]
\multicolumn{11}{c}{Using ID data only} \\ \hline
MSP         & 66.58 & 80.03 & 74.15 & 78.97 & 72.72 & 77.19 & 78.70 & 75.15 & 73.04 & 77.84 \\
Free Energy & 52.84 & 86.36 & 70.64 & 81.67 & 73.98 & 75.97 & 76.92 & 78.08 & 68.60 & 80.52 \\
ASH         & 15.93 & 96.00 & 63.08 & 82.43 & 52.05 & 83.67 & 71.68 & 77.71 & 50.68 & 85.35 \\
Mahalanobis & 40.52 & 91.41 & 97.10 & 53.11 & 96.15 & 53.62 & 96.95 & 52.74 & 82.68 & 62.72 \\
KNN         & 26.54 & 93.49 & 78.64 & 76.82 & 75.78 & 69.51 & 74.30 & 78.85 & 63.82 & 79.66 \\
VOS         & 94.83 & 57.69 & 98.72 & 38.50 & 87.75 & 65.65 & 70.20 & 83.62 & 87.87 & 61.36 \\
\midrule[0.8pt]
\multicolumn{11}{c}{Using ID data and auxiliary OOD data} \\ \hline
OE          & 57.34 & 82.97 &  7.92 & 98.04 & 73.87 & 76.94 & 52.60 & 77.31 & 52.60 & 83.81 \\
Energy-OE   & 42.46 & 88.27 &  1.88 & 99.49 & 73.81 & \textbf{78.34} & 69.45 & 79.54 & 46.90 & 86.41 \\
ATOM        & 60.20 & 90.60 &  7.07 & 98.25 & 74.30 & 77.00 & 55.87 & 75.80 & 49.36 & 85.41 \\
DOE         & 35.11 & 92.15 &  0.72 & 99.79 & 72.55 & 78.00 & 59.06 & 85.67 & 41.86 & 88.90\\
POEM        & 40.80 & 89.78 &  0.26 & 99.70 & 73.23 & 68.83 & 65.45 & 82.08 & 44.93 & 85.10\\
\hline
% \rowcolor[HTML]{EFEFEF}  
DAL         & 55.49 & 85.29 &  5.83 & 99.09 & 74.23 & 76.70 & 50.76 & 79.21 & 46.57 & 85.08 \\
DAL-ASH     & \textbf{14.10} & \textbf{97.00} &  \textbf{0.23} & \textbf{99.85} & \textbf{67.38} & 78.20 & \textbf{45.14} & \textbf{85.90} & \textbf{31.71} & \textbf{90.24} \\
DAL-Energy  & 33.83 & 90.44 &  0.47 & 99.82 & 74.37 & 67.68 & 49.12 & 80.28 & 39.45 & 84.55 \\
\bottomrule[1.5pt]   
\end{tabular}}
\end{table*}

\end{document}